\documentclass{article} % For LaTeX2e
\usepackage{iclr2026_conference,times}

\usepackage{url}            % simple URL typesetting
\usepackage[utf8]{inputenc} % allow utf-8 input
\usepackage[T1]{fontenc}    % use 8-bit T1 fonts
\usepackage{booktabs}       % professional-quality tables
\usepackage{amsfonts}       % blackboard math symbols
\usepackage{nicefrac}       % compact symbols for 1/2, etc.
\usepackage{microtype}      % microtypography
\usepackage[dvipsnames]{xcolor}         % colors

\usepackage{colortbl}

\usepackage{titletoc}
\usepackage{amsmath}
\usepackage{amssymb}
\usepackage{mathtools}
\usepackage{amsthm}
\usepackage{algorithm}
\usepackage{algpseudocode}
\usepackage{setspace}
\usepackage{multirow}
\usepackage{makecell}
\usepackage{pifont}
\usepackage{comment}
\theoremstyle{plain}
\newtheorem{theorem}{Theorem}
\newtheorem{lemma}{Lemma}

\usepackage{hyperref}       % hyperlinks
\usepackage[capitalise]{cleveref}

\usepackage{arydshln}

\usepackage{caption}
\usepackage[normalem]{ulem}
\usepackage{subcaption}

\usepackage{rotating}

% \usepackage[colorinlistoftodos, linecolor=red, backgroundcolor=yellow!20, bordercolor=red]{todonotes}
% \setuptodonotes{size=\footnotesize}
% \newcommand{\kl}[1]{\textcolor{orange}{Kangwook: #1}}

% \newcommand{\kg}[1]{\textcolor{orange}{Kevin: #1}}
% \newcommand{\ee}[1]{\textcolor{blue}{Ethan: #1}}

% \newcommand{\kg}[1]{\textcolor{orange}{#1}}
% \newcommand{\ee}[1]{\textcolor{blue}{#1}}

% \newcommand{\pend}[1]{\textcolor{blue}{#1}}
\newcommand{\pend}[1]{#1}

\newcommand{\revised}[1]{#1}

\newcommand{\xmark}{\ding{55}}%

\def\ninput{{n_\text{in}}}
\def\nout{{n_\text{out}}}
\def\noutput{{n_\text{out}}}
\def\nprewnd{{n_{\text{slash}}}}
\def\cmax{{C_{\text{max}}}}
\def\nglobal{{n_{\text{vert}}}}
\def\nlookahead{{n_{\text{lookahead}}}}
\def\ncompwnd{{n_{\text{window}}}}
\def\poolk{k}
\def\nneigh{{n_{\text{neighbor}}}}
\def\lskip{{l_{\text{skip}}}}

\def\llamaoneb{Llama-3.2-1B}
\def\llamaeightb{Llama-3.1-8B}
\def\vllm{vLLM}

\def\speckv{SpecKV}
\def\adaspeckv{Ada-SpecKV}
\def\speckvlong{Speculative KV Dropping}
\def\specpc{SpecPC}
\def\specpclong{Speculative Prompt Compression}
\def\framework{Draft-based Approximate Inference}
% \def\framework{Speculative Decoding-Assisted Approximate LLM Inference}

% \title{Draft-based Approximate Inference for Efficient Long-context LLMs}
\title{Draft-based Approximate Inference for LLMs}

\newcommand*\samethanks[1][\value{footnote}]{\footnotemark[#1]}

\author{%
  \textbf{Kevin Galim}$^{1}$\thanks{Equal contribution.} \quad
  \textbf{Ethan Ewer}$^{2}$\samethanks \quad
  \textbf{Wonjun Kang}$^{1,3}$ \quad
  \textbf{Minjae Lee}$^{1}$ \\
  \textbf{Hyung Il Koo}$^{1,4}$ \quad
  \textbf{Kangwook Lee}$^{2,5}$ \\[2ex]
  $^1$FuriosaAI \quad
  $^2$UW-Madison \quad
  $^3$Seoul National University \\
  $^4$Ajou University \quad
  $^5$KRAFTON \\[1ex]
  \texttt{\{kevin.galim, kangwj1995, minjae.lee, hikoo\}@furiosa.ai} \\
  \texttt{\{eewer, kangwook.lee\}@wisc.edu} \\[2ex]
  Code: \url{https://github.com/furiosa-ai/draft-based-approx-llm}
}

\iclrfinalcopy % Uncomment for camera-ready version, but NOT for submission.
\begin{document}

\maketitle

\begin{abstract}

Optimizing inference for long-context large language models (LLMs) is increasingly important due to the quadratic compute and linear memory cost of Transformers. Existing approximate inference methods, including key-value (KV) cache dropping, sparse attention, and prompt compression, typically rely on coarse predictions of token or KV pair importance. We unify and extend recent work by introducing a framework for approximate LLM inference that leverages small draft models to more accurately predict token and KV pair importance. We provide novel theoretical and empirical analyses justifying lookahead-based importance estimation techniques. Within this framework, we present: (i) \textbf{\speckv{}}, the first method to use lookahead with a small draft model to enable precise KV cache dropping; (ii) \textbf{\specpc{}}, which leverages draft model attention activations to identify and discard less important prompt tokens; \revised{and (iii) \textbf{SpecKV-PC}, a cascaded compression strategy combining both techniques}. Extensive experiments on long-context benchmarks demonstrate that our methods consistently achieve higher accuracy than existing baselines while retaining the same efficiency gains in memory usage, latency, and throughput.

\end{abstract}

\section{Introduction}

The demand for longer context lengths in large language models (LLMs)~\citep{gpt4,gemini25pro} continues to grow~\citep{liu2024lost}, driven by applications such as dialogue systems~\citep{gpt4,gemini25pro}, document summarization~\citep{liu2020learning}, and code completion~\citep{code1}. Modern models like GPT-4 \citep{gpt4} and Gemini 2.5 Pro \citep{gemini25pro} have pushed context windows to over a million tokens. However, scaling Transformers~\citep{transformer} to these lengths remains difficult due to significant computational and memory constraints. Attention computation scales quadratically with context length, increasing inference latency, while key-value (KV) cache memory grows linearly, straining GPU resources. For example, caching the KV states for 128K tokens in Llama-3.1-8B \citep{grattafiori2024llama} can consume over 16GB of memory, limiting the practical scalability of LLMs.

To address scalability challenges, recent work introduces approximate LLM inference techniques that reduce latency and memory usage at inference time. Techniques include sparse attention for prefilling~\citep{minference} and decoding~\citep{quest}, which speed up inference by having each query attend to only a subset of keys. Sparse prefilling shortens time to the first token, while sparse decoding boosts generation throughput. KV cache dropping~\citep{pyramidkv,snapkv,streamingllm,h2o} reduces memory and increases throughput by shrinking the cache after prefilling or during decoding. Prompt compression~\citep{choi2024r2c,llmlingua,liskavets2024cpc} further improves efficiency by removing less important tokens before inputting the prompt, reducing both attention and MLP computation, as well as decreasing KV cache size. 

Orthogonally, speculative decoding~\citep{medusa,specdec2,hu2025speculativedecodingbeyondindepth,specdecode} accelerates LLM inference by using a small draft model to propose a sequence of multiple tokens, which the target model verifies in parallel. This improves throughput without altering the output distribution and is particularly effective for autoregressive models, where sequential generation is a bottleneck. However, unlike approximate inference, speculative decoding does not lower the total memory or computation requirements and struggles with increasing context length.

\pend{In contrast, approximate LLM inference improves efficiency by reducing the amount of computation the model performs. This is often done by estimating the importance of each token or KV pair for future generation and discarding less important ones from attention or feedforward computations. Existing methods~\citep{pyramidkv,adakv,minference,snapkv} use attention activations from input tokens to predict which tokens or KV pairs future tokens will attend to, as future tokens are not yet available. However, input attention activations alone do not reliably identify the tokens or KV pairs most relevant for future token generation. Recent methods~\citep{specprefill,laq} address this by leveraging an approximate output to improve importance estimates. In this work, we unify and extend these techniques by introducing \framework{}, a lookahead-based framework that uses an inexpensive draft model to approximate future outputs with minimal overhead (\cref{fig:framework}). Our main contributions are as follows:}

\begin{enumerate}
    \item We present \framework{}, a framework using draft model lookahead for enhanced approximate inference.
    \item We present theoretical and empirical analyses justifying lookahead-based KV cache dropping and the use of draft model token importance to approximate target model token importance. 
    \item Within the \framework{} framework, we develop two concrete algorithms targeting three LLM inference optimizations: \speckvlong{} (\speckv{}) for KV cache dropping with sparse prefill, and \specpclong{} (\specpc{}) for prompt compression. Notably, \speckv{} is the first to use draft model lookahead for KV cache optimization. \revised{Additionally, we introduce SpecKV-PC, a cascaded pipeline integrating both algorithms to achieve superior accuracy, latency, and memory efficiency.}
    \item We perform comprehensive experiments on long-context benchmarks, demonstrating that our methods attain state-of-the-art accuracy under fixed KV cache or prompt size constraints. Our results consistently outperform prior baselines, underscoring the potential of draft models for fast and accurate approximate inference in large language models.
\end{enumerate}

% \begin{figure}
%      \centering
%     % \includegraphics[width=0.7\textwidth]{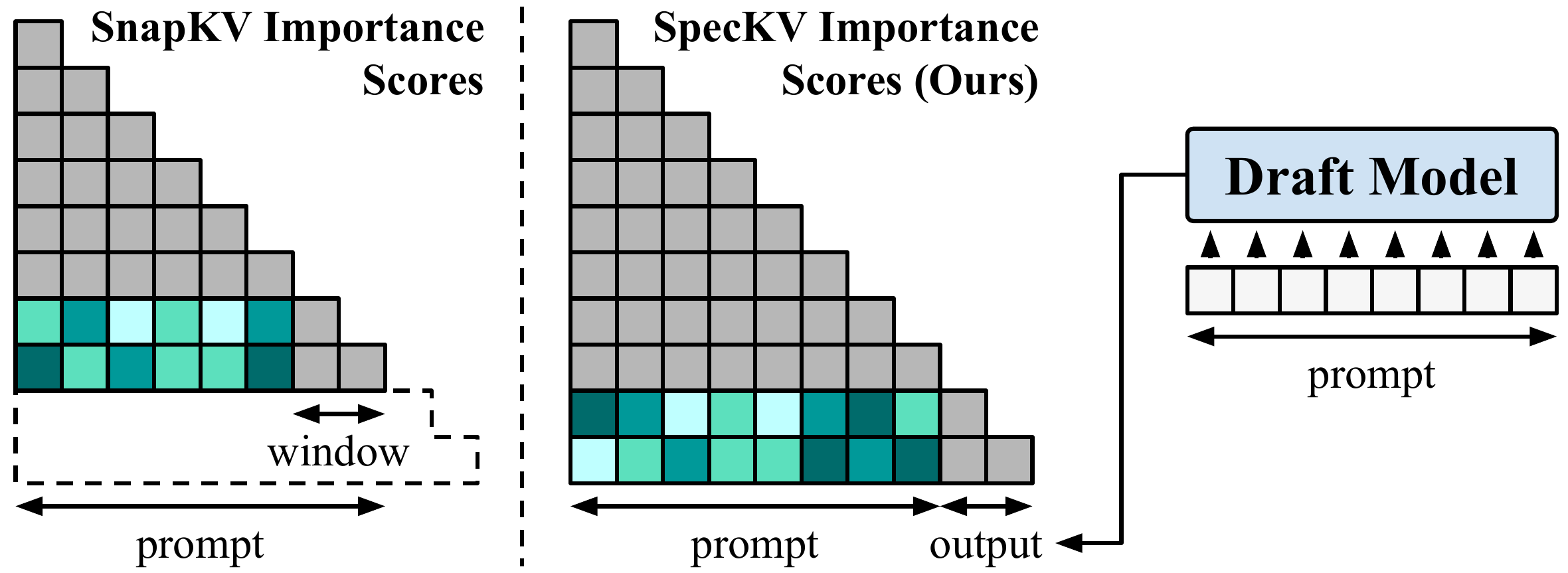}

%      \includegraphics[width=0.7\textwidth]{figures/framework_fig.pdf}
     
% \caption{Overview of our framework--\framework{}. \speckv{} uses draft output tokens for precise KV cache dropping and sparse attention during prefill, while \specpc{} exploits the draft model's attention activations for prompt compression.}
%     \label{fig:framework}
% \end{figure}

\begin{figure}
     \centering

     \begin{subfigure}[b]{0.55\textwidth}
         \centering
         \includegraphics[width=\textwidth]{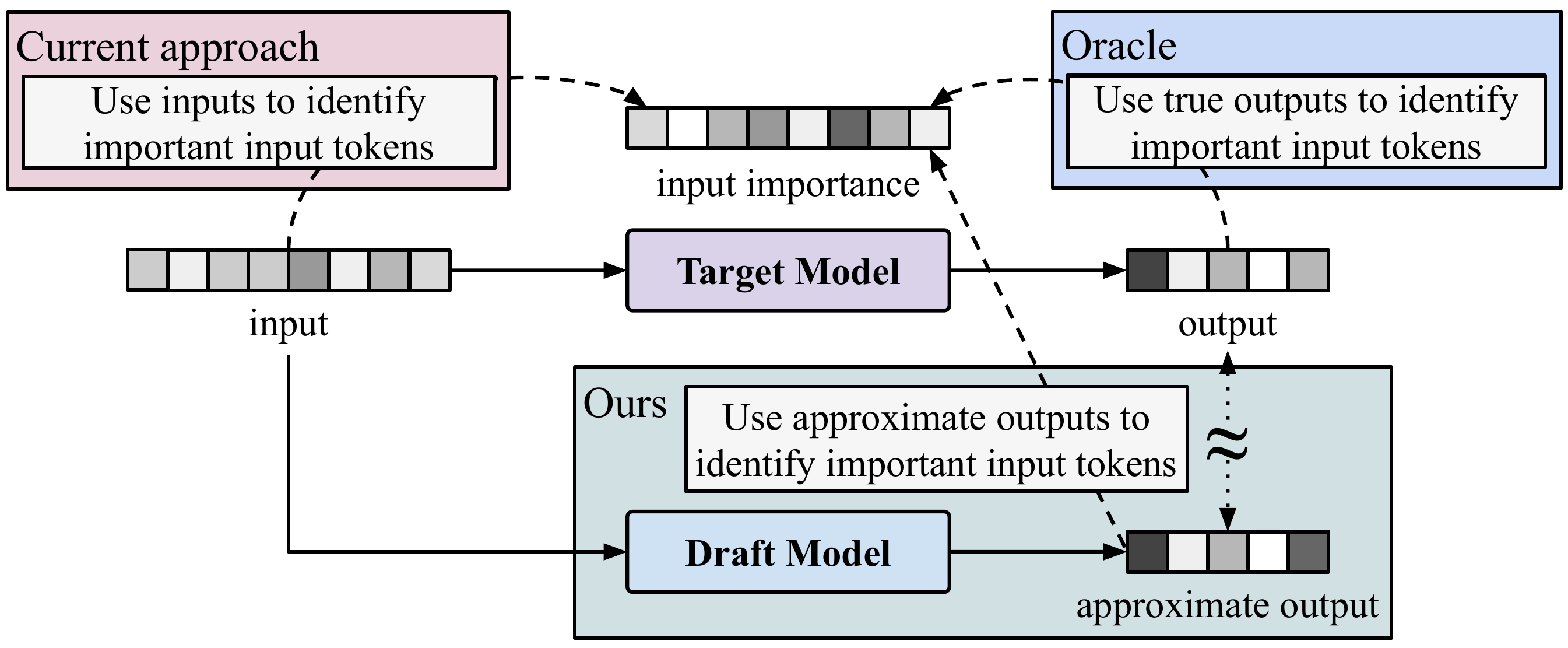}
         \caption{Proposed Framework.}\label{fig:framework}
     \end{subfigure}
     % \hspace{2em}
     \hfill
     \begin{subfigure}[b]{0.43\textwidth}
         \centering
         \includegraphics[width=\textwidth]{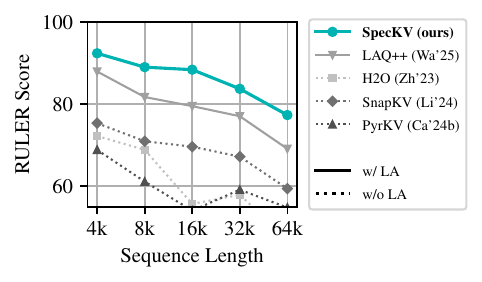}
         \caption{Impact of lookahead (LA) on accuracy.}
         % \label{fig:eps_corr_pcc:corr}
     \end{subfigure}
     % \hspace{2em}

% \caption{Overview of our \framework{} framework for input token importance estimation. Unlike existing methods that estimate token importance solely from input tokens, our approach leverages draft model predictions of future output tokens for more accurate importance estimation. This design better aligns with the theoretical oracle setting, where the output is known and the input tokens influencing the output can be precisely identified.}
\caption{\textbf{(a)} Overview of our \framework{} framework for input token importance estimation in comparison with current approaches and the oracle approach. Prior methods use input tokens to estimate input token importance. Our approach incorporates draft model predictions of future output tokens, yielding more accurate importance estimates. This better aligns with the hypothetical oracle setting, where the true output is known and influential tokens can be precisely identified. \textbf{(b)} On RULER with Llama-3-70B~\citep{grattafiori2024llama}, lookahead-based methods (LAQ++~\citep{laq}, SpecKV) significantly outperform non-lookahead approaches (H2O~\citep{h2o}, SnapKV~\citep{snapkv}, PyramidKV~\citep{pyramidkv}), with our proposed SpecKV achieving the best overall downstream score.}
\end{figure}

\section{Related Work}

\colorlet{mygold}{yellow!30}
\newcommand{\best}{\cellcolor{mygold}}

\begin{table}[h!]
\centering
\caption{Summary of prior work. Complexity reported without auxiliary/draft model. $\ninput$ and $\noutput$ denote the number of input and output tokens, respectively. $\cmax$ represents the maximum KV cache or prompt capacity. $s_\text{prefill}$ and $s_\text{decode}$ indicate the number of keys each query attends to during the prefill and decoding phases, respectively. \pend{$L$ is the number of attention layers. Since all time complexities are linear with respect to $L$, we only include $L$ for space complexity.}
Highlighted cells indicate improved complexity.}
\label{tab:related_summary}
\renewcommand{\arraystretch}{1.2}
\resizebox{\linewidth}{!}{
\begin{tabular}{llcccccc}
\toprule
\textbf{Type} & \textbf{Method} & \textbf{Sparse Attn} & \textbf{KV Dropping} & \textbf{Prefill Time} & \textbf{Decoding Time} & \textbf{Prefill Space} & \textbf{Decoding Space} \\
\midrule
Dense & Dense & \xmark & \xmark & $O(\ninput ^2)$ & $O(\noutput(\ninput + \noutput))$ & $O(L\ninput)$ & $O(L(\ninput + \noutput))$\\
\midrule
 \multirow{3}{*}{Sparse attention}
 % & Sliding Window & Prefill & Decode & $O(\ninput\cmax)$ & $O(\noutput\cmax)$ & $O(\cmax)$ \\
 & MInference, FlexPrefill & Prefill & \xmark & \best $O(\ninput s_\text{prefill})$ & $O(\noutput(\ninput + \noutput))$ & $O(L\ninput)$ & $O(L(\ninput + \noutput))$\\
 \cmidrule{2-8}
 & Quest, RetrievalAttention & Decode & \xmark & $O(\ninput ^2)$ & \best $O(\noutput s_\text{decode})$ & $O(L\ninput)$ & $O(L(\ninput + \noutput))$\\
\midrule
\multirow{5}{*}{KV dropping} 
 & StreamingLLM & Prefill & Decode & \best $O(\ninput\cmax)$ & \best $O(\noutput\cmax)$ & \best $O(\max(\ninput, L\cmax))$ & \best $O(L\cmax)$\\
 & H2O & \xmark & Decode & $O(\ninput ^2)$ & \best $O(\noutput\cmax)$  & \best $O(\max(\ninput, L\cmax))$ & \best $O(L\cmax)$\\
 & SnapKV, PyramidKV, AdaKV & \xmark & After prefill & $O(\ninput ^2)$ & \best $O(\noutput(\cmax + \noutput))$ & \best $O(\max(\ninput, L\cmax))$ & \best $O(L(\cmax + \noutput))$\\
 & LAQ++ & \xmark & After prefill & $O(\ninput ^2)$ & \best $O(\noutput(\cmax + \noutput))$ & $O(L\ninput)$ & \best $O(L(\cmax + \noutput))$\\
 % & PyramidKV~\cite{pyramidkv} & \xmark & After prefill & $O(\ninput ^2)$ & \best $O(\noutput(\cmax + \noutput))$ & \best $O(\cmax + \noutput)$\\
 & \textbf{\speckv{}} (Ours) & Prefill & After prefill & \best $O(\ninput s_\text{prefill})$ & \best $O(\noutput(\cmax + \noutput))$ & \best $O(\max(\ninput, L\cmax))$ & \best $O(L(\cmax + \noutput))$\\
\midrule
Prompt compression & \makecell[l]{LLMLingua-2, CPC,\\ R2C, SpecPrefill, \textbf{\specpc{}} (Ours)}  & -- & -- & \best $O(\cmax^2)$ & \best $O(\noutput(\cmax + \noutput))$ & \best $O(L\cmax)$ & \best $O(L(\cmax + \noutput))$\\
\bottomrule
\end{tabular}}
\end{table}

\paragraph{Sparse Attention}
One way to improve inference efficiency is through sparse attention with static patterns. For example, sliding window attention~\citep{sliding_window}, used in models like Mistral 7B \citep{mistral7b}, Gemma~3~\citep{gemma_2025}, GPT-3~\citep{gpt3}, and gpt-oss~\citep{gpt-oss}, restricts each query to attend only a fixed-size window of recent keys, reducing computation and KV cache size during decoding. StreamingLLM~\citep{streamingllm} improves on sliding window by using initial tokens, called attention sinks, along with the sliding window.
MInference~\citep{minference}, adopted by Qwen2.5-1M \citep{qwen2.5-1m}, further boosts prefill efficiency by searching offline for adaptive sparse attention patterns (A-shape, Vertical-Slash, and Block-Sparse) assigned per head. FlexPrefill~\citep{flexprefill} extends this idea by determining sparsity rates for each input prompt.
In contrast, Quest~\citep{quest} and \mbox{RetrievalAttention~\citep{retrievalattention}} target the decoding stage by only retrieving the most important KV pairs from the cache, reducing both memory bandwidth and computational demands during generation.

\paragraph{KV Cache Dropping}
KV dropping reduces computation and memory during decoding. Sliding window attention~\citep{sliding_window} and StreamingLLM~\citep{streamingllm} are examples of KV dropping methods (as well as sparse attention) as they permanently evict KV pairs from cache. H2O~\citep{h2o} improves on this by dynamically selecting attention sinks, termed heavy-hitters, using attention scores at each decoding step, while also maintaining a sliding window. SnapKV~\citep{snapkv} compresses the KV cache at the end of the prefill stage by dropping unimportant KV pairs. Subsequent work extends this idea by allocating KV cache budgets non-uniformly across layers (PyramidKV~\citep{pyramidkv}) and attention heads (AdaKV~\citep{adakv}, HeadKV~\citep{headkv}). However, these approaches drop tokens based only on current information, making them less robust to changes in token importance over time~\citep{sparse_frontier}. \pend{Recently, \cite{laq} proposed Lookahead Q-Cache (LAQ++), which addresses this by generating draft queries with a sparse approximation of the target model, using them to compute more accurate importance scores, though this comes at the cost of no reduction in peak memory usage.} % Additionally, KVzip~\citep{kvzip} uses lookahead, by repeating the context, for query-independent KV cache dropping.

% Although this approach provides strong estimates of token importance, it requires storing the entire input KV cache to avoid recomputing the prefill stage, leading to high peak memory usage. To overcome this limitation, we propose \speckv{}, which employs a lightweight draft model to generate a draft output, substantially reducing the cost of lookahead.} TODO: Do we need to mention LAQ limitation here, and in sec 4?
% \kl{https://arxiv.org/pdf/2412.08890 I think this was cool?} If we mention lexico, we might need to talk about kv cache quantization. This could take up a lot of space.

\paragraph{Prompt Compression}
Prompt compression removes tokens before reaching the model, reducing compute and memory usage during both prefill and decoding, unlike KV dropping, which speeds up only decoding. It also surpasses sparse attention by saving both attention and MLP computation. Prompt compression works seamlessly with all types of inference setups, such as APIs or inference engines like \vllm{}~\citep{vllm}, since it does not require any modifications to the model. However, KV dropping can achieve higher compression because it selects tokens per head, while prompt compression drops the same tokens across all layers and heads.

In a question-answer setup, prompt compression may be question-agnostic (compressing context without considering the question) or question-aware (factoring in the question). Selective context~\citep{li2023compressing} and LLMLingua~\citep{llmlingua} are training-free, question-agnostic approaches using a small LLM to keep only key tokens. LongLLMLingua~\citep{jiang2024longllmlingua} adapts this for longer contexts in a question-aware manner. LLMLingua-2~\citep{pan2024llmlingua2} trains a small model~\citep{conneau2020roberta} to score token importance without using the question. CPC~\citep{liskavets2024cpc} uses a trained encoder to compute sentence importance via cosine similarity with the question, while R2C~\citep{choi2024r2c} splits the prompt into chunks, processes each with the question using a fine-tuned encoder-decoder Transformer (FiD~\citep{izacard2021fid}), and ranks them via cross-attention. \pend{Similar to our proposed \specpc{}, SpecPrefill~\citep{specprefill} leverages attention scores from a smaller draft model to identify important tokens, using the draft model for lookahead to improve token importance estimates.}

% SpecPrefill~\citep{specprefill}, a method closely related to \specpc{}, leverages attention scores from a smaller draft model to identify important tokens. It also employs the draft model for lookahead, enabling more accurate predictions of token importance. In contrast to our proposed \specpc{}, SpecPrefill relies on static chunking, fixed positional embeddings, and does not incorporate a window of preceding tokens.

% Unlike CPC and R2C, SpecPrefill and our proposed \specpc{} method impose no constraints on prompt format and work seamlessly with any input or output accepted by the underlying draft model—including structured data, code, and visual modalities like image soft tokens. While prior methods are typically limited to sentence-level structures, single modalities, or text-only/image-only formats, SpecPrefill and \specpc{} offer a unified and extensible approach that efficiently handles arbitrary and mixed-modal inputs.

\paragraph{Speculative Decoding}
Speculative decoding \citep{medusa,specdec2,hu2025speculativedecodingbeyondindepth,specdecode} accelerates LLM inference by using a small draft model to propose multiple tokens that the target model verifies in parallel. This increases decoding throughput without changing the output distribution, addressing the bottleneck of autoregressive generation. Previous work further accelerates speculative decoding by enabling approximate inference in the draft model, using techniques such as sparse attention~\citep{magicdec}, KV cache dropping~\citep{triforce}, or KV cache quantization~\citep{quantspec}, all while preserving exact inference. In contrast, our approach leverages draft models to enable fast, approximate inference directly in the target model.

\cref{tab:related_summary} summarizes prior work, highlighting their prefill, decoding time, and memory complexities.

\section{Proposed Framework: \framework{}}

Previous LLM approximation methods~\citep{pyramidkv,adakv,minference,snapkv} estimate the  importance of current tokens on future generation by analyzing current attention patterns. While this can be effective, it provides only a rough estimate of each token’s importance. 
In contrast, if future tokens were available, we could make substantially better importance estimates by directly identifying which input tokens contribute to generating those output tokens. However, these future tokens are inaccessible before generation.

Recent work has explored using approximate future information to improve token importance estimation. LAQ++~\citep{laq} extends SnapKV~\citep{snapkv} by generating draft queries and then using them to compute more accurate importance scores. SpecPrefill~\citep{specprefill} generates lookahead tokens with a small draft model and relies on the draft model’s attention to those tokens to estimate input token importance. We bring these ideas together under a unified framework, \framework{} that leverages approximate future information to improve token importance estimation. We further extend this framework with two new algorithms: \speckv{} for KV cache dropping and sparse prefilling (\cref{sec:speckv}) and \specpc{} for prompt compression (\cref{sec:specpc}). \revised{Finally, we introduce cascaded compression with SpecKV-PC, combining both approaches into a single pipeline for superior accuracy and efficiency (\cref{sec:speckv_pc}).}

While our methods use draft models, a technique also common in speculative decoding, our objective is fundamentally different. Speculative decoding improves hardware utilization by having a draft model propose tokens that the target model verifies, accelerating generation without changing the output distribution. However, it does not reduce total computation or memory usage. In contrast, our framework reduces computation and memory costs by approximating the target model.

\subsection{Justification for Lookahead-based KV Cache Dropping}

KV cache dropping requires estimating the importance of each input KV pair. We define importance as the average attention activation from output queries to each input key. Specifically, the vector of importance scores and its approximation are given by

\begin{equation}\label{eqn:importance}
s^T = \tfrac{1}{\noutput} \sum_{i=1}^\noutput \operatorname{Softmax}\left( \tfrac{x_i^{(o)T} W_q W_k^T X^T}{\sqrt d} \right)
\quad \text{and} \quad
\hat{s}^T = \tfrac{1}{\noutput}\sum_{i=1}^\noutput\operatorname{Softmax}\left( \tfrac{\hat{x}_i^{(o)T} W_q W_k^T X^T}{\sqrt d}\right),
\end{equation}

where $X = [x_1, \dots, x_\ninput]^T \in \mathbb{R}^{\ninput \times d}$ is the matrix of input embeddings, $x_i^{(o)} \in \mathbb{R}^d$ is the input embedding from the $i$th output token, and $\hat{x}_i^{(o)} \in \mathbb{R}^d$ is its approximation. $s_i$ and $\hat s_i$ denote the importance of the $i$th KV pair.

To understand when lookahead-based KV cache dropping algorithms provide reliable importance estimates, we analyze how error in approximate input embeddings impacts error in importance score estimates for a single attention head.

\begin{theorem}\label{theorem:speckv}
If  $\|x_i^{(o)} - \hat x_i^{(o)}\|_2 \le \epsilon$ for all $i$ and $\|x_j\|_2 \le \sqrt d$ for all $j$, then $\|s - \hat s\|_2 \le \epsilon \|W_q W_k^T\|_2$.
\end{theorem}

This result shows that for a single attention layer, the worst-case error in the approximate importance scores is proportional to the worst-case error in the approximate input embeddings, implying that lookahead-based KV cache dropping algorithms provide reliable importance estimates as long as the draft model remains reasonably accurate (see \cref{proof:speckv} for proof).

To evaluate the quality of different draft outputs, we compute $\epsilon$ for two algorithms (LAQ++ and SpecKV). While \cref{theorem:speckv} assumes a strict token-wise alignment (where each draft token $\hat{x}_i^{(o)}$ corresponds to a target token $x_i^{(o)}$), in practice, the draft and target models often generate sequences of different lengths ($\hat \noutput \neq \noutput$).
To address this mismatch, we relax the strict point-wise condition and empirically evaluate draft quality using the global distance between the centroids of the output sequences.
We compute $\epsilon$ as:
\begin{equation}\label{eqn:eps}
\epsilon = \Bigg\|\frac{1}{\noutput}\sum_{i=1}^{\noutput}x_i^{(o)} - \frac{1}{\hat \noutput}\sum_{i=1}^{\hat \noutput}\hat x_i^{(o)}\Bigg\|_2.
\end{equation}
This metric serves as a practical proxy for the distributional mismatch between the target and draft outputs.
We report $\epsilon$ averaged across all attention layers. As shown in \cref{fig:eps_corr_pcc:eps}, increasing draft quality (decreasing $\epsilon$) leads to higher downstream accuracy. Furthermore, \cref{app:sec:speckv_corr} demonstrates how SpecKV's lookahead mechanism improves importance score correlation and accuracy, particularly for long outputs.

\begin{figure} 
    \centering
     \begin{subfigure}[b]{0.32\textwidth}
         \centering
         \includegraphics[width=\textwidth]{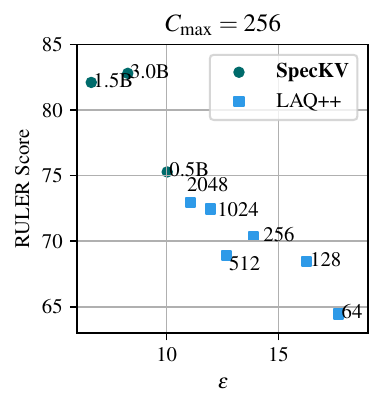}
         \caption{Score vs. $\epsilon$ (32B Target)}
         \label{fig:eps_corr_pcc:eps}
     \end{subfigure}
     % \hspace{2em}
     \hfill
     \begin{subfigure}[b]{0.32\textwidth}
         \centering
         \includegraphics[width=\textwidth]{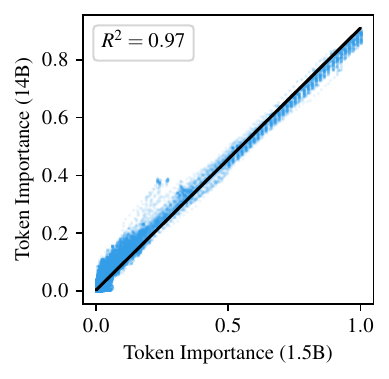}
         \caption{Importance Corr. (1.5B vs. 14B)}
         \label{fig:eps_corr_pcc:corr}
     \end{subfigure}
     % \hspace{2em}
     \hfill
     \begin{subfigure}[b]{0.32\textwidth}
         \centering
         \includegraphics[width=\textwidth]{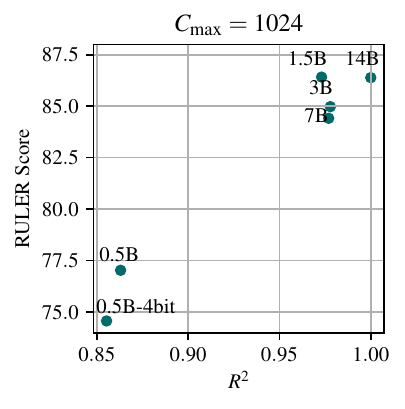}
         \caption{Score vs. $R^2$ (14B Target)}
         \label{fig:eps_corr_pcc:pcc}
     \end{subfigure}
    \caption{
    Experimental validation on RULER-32K tasks (5 samples each) using Qwen2.5 models. \textbf{(a)} Lower error $\epsilon$ (\cref{eqn:eps}) yields higher downstream scores. Increasing the draft model size (SpecKV) or initial cache size (LAQ++)\protect\footnotemark{} reduces $\epsilon$, with SpecKV outperforming LAQ++. \textbf{(b)} Importance scores (as used in \specpc{}) of the draft and target models are highly correlated. \textbf{(c)} For \specpc{}, a larger draft model improves both the token importance correlation ($R^2$) and the final task performance.}
    % Experimental tests to verify theory using Qwen2.5 on 5 samples of each RULER task. $\cmax=256$. a) shows that a lower $\epsilon$ (\cref{eqn:eps}) leads to improved downstream score. A larger draft model size for SpecKV or increasing the lookahead KV size for decode  for LAQ++ leads to lower $\epsilon$ as expected. Our SpecKV thereby outperforms LAQ++. b) Shows that for importance scores (\cref{eqn:importance}) for draft and target are highly correlated. c) shows that by increasing draft model size for SpecKV improves correlation ($R^2$) while improving downstream performance.
    \label{fig:eps_corr_pcc}
\end{figure}
\footnotetext{For LAQ++, initial cache size refers to the value of $\cmax$ used during the sparse draft generation.}

\subsection{Justification for Draft-based Prompt Compression}
\label{sec:pc-justification} 

Most KV cache dropping algorithms estimate token importance using attention scores from the target model itself. While effective, this approach is too computationally expensive for tasks like prompt compression, where we need to estimate token importance without forwarding the entire input through the target model. A more efficient alternative is to use a smaller draft model for this estimation. To justify this approach, we study how the similarity between the draft and target models’ outputs correlates with the similarity of their attention activations in a single attention layer.

% Most KV cache dropping algorithms estimate the importance of input KV pairs using attention scores from the target model. This works well for KV dropping since the full input is already forwarded through the target model, but it is too costly for settings like prompt compression. In those cases, it is preferable to estimate token importance without forwarding the entire input through the target model. An alternative is to use a smaller draft model. To justify this approach, we study how the similarity between the draft and target models’ outputs affects the similarity of attention activations in a single attention layer.

% Most KV cache dropping algorithms estimate the importance of input KV pairs using attention scores from the target model. In contrast, draft-based prompt compression methods approximate token importance with the draft model’s attention scores. These methods also become more effective when used with lookahead. To justify using the draft model's attention scores to estimate token importance in the target model, we study how the similarity between the draft and target models’ outputs affects the similarity of attention activations in a single attention layer.
% \kl{Is this paragraph in the right place? kinda seems out of place}

The target model attention layer uses weights $W_q$, $W_k$, and $W_v$, and the draft model attention layer uses $\hat{W}_q$, $\hat{W}_k$, and $\hat{W}_v$. Let the input prompt be $X = [x_1, \ldots, x_n]^T \in \mathbb{R}^{n \times d}$. The outputs of the target attention layer and its approximation are

\begin{equation}
Y = \operatorname{Softmax}\!\left(\tfrac{X W_q W_k^T X^T}{\sqrt{d}}\right) X W_v = A X W_v,
\quad
\hat{Y} = \operatorname{Softmax}\!\left(\tfrac{X \hat{W}_q \hat{W}_k^T X^T}{\sqrt{d}}\right) X \hat{W}_v = \hat{A} X \hat{W}_v,
\end{equation}

where $X = [x_1, \ldots, x_n]^T \in \mathbb{R}^{n \times d}$,
$A = [a_1, \ldots, a_n]^T$ is the attention matrix, and $\hat{A} = [\hat{a}_1, \ldots, \hat{a}_n]^T$ is the approximate attention matrix.

If the scaled inputs satisfy the Restricted Isometry Property (RIP)\footnote{The input embedding matrix may satisfy the RIP if its entries are approximately uniformly or normally distributed. RIP can also hold with positional embeddings constructed from a Fourier basis.}~\citep{rip}, a condition widely studied in compressed sensing to ensure the stable recovery of sparse signals, we can establish the following bound:
\begin{theorem}\label{theorem:specpc_rip2}
    If there exists a constant $c$ such that $cX^T$ satisfies the Restricted Isometry Property with parameters $(2k, \delta)$, where $\delta$ is the restricted isometry constant and  $k$ is the approximate sparsity of $a_i$ and $\hat a_i$, and the output error satisfies $\|y_i - \hat y_i\|_2 \le \epsilon\|X\|_{\infty, 2}$, then the attention error satisfies $\|a_i - \hat a_i\|_2 \le \tfrac{2c\epsilon\|X\|_{\infty, 2}}{\sigma_{\min}(W_v)(1-\delta)}$.\footnote{$\|X\|_{\infty, 2}$ denotes the maximum $\ell_2$ norm of $X$’s rows; $\sigma_{\min}(W_v)$ is the smallest singular value of $W_v$.}
\end{theorem}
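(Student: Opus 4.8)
The plan is to recognize \cref{theorem:specpc_rip2} as a perturbed sparse-recovery statement: the per-token attention vector $a_i$ plays the role of an (approximately) $k$-sparse signal, $X^T$ is the measurement matrix, and the attention output $y_i$ is a noisy linear measurement of $a_i$. The Restricted Isometry Property then lets us pass from ``the measurements of $a_i$ and $\hat a_i$ are close'' to ``$a_i$ and $\hat a_i$ are close'', which is exactly the claimed bound.

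First I would linearize the attention layer to expose this measurement structure. Since $Y = A X W_v$, reading off the $i$-th row gives $y_i = W_v^T X^T a_i$, and the draft layer gives the analogue $\hat y_i = W_v^T X^T \hat a_i$ (this is the shared-value-projection reading implicit in the statement, since only $\sigma_{\min}(W_v)$ appears; for a genuinely distinct $\hat W_v$ one merely picks up an extra $W_v$-versus-$\hat W_v$ conditioning factor). Because $\sigma_{\min}(W_v) > 0$, the value map is invertible, so $X^T a_i = (W_v^T)^{-1} y_i$ and likewise for the draft, whence
\begin{equation}
\|X^T(a_i - \hat a_i)\|_2 = \|(W_v^T)^{-1}(y_i - \hat y_i)\|_2 \le \frac{\|y_i - \hat y_i\|_2}{\sigma_{\min}(W_v)} \le \frac{\epsilon\,\|X\|_{\infty,2}}{\sigma_{\min}(W_v)}.
\end{equation}
This turns the output-error hypothesis into a bound on the measurement error of the vector $v := a_i - \hat a_i$.

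Next I would invert the measurement map using RIP. The vector $v$ is a difference of two approximately $k$-sparse vectors and hence approximately $2k$-sparse; idealizing it momentarily as exactly $2k$-sparse, the lower RIP inequality for $cX^T$ with parameters $2k,\delta$ gives $(1-\delta)\|v\|_2^2 \le \|cX^T v\|_2^2$, so
\begin{equation}
\|a_i - \hat a_i\|_2 \le \frac{c\,\|X^T(a_i-\hat a_i)\|_2}{\sqrt{1-\delta}} \le \frac{c\,\epsilon\,\|X\|_{\infty,2}}{\sqrt{1-\delta}\;\sigma_{\min}(W_v)} \le \frac{2c\,\epsilon\,\|X\|_{\infty,2}}{(1-\delta)\,\sigma_{\min}(W_v)},
\end{equation}
where the last step uses $1/\sqrt{1-\delta} \le 2/(1-\delta)$ for $\delta \in [0,1)$, recovering the stated constant. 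To make this rigorous rather than idealized, I would replace the exact-sparsity step by the standard compressed-sensing splitting: let $S$ index the $2k$ largest entries of $v$, apply RIP to $v_S$, bound $\|cX^T v_S\|_2 \le \|cX^T v\|_2 + \|cX^T v_{S^c}\|_2$, and control the tail $\|cX^T v_{S^c}\|_2$ by $\|v_{S^c}\|_1/\sqrt{2k}$ via a block-decomposition estimate, which is small precisely because $a_i,\hat a_i$ are approximately $k$-sparse; the generous factor $2$ then absorbs both this tail and the $\sqrt{1-\delta}$-versus-$(1-\delta)$ slack.

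The main obstacle is exactly this last point: RIP is an exact-sparsity tool, whereas softmax rows are never exactly sparse, so the substantive work is to pin down what ``approximate sparsity $k$ of $a_i$ and $\hat a_i$'' means (e.g.\ a bound on the $\ell_1$ mass outside the top-$k$ coordinates of each) and to push the tail term through the inequality chain without inflating the constant. A secondary, more cosmetic issue is the value-projection mismatch between draft and target; the cleanest fix is to present the bound in the shared-$W_v$ form as the theorem does, noting that a distinct $\hat W_v$ only multiplies the bound by a factor depending on $\sigma_{\max}(W_v)/\sigma_{\min}(\hat W_v)$, consistent with the Lipschitz-type variant.
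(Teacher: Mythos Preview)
Your compressed-sensing skeleton---write $y_i = W_v^T X^T a_i$, peel off $W_v$ via $\sigma_{\min}(W_v)$, then invert $X^T$ on the $2k$-sparse difference via RIP---is exactly the paper's route, and your handling of approximate sparsity (block decomposition of the tail) is actually more careful than the paper, which simply idealizes $a_i,\hat a_i$ as exactly $k$-sparse. Where you and the paper diverge is in the treatment of $\hat W_v \neq W_v$, and correspondingly in where the constant $2$ comes from. You take the shared-$W_v$ reading and manufacture the $2$ from the slack $1/\sqrt{1-\delta} \le 2/(1-\delta)$. The paper instead keeps $\hat W_v$ distinct and writes
\[
y_i - \hat y_i \;=\; \Delta a_i^T X W_v \;+\; \hat a_i^T X\,\Delta W_v,
\]
then bounds $\|\Delta W_v\|_2 \le \epsilon$ by specializing the output-error hypothesis to the single-token input $n_{\mathrm{in}}=1$ (where $A=\hat A=[1]$, so $y_1-\hat y_1 = x_1^T\Delta W_v$). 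Combined with $\|\hat a_i^T X\|_2 \le \|X\|_{\infty,2}$ (convex combination of rows), this gives $\|\Delta a_i^T X W_v\|_2 \le 2\epsilon\|X\|_{\infty,2}$, and the $2$ in the final bound is precisely this triangle-inequality split---not RIP slack. So your claim that a genuinely distinct $\hat W_v$ ``only multiplies the bound by a factor depending on $\sigma_{\max}(W_v)/\sigma_{\min}(\hat W_v)$'' understates the paper's argument: by reading the hypothesis as holding for all inputs (in particular $n_{\mathrm{in}}=1$), the paper avoids any such conditioning factor and still lands on exactly $2c\epsilon\|X\|_{\infty,2}/(\sigma_{\min}(W_v)(1-\delta))$. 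Your version is a clean proof of the shared-$W_v$ special case; the paper's version is the general-$\hat W_v$ statement under a slightly stronger (uniform-in-$X$) hypothesis.
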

This result offers a surprising and elegant connection: it reveals that mathematical tools developed for compressed sensing can also bound the error in attention approximations. Specifically, it shows that the worst-case error in the approximate attention activations is proportional to the worst-case error in the approximate outputs, with the constant depending on the conditioning of the weight matrices and the maximum input embedding norm. This implies that if the draft model provides a reasonable approximation of the output, it also gives a reasonable approximation of the attention activations (see \cref{proof:rip} for proof). Furthermore, even if the scaled inputs do not satisfy the RIP, we can still bound the attention approximation error by applying \Cref{theorem:specpc} (see \Cref{proof:specpc} for proof).

In addition to the theoretical analysis, we examine the correlation between the importance scores (as computed by \specpc{}) for Qwen2.5-Instruct (1.5B as draft, 14B as target). In \cref{fig:eps_corr_pcc:corr}, we plot the draft importance scores against the corresponding target importance scores. The results reveal a strong correlation, supporting the use of draft attention activations to approximate target token importance. Furthermore, this correlation strengthens as the draft model size increases (\cref{fig:eps_corr_pcc:pcc}).

% \section{\speckv{} \todo[fancyline]{title is too short}}

\section{\framework{} Methods}
\subsection{\speckv{}: Robust Importance Estimation for KV Cache Dropping}
\label{sec:speckv}

% \ee{Existing sparse attention and KV cache dropping methods~\citep{minference,snapkv} estimate token importance by analyzing attention activations from the last $\ncompwnd$ queries to all keys. While this offers a rough approximation of future attention patterns, it can be inaccurate when the set of important KV pairs shifts over the course of generation. We argue that using attention activations from draft output queries to all keys yields a more robust and accurate estimate of KV pair importance. Lookahead Q-Cache~\citep{laq} addresses this by generating draft queries with a sparse approximation of the target model, using them to compute more accurate importance scores. Although this approach provides strong estimates of token importance, it requires storing the entire input KV cache to avoid recomputing the prefill stage after lookahead, leading to no reduction in peak memory usage. To overcome this limitation, we propose \speckv{}, which employs a lightweight draft model to generate a draft output, substantially reducing the cost of lookahead.}

\pend{Existing sparse attention and KV cache dropping methods, such as SnapKV, estimate token importance by analyzing recent attention activations. This approach can be inaccurate when the set of important KV pairs shifts during generation, as past patterns do not always predict future ones. We argue that a more robust estimate can be derived from the attention activations of draft queries for future tokens.}

\pend{LAQ++ attempts this by generating draft queries from the target model using an initially compressed cache. These queries are then used to compute more accurate importance scores for a second, more informed compression pass. However, this two-pass method provides no reduction in peak memory because it must store the entire original KV cache of the target model to avoid recomputing the expensive prefill stage.}

\pend{To overcome this limitation, we propose SpecKV. Our method employs a lightweight draft model to generate the draft output, which substantially reduces the cost of the lookahead step. This enables an accurate compression of the KV cache without sacrificing peak memory reduction.
}

\begin{figure}
     \centering
    \includegraphics[width=0.7\textwidth]{figures/snapkv_and_speckv.pdf}
    % \includegraphics[width=0.6\textwidth]{figures/snapkv_and_speckv_revised.pdf}

     % \begin{subfigure}[b]{0.245\textwidth}
     %     \centering
     %     \includegraphics[height=1\textwidth]{figures/snapkv.pdf}
     %     \caption{SnapKV}
     % \end{subfigure}
     % %\hfill
     % \begin{subfigure}[b]{0.49\textwidth}
     %     \centering
     %     \includegraphics[height=0.5\textwidth]{figures/speckv.pdf}
     %     \caption{SpecKV}
     % \end{subfigure}    
     
    % \caption{Overview of \speckv{}: Unlike SnapKV, which leverages the last prompt tokens, \speckv{} uses a lightweight draft model to generate lookahead tokens for more accurate KV importance estimation.}
    \caption{Overview of \speckv{}: Instead of using only the last prompt tokens like SnapKV, \speckv{} employs a lightweight draft model to generate lookahead tokens, providing richer context for more accurate KV importance estimation. Tokens in window are always retained.}
    \label{fig:speckv}
\end{figure}
% \begin{figure}
%      \centering
%      \includegraphics[width=0.8\textwidth]{figures/teaser/SpecKV-scores.pdf}
%     % \caption{Comparison of SnapKV (top) and \speckv~(bottom) importance scores. The important tokens are the input tokens that contain the needle in an example out of RULER~\cite{ruler} S-NIAH 3.}
%     \caption{Importance scores estimated by SnapKV (top) and \speckv~(ours, bottom) on a sample from RULER S-NIAH-3~\citep{ruler}, averaged across all heads. SpecKV exhibits stronger activations for important tokens relevant to the needle retrieval task. Lighter colors indicate higher attention activations and greater importance scores.}
%     \label{fig:speckv-scores}
% \end{figure}
% \input{src/speckv_algorithm}

\speckv{} (\cref{alg:speckv}) begins by generating a draft output of length $\nlookahead$ using a small draft model, which acts as a proxy for the target model’s future outputs. During prefilling, both the input tokens and the draft tokens are passed through the target model. For each attention head, we compute token importance scores by measuring the cross-attention between the queries from the last $\ncompwnd$ input tokens and the draft output tokens to the remaining input keys (\cref{fig:speckv}). We apply local pooling with kernel size $k$ to the attention scores to maintain continuity. These scores guide two optimizations: sparse prefilling and KV cache dropping. For sparse prefilling, we use a variation of the Vertical-Slash kernel pattern introduced in \citep{minference}. For KV cache dropping, we retain the top $\cmax - \ncompwnd$ KV pairs with the highest importance scores, along with the final $\ncompwnd$ KV pairs from the most recent tokens.

% \begin{figure} 
%     \centering
%     \includegraphics[width=.15\linewidth]{figures/ssa_prefill_mask[window_size=4, n_global=8].pdf}
%     \caption{Example of \speckv{} prefill pattern with $n_{\text{window}} = 4$ and $n_{\text{global}} = 8$.}
%     \label{ssa_prefill}
% \end{figure}

% \input{src/performance}

\subsection{\specpc{}: Leveraging Draft Models for Efficient Prompt Compression}
\label{sec:specpc}

% \kl{too much redundant descipriton}
\pend{\speckv{} leverages the draft model outputs to enable more effective KV cache dropping. However, greater efficiency gains are possible by leveraging more information from the draft model. As demonstrated in \cref{sec:pc-justification}, draft model attention scores serve as reliable estimates of target token importance. Building on this insight, we introduce \specpc{}, an extension of SpecPrefill~\citep{specprefill}, which compresses the prompt to reduce latency and memory usage during both prefilling and decoding, surpassing the efficiency benefits of traditional KV cache dropping.}

\specpc{} (\cref{alg:spc}) feeds an input prompt (length $\ninput$) to the draft model and directly extracts its attention activations $A \in \mathbb{R}^{n_\text{layer} \times n_\text{head} \times \left(\ninput+\nlookahead-1\right) \times \ninput}$, where $n_\text{layer}$ and $n_\text{head}$ denote the number of layers and heads. These activations indicate token importance and are used to drop less relevant tokens from the prompt.

SpecPrefill~\citep{specprefill} uses a window size of $\ncompwnd = 1$, meaning it relies on attention scores from queries associated with the last input token and $\nlookahead$ draft output tokens to compute token importance. However, our experiments show that the optimal choice of $\ncompwnd$ (the number of input queries used for importance estimation) is task-dependent, with some tasks benefiting from additional input queries. To address this, we adopt a large window with non-uniform token weights, placing greater emphasis on tokens near the prompt’s end to achieve robust, task-wide performance. Window tokens are reweighted so that the $j$th token from the end receives weight $\frac{\ncompwnd - (j - 1)}{\ncompwnd}$. Max aggregation is performed across layers, heads, and queries to produce a single importance score per token (excluding the always-kept last $\ncompwnd$ tokens). Additionally, as PyramidKV~\citep{pyramidkv} showed that attention is more focused in deeper layers, we exclude the first $\lskip$ layers during aggregation.

We apply average, then max pooling, so selected tokens also include nearby context, avoiding static chunking that could split related tokens (e.g., key-value pairs). This maintains the local context LLMs require. Unlike other methods that select entire sentences, we avoid sentence-level pre-processing to support non-text inputs, such as images. We then select the top-$\cmax$ tokens with the highest scores, always including window tokens, to form the compressed prompt. We pass the compressed prompt to the target model with reassigned contiguous position IDs, enabling SpecPC to surpass the model’s maximum context length.

% \kg{Unlike SpecPrefill, we found that retaining the original position embeddings is not necessary. Reassigning position IDs, as if the compressed input were a natural, continuous prompt, proves beneficial, enabling SpecPC to function beyond the target model's maximum context length.}

% We then create the compressed prompt by selecting the top-$\cmax$ tokens with highest activation scores, always including window tokens by default. This prompt is finally provided to the target model for generation.

%For a detailed description of the \specpc{} algorithm, see \cref{alg:spc}.

% \input{src/specpc_alg}

% \subsection{Comparison: KV Cache Dropping vs. Prompt Compression}

% Prompt compression drops tokens before reaching the target model, offering several benefits over dropping KV tokens within the model. First, it reduces compute and memory use during both prefill and decoding, whereas KV dropping only improves decoding. It also outperforms sparse attention, as it saves on both attention and MLP computations. Second, inference acceleration frameworks like VLLM~\cite{vllm} can be used directly, since prompt compression requires no extra computation within the model's attention layers—unlike methods such as SnapKV. However, KV dropping can achieve greater compression since it selects tokens per attention head, whereas prompt compression drops tokens uniformly across all layers and heads.

\revised{\subsection{SpecKV-PC: Cascaded Compression with \speckv{} and \specpc{}}}
\label{sec:speckv_pc}

\begin{figure}
     \centering
    \includegraphics[width=0.7\textwidth]{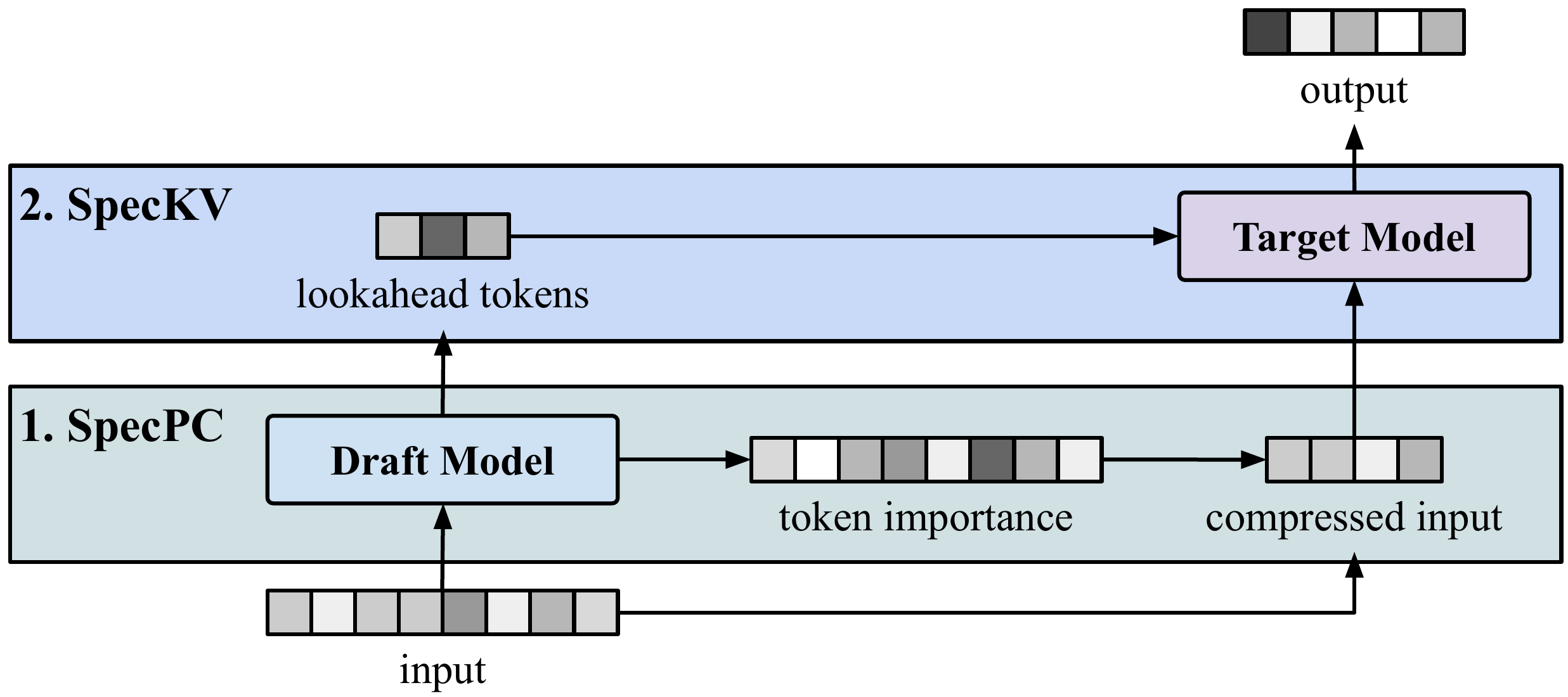}
    \caption{\revised{Overview of cascaded compression with SpecKV-PC: First, the draft model produces token importance scores and lookahead tokens. Next, \specpc{} uses these scores to compress the initial input prompt. Finally, the target model is prefilled using both the compressed prompt and the lookahead tokens, while \speckv{} compresses its KV cache}}
    \label{fig:speckv_pc}
\end{figure}
\revised{
SpecKV-PC integrates \speckv{} and \specpc{} into a highly efficient, two-stage compression pipeline. The core strategy leverages a cascaded approach, which first compresses the prompt with SpecPC and then further compresses the KV cache with SpecKV (\cref{fig:speckv_pc}). Because of fewer target model activations, SpecKV-PC achieves substantially lower latency and a smaller memory footprint than \speckv{} alone, as the computationally intensive target model processes only a fraction of the original prompt.}

% SpecKV-PC employes a single lookahead pass by the draft model to yield two outputs: draft importance scores for \specpc{} and draft lookahead tokens for \speckv{}. Using the importance scores, \specpc{} significantly reduces the initial prompt size. Subsequently, the target model performs a prefill on this compressed prompt alongside the drafted tokens. During this prefill stage, the target model's KV cache is further compressed via \speckv{} to the target budget $\cmax$. 

% Consequently, 

\section{Experiments}
\label{sec:exps}

\subsection{Setup}

We benchmark \speckv{} and \specpc{} against baselines on RULER~\citep{ruler} and LongBench~\citep{longbench} using two model pairs: Qwen2.5-Instruct~\citep{qwen2.5} (14B target with 1.5B/0.5B drafts for KV dropping/prompt compression) and Llama-3-Instruct~\citep{grattafiori2024llama} (3.1-70B 4-bit target with 3.2-3B/1B drafts, respectively). 

RULER is a synthetic benchmark with 13 tasks of varying complexity,  including tasks such as key-value retrieval (NIAH), multi-hop tracing, and aggregation. It can be generated at any sequence length to assess a model’s effective context window. We evaluate at 4K, 8K, 16K, 32K, and 64K (Qwen is excluded at 64K due to its 32K sequence limit). LongBench contains 12 English, five Chinese, two code, and three synthetic tasks across five categories. We exclude the Chinese tasks (unsupported by Llama) and synthetic tasks (already covered by RULER). \pend{We select 50 examples from each task, yielding 700 examples from LongBench and 650 examples at each context length for RULER.}

For \speckv{}, we compare against KV dropping methods H2O~\citep{h2o}, SnapKV~\citep{snapkv}, PyramidKV~\citep{pyramidkv}, and LAQ++~\citep{laq}, using $\cmax = 256$. \revised{Additionally, we assess SpecKV-PC-2048 that first compresses to 2048 tokens via \specpc{} before applying \speckv{} ($\cmax = 256$).}
For \specpc{}, we benchmark against LLMLingua-v2~\citep{pan2024llmlingua2}, CPC~\citep{liskavets2024cpc}, R2C~\citep{choi2024r2c}, and SpecPrefill~\citep{specprefill} with $\cmax = 1024$. 
Based on our ablation studies (\cref{app:fig:nlookahead}), we set $\nlookahead$ to the maximum token limit for \speckv{} and to one for \specpc{}. For SpecPrefill and LAQ++, we use $\nlookahead = 8$ as in the official paper.
\cref{app:datasets,app:exp_details,app:add_res} contain datasets, details, and additional results, including cross-family evaluations, different $\cmax$ settings, and multimodal experiments on MileBench~\citep{milebench} with Qwen2.5-VL~\citep{qwen2.5-vl}.
% \footnote{Our code is available at \url{https://anonymous.4open.science/r/long-context-submit-41C0}.}
% See \cref{app:datasets} for details on the datasets, \cref{app:exp_details} for more experimental details and \cref{app:add_res} for additional results including various $\cmax$ values, more models and multi-modal results using Qwen2.5-VL~\cite{qwen2.5-vl} on the image-text benchmark MileBench~\cite{milebench}. 

\begin{figure} 
    \centering
     \begin{subfigure}[b]{0.4925\textwidth}
         \centering
         \includegraphics[width=\textwidth]{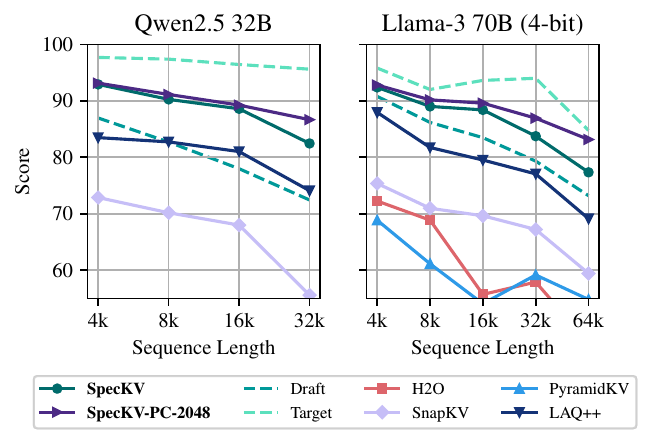}
         \caption{\revised{KV dropping ($\cmax = 256$)}}
     \end{subfigure}
     \hfill
     \begin{subfigure}[b]{0.49\textwidth}
         \centering
         \includegraphics[width=\textwidth]{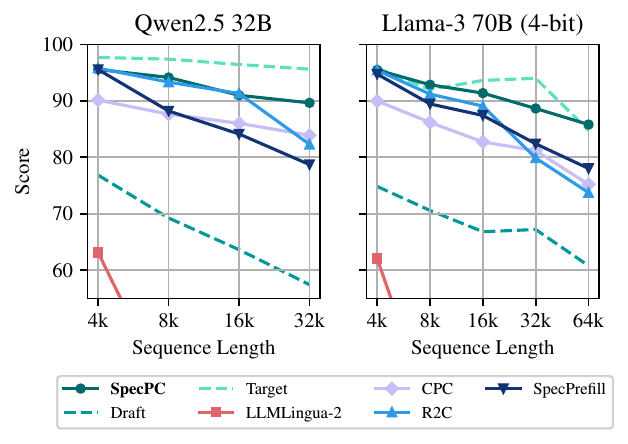}
         \caption{Prompt compression ($\cmax = 1024$)}
     \end{subfigure}
    \caption{Performance of \speckv{} and \specpc{}. Both methods consistently outperform all baselines across sequence lengths, maintaining strong results at longer contexts. \revised{SpecKV-PC further improves upon \speckv{} to achieve state-of-the-art results for KV dropping.} Note that H2O and PyramidKV are not plotted for Qwen2.5 32B as their performance falls outside the visible range.}
    \label{fig_ruler_llama_qwen}
\end{figure}

\subsection{Results}
\label{sec:results}

\cref{fig_ruler_llama_qwen} (RULER) and  \cref{tab:longbench_sidebyside} (LongBench) compare our methods with baselines. All our methods consistently outperform other baselines, demonstrating superior KV cache and prompt compression. Their performance far exceeds the draft model, highlighting robustness even with weaker drafts. Performance improves further with better drafts (\cref{app:fig:better_draft}). On RULER and LongBench, \speckv{} and \specpc{} consistently surpass existing baselines across multiple model families and context lengths. \revised{Interestingly, SpecKV-PC outperforms \speckv{} alone, suggesting that the initial SpecPC-based prompt compression acts as an effective pre-filter by removing easy-to-identify, unimportant tokens.}
For larger $\cmax$, our methods remain superior (\cref{app:fig:extra_ruler_kv,app:fig:extra_ruler_pc}; \cref{app:tab:extra_longbench_qwen25_32b,app:tab:extra_longbench_llama31_70b,app:tab:extra_longbench_qwen25_72b}). Finally, we provide additional results regarding AdaKV (\cref{app:sec:adakv}), multimodal tasks (\cref{app:add_res_milebench}), cross-family settings (\cref{app:sec:cross_model}), and SpecKV-PC prompt compression ratios (\cref{app:sec:combined_speckv_specpc}).

\begin{table}[h!]
\centering
% \caption{LongBench performance with Qwen2.5-1.5B/0.5B (draft) and Qwen2.5-14B (target), and Llama-3.2-1B (draft) and Llama-3.1-8B (target).}
\caption{LongBench performance with Qwen2.5 and Llama-3.}
\resizebox{\linewidth}{!}{
\begin{tabular}{llcccccc|cccccc}
\toprule
 & & \multicolumn{6}{c|}{\textbf{Qwen2.5 32B}} & \multicolumn{6}{c}{\textbf{Llama-3 70B (4-bit)}} \\
\cmidrule(lr){3-8}\cmidrule(lr){9-14}
\textbf{Group} & \textbf{Method} 
 & \rotatebox{45}{SingleQA} & \rotatebox{45}{MultiQA} & \rotatebox{45}{Summ.} & \rotatebox{45}{Few-shot} & \rotatebox{45}{Code} & \rotatebox{45}{All}
 & \rotatebox{45}{SingleQA} & \rotatebox{45}{MultiQA} & \rotatebox{45}{Summ.} & \rotatebox{45}{Few-shot} & \rotatebox{45}{Code} & \rotatebox{45}{All} \\
\midrule
\multirow{1}{*}{Dense}
% & Draft  & 21.04 & 24.40 & 21.23 & 54.07 & 33.39 & 30.64 & 28.08 & 27.27 & 25.65 & 60.16 & 31.11 & 34.69 \\
% & Target & 53.19 & 42.83 & 24.99 & 65.34 & 51.75 & 47.33 & 45.85 & 43.79 & 28.68 & 66.65 & 50.46 & 46.84 \\

% & Draft (KV)  & 36.16 & 35.01 & 22.79 & 63.92 & 36.62 & 39.06 & 42.40 & 40.29 & 26.40 & 64.12 & 46.02 & 43.69 \\
& Target  & 56.01 & 43.99 & 25.90 & 64.06 & 44.74 & 47.78 & 55.02 & 47.06 & 28.61 & 70.47 & 48.19 & 49.99 \\
\midrule
\multirow{6}{*}{KV}
 &   H2O    & 46.63 & 30.81 & 19.88 & 56.03 & 39.27 & 39.32 & 54.07 & 41.30 & 22.55 & 49.10 & 54.14 & 43.52 \\
  &   SnapKV    & 52.54 & 40.21 & 19.89 & 61.18 & 40.12 & 42.98 & \underline{55.88} & 45.30 & 22.49 & 62.15 & 55.49 & 47.75 \\
  &   PyramidKV    & 50.92 & 37.26 & 18.90 & 63.24 & 40.20 & 43.19 & 55.41 & 45.59 & 22.50 & 59.06 & 49.90 & 46.25 \\
  &   LAQ++    & \textbf{55.15} & \underline{44.14} & 22.24 & 63.25 & 41.19 & 45.79 & 54.90 & 46.48 & 22.83 & \underline{64.31} & 55.10 & 48.43 \\
  &   \textbf{SpecKV}    & \underline{53.48} & 43.77 & \underline{24.02} & \textbf{63.79} & \underline{44.80} & \underline{46.06} & 51.80 & \textbf{47.23} & \underline{25.53} & 64.02 & \textbf{58.75} & \underline{48.80} \\
 &    \revised{\textbf{SpecKV-PC-2048}}   & 52.60 & \textbf{44.52} & \textbf{24.11} & \underline{63.38} & \textbf{48.45} & \textbf{46.48} & \textbf{61.42} & \underline{47.15} & \textbf{26.51} & \textbf{66.94} & \underline{58.19} & \textbf{51.60} \\
\midrule
\multirow{5}{*}{PC}
& LLMLingua-2  & 33.83 & 26.39 & 22.85 & 32.46 & \underline{43.01} & 30.90 & 37.95 & 28.20 & 23.35 & 42.37 & 37.63 & 33.63 \\
& CPC  & 45.60 & 40.62 & 23.09 & 60.08 & 32.31 & 40.91 & 45.14 & 39.41 & 24.86 & 61.40 & 37.58 & 41.97 \\
& R2C  & 50.49 & 40.37 & \underline{23.26} & 53.45 & 34.11 & 39.88 & 48.93 & 42.01 & 25.38 & 58.91 & 40.19 & 43.29 \\
& SpecPrefill  & 45.94 & 39.32 & 23.16 & \underline{62.04} & \textbf{43.17} & \underline{42.70} & \underline{54.62} & \textbf{46.43} & \underline{25.63} & \underline{64.80} & \underline{44.92} & \underline{48.37} \\
& \textbf{SpecPC}  & 51.23 & 41.40 & \textbf{23.37} & \textbf{62.26} & 38.23 & \textbf{43.66} & \textbf{56.84} & \underline{44.48} & \textbf{25.91} & \textbf{67.37} & \textbf{47.15} & \textbf{48.44} \\
\bottomrule
\end{tabular}}
\label{tab:longbench_sidebyside}
\end{table}

\subsection{Efficiency}

We evaluate latency and memory on a single NVIDIA H200 (141GB) GPU. The target model is Qwen2.5-32B, with draft models of Qwen2.5-1.5B for KV dropping and Qwen2.5-0.5B for prompt compression. Latency is measured as the time to generate 64 tokens including all draft stages, with $\nlookahead$ set to 64 for \speckv{} and SpecKV-PC\footnote{While some tasks can generate up to 128 tokens, benchmark outputs are typically under 64 tokens.}, 8 for SpecPrefill and LAQ++, and 1 for \specpc{}, with SpecKV-PC prompts compressed to 2048 tokens. For KV dropping methods, we report peak system memory, while for prompt compression, we report memory for the compression stage only, because the target model uses the same amount of memory for all prompt compression algorithms. A detailed breakdown of latency and memory is included in \cref{app:sec:ext_lat_and_mem}.

\speckv{} outperforms both SnapKV and LAQ++ in speed, driven by efficient sparse prefilling and a low-latency draft model (\cref{fig:mem_latency}). \revised{Notably, SpecKV-PC achieves a 75\% reduction in latency compared to LAQ++ at 64k context, as it significantly reduces the target prefill bottleneck through prompt compression.} Regarding memory, while \speckv{} requires more memory than SnapKV to store draft weights, this overhead is constant and can be further reduced by offloading to a CPU. Crucially, \speckv{} is far more memory-efficient than LAQ++, which needs the entire target KV cache to function, matching the memory footprint of the dense target model. Thus, \speckv{} offers a superior combination of accuracy and memory efficiency. \revised{Furthermore, SpecKV-PC yields substantial peak memory savings (around 25GB compared to LAQ++ at 64k context) by feeding a shorter prompt to the target model. Other KV dropping methods are omitted as their performance is similar to SnapKV.}

\specpc{} achieves the lowest latency among all baselines. It avoids the CPU preprocessing overhead of CPC and R2C and is faster than SpecPrefill due to a shorter lookahead. \specpc{} is also the most memory-efficient, using substantially less memory than R2C. Overall, prompt compression is faster than KV dropping because only the compressed prompt tokens ($\cmax$) are passed to the target model.

\begin{figure}[h]
\centering
\begin{subfigure}[b]{0.495\textwidth}
\centering
\includegraphics[width=\textwidth]{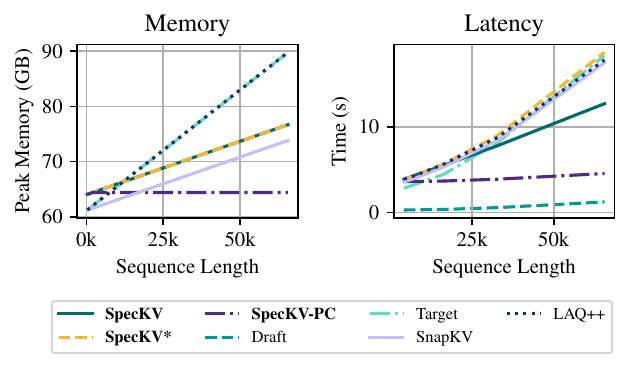}
\caption{\revised{SpecKV memory and latency.}}
\label{fig:mem_latency:speckv}
\end{subfigure}
\hfill
\begin{subfigure}[b]{0.495\textwidth}
\centering
\includegraphics[width=\textwidth]{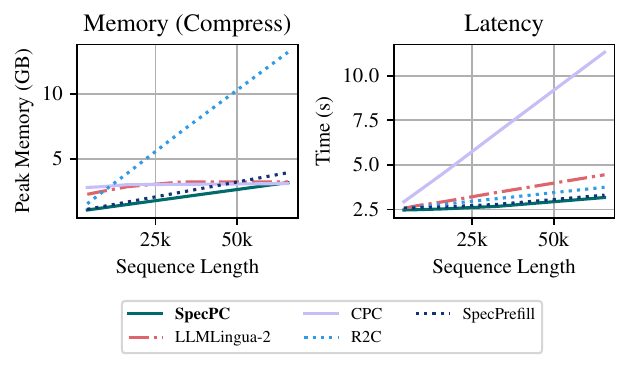}
\caption{SpecPC memory and latency.}
\label{fig:mem_latency:specpc}
\end{subfigure}
\caption{Peak memory usage and latency. \speckv{}* denotes \speckv{} without sparse prefill.}
\label{fig:mem_latency}
\end{figure}

\section{Discussion}
\label{sec:discussion}

In this paper, we present \framework{}, a framework that leverages draft model lookahead for approximate inference. Within this framework, we introduce two concrete algorithms: \speckvlong{} (\speckv{}) for KV cache dropping with sparse prefill, and \specpclong{} (\specpc{}) for prompt compression. \revised{We further propose SpecKV-PC, a cascaded pipeline that synergizes these approaches for improved efficiency and accuracy.} Our approach is grounded in theoretical and empirical analyses that justify lookahead-based KV cache dropping and the use of draft model token importance to approximate target model importance. Through comprehensive experiments on long-context benchmarks, we show that our methods consistently achieve state-of-the-art accuracy under fixed KV cache and prompt size constraints, surpassing prior baselines. These contributions establish draft model lookahead as a powerful tool for efficient long-context inference, extending the role of draft models beyond speculative decoding.
% In this paper, we introduced \framework{} with \speckvlong{} (\speckv{}) and \specpclong{} (\specpc{}), which leverage draft models for accelerating long-context inference through KV cache dropping and prompt compression. Rooted in strong theoretical motivation, our techniques demonstrate clear advantages over existing baselines, achieving superior accuracy and reduced latency on standard long-context benchmarks. These results mark a significant advancement in expanding the utility of draft models beyond speculative decoding, opening new avenues for accurate and efficient approximate inference. We believe this line of research will prove increasingly vital as the demand for efficient long-context inference continues to grow.

\paragraph{Limitations and Future Work}

For \speckv{}, draft generation causes minimal latency even for larger $\nlookahead$ (\cref{app:sec:ext_lat_and_mem}). However, very long outputs or large $\nlookahead$ values may reduce performance. In these cases, lowering $\nlookahead$ could maintain speed with little loss in accuracy.
For \specpc{}, increasing $\nlookahead$ to generate more tokens led to only minor accuracy gains; better leveraging longer drafts remains future work. 
\revised{Additionally, while our methods are robust using very small (\cref{app:fig:better_draft}) or cross-family (\cref{app:sec:cross_model}) draft models, a reasonably accurate draft model is still required.}
Currently, \framework{} supports sparse prefill, KV dropping, and prompt compression. Extensions such as lookahead-based sparse decoding or iterative KV cache dropping, where KV entries are periodically removed using draft lookahead, could further improve support for reasoning models with long outputs.

% Additionally, providing the full prompt to the draft model reduces efficiency and accuracy as the sequence length nears the model’s maximum. Segmenting the prompt into smaller chunks for separate processing, as in prior prompt compression methods, may mitigate this.
% Furthermore, for \specpc{} inputting the full prompt into the draft model reduces efficiency and accuracy as sequence length approaches the model’s maximum. Splitting the prompt into smaller chunks, as in previous prompt compression methods, and processing each individually may help. 
% While our methods perform well in multi-modal scenarios, they occasionally underperform compared to the target model—an area for further exploration.

% combine ssa with spc
% better mllm results
% more reliable hyperparams

\section*{Reproducibility statement} 
We provide a link to our public code. All algorithms, datasets, and experimental details, including hyperparameter settings, can be found in \cref{sec:exps} and the appendix (\cref{app:algo,app:datasets,app:exp_details}).

\bibliography{iclr2026_conference}
\bibliographystyle{iclr2026_conference}
\clearpage
\appendix

{\LARGE \textbf{Appendix}} \par 
\startcontents[sections]
\printcontents[sections]{ }{1}{}

\clearpage

\section{Algorithm Pseudocode}
\label{app:algo}

\begin{algorithm} 
\setstretch{1.2} 
\caption{\speckv{}}
\label{alg:speckv}
\footnotesize
\begin{algorithmic}[1]
\State \textbf{Input:}
    \Statex \hspace{\algorithmicindent} Input sequence $x$ with length $\ninput$
    % \Statex \hspace{\algorithmicindent} Parameters:
    %     \Statex \hspace{\algorithmicindent} \quad Number of lookahead tokens $\nlookahead$
    %     \Statex \hspace{\algorithmicindent} \quad Maximum cache capacity $\cmax$ 
    %     \Statex \hspace{\algorithmicindent} \quad Prefill window size $\nprewnd$
    %     \Statex \hspace{\algorithmicindent} \quad Number of global tokens in prefill $\nglobal$
    %     \Statex \hspace{\algorithmicindent} \quad Compression window size $\ncompwnd$
    %     \Statex \hspace{\algorithmicindent} \quad Kernel size $\poolk$
    \Statex \hspace{\algorithmicindent} Parameters: Number of lookahead tokens $\nlookahead$, Maximum cache capacity $\cmax$, 
    \Statex \hspace{\algorithmicindent} Compression window size $\ncompwnd$, Kernel size $\poolk$, Prefill window size $\nprewnd$, 
    \Statex \hspace{\algorithmicindent}  Number of global tokens in prefill $\nglobal$ \hspace{\algorithmicindent} 

% \State \textbf{Input:} Input sequence $x$ of length $\ninput$
% \State \textbf{Parameters:} $\nlookahead$, $\cmax$, $\nprewnd$, $\nglobal$, $\ncompwnd$, $k$
\vspace{5pt}

\State Generate a draft output $y_{\text{draft}}$ of length $\nlookahead$ using the draft model.

\State Forward $x$ and $y_{\text{draft}}$ through the target model.

\For{each attention head in target model}
    \State $X \gets \text{Target model hidden states from prompt at current layer}$
    \Comment{$X \in \mathbb R^{\ninput \times d}$}
    
    \State $Y \gets \text{Target model hidden states from draft output at current layer}$
    \Comment{$Y \in \mathbb R^{\nlookahead \times d}$}

    \State $m \gets \ninput - \ncompwnd$

    \State $A \gets \operatorname{CrossAttention}\left(Q \text{ from } \left[\begin{smallmatrix}
        X_{m:}\\ Y
    \end{smallmatrix}\right],\; K/V \text{ from } X_{:m}\right)$ \Comment{Compute attention}

    % \State $s \gets \operatorname{MaxReduce}(A)$ \Comment{$\operatorname{Reduce}: \mathbb R^{n \times m} \to \mathbb R^m$ can be mean or max}

    \State $s \gets \operatorname{MaxReduce}(A)$ \Comment{$\mathbb R^{n_1 \times n_2} \to \mathbb R^{n_2}$}

    \State $s \gets \operatorname{AvgPool1D}(s, \poolk)$ \Comment{Smooth attention}
    % \Comment{$\operatorname{Pool}: \mathbb R^m \to \mathbb R^m$ can be mean or max}

    \State $i_{\text{vert}} \gets \operatorname{topk}(s, \nglobal)$ \Comment{Select global tokens (Sparse prefill)}
    \State $i_{\text{slash}} \gets \{1, 2, \ldots, \nprewnd\}$ \Comment{Sliding window (Sparse prefill)}
    \State $i_{\text{cache}} \gets \operatorname{topk}(s, \cmax - \ncompwnd)
        \cup \{m + 1, m + 2, \ldots, \ninput\}$ \Comment{Select KVs}

    \State $\text{output} \gets \operatorname{VerticalSlash}(X, i_{\text{vert}}, i_{\text{slash}})$ \Comment{Sparse prefill attention}
    
    \State $\text{cache} \gets K_{i_{\text{cache}}}, V_{i_{\text{cache}}}$ \Comment{KV cache dropping}

\EndFor

\end{algorithmic}
\end{algorithm}

\newlength\myindent
\setlength\myindent{2em}
\newcommand\bindent{%
  \begingroup
  \setlength{\itemindent}{\myindent}
  \addtolength{\algorithmicindent}{\myindent}
}

\begin{algorithm} 
\setstretch{1.2} 
\caption{\specpc{}}
\label{alg:spc}
\footnotesize
\begin{algorithmic}[1]
\State \textbf{Input:}
    \Statex \hspace{\algorithmicindent} Draft attention tensor $A \in \mathbb{R}^{n_\text{layer} \times n_\text{head} \times \left(\ninput+\nlookahead-1\right) \times \ninput}$
    % \Statex \hspace{\algorithmicindent} Parameters:
    %     \Statex \hspace{\algorithmicindent} \quad Window size $\ncompwnd$
    %     \Statex \hspace{\algorithmicindent} \quad Kernel size $\poolk$
    %     \Statex \hspace{\algorithmicindent} \quad Number of neighbors $\nneigh$
    %     \Statex \hspace{\algorithmicindent} \quad Number of selected tokens $\cmax$
    %     \Statex \hspace{\algorithmicindent} \quad Number of skipped layers $\lskip$
    \Statex \hspace{\algorithmicindent} Parameters: Window size $\ncompwnd$,  Kernel size $\poolk$, 
    Number of neighbors $\nneigh$, 
    \Statex \hspace{\algorithmicindent} Number of selected tokens $\cmax$,  Number of skipped layers $\lskip$
\vspace{5pt}

\State $m \gets \ninput - \ncompwnd$

\State $A \gets A_{l_{\text{skip}}:, :, m:, :m}$ \Comment{Skip layers and only consider window queries and non-window keys}
\For{$j \in \{1, 2, \ldots, \ncompwnd\}$}
    \State $A_{\ldots, j, :} \gets \frac{j}{\ncompwnd}A_{\ldots, j, :}$ \Comment{Assign more weight to later tokens}
\EndFor

\State $s \gets \operatorname{MaxReduce}(A)$  \Comment{$\mathbb{R}^{n_1 \times n_2 \times n_3 \times n_4} \to \mathbb R^{n_4}$}  % \Comment{$\mathbb{R}^{(n_\text{layer} - \lskip) \times n_\text{head} \times \ncompwnd \times m} \to \mathbb R^{m}$}  

\State $s \gets \operatorname{AvgPool1D}(s, \poolk)$ \Comment{Smooth attention}
\State $s \gets \operatorname{MaxPool1D}(s, n_\text{neighbor})$ \Comment{Keep neighbor tokens}

\State $i_{\text{selected}} \gets \operatorname{topk}(s, C_\text{max}) \cup \{m + 1, m + 2, \ldots, \ninput\}$ \Comment{Keep most activated tokens and window tokens}
\State \Return $i_{\text{selected}}$ 
\end{algorithmic}
\end{algorithm}

\section{Mathematical Proofs}
\label{app:proofs}

\begin{lemma} \label{softmax_lemma}
    $\|\operatorname{Softmax}(x) - \operatorname{Softmax}(y)\|_2 \le \|x - y\|_\infty$.
\end{lemma}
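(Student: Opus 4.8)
The plan is to bound the softmax map via its Jacobian along the line segment joining $x$ and $y$. Writing $\sigma(\cdot) = \operatorname{Softmax}(\cdot)$ and $p(z) = \sigma(z)$, the map is smooth with Jacobian $J(z) = \operatorname{diag}(p(z)) - p(z)p(z)^T$, so the fundamental theorem of calculus gives $\sigma(x) - \sigma(y) = \int_0^1 J\big(y + t(x-y)\big)\,(x-y)\,dt$. Taking $\ell_2$ norms and moving the norm inside the integral (Minkowski's integral inequality), it suffices to prove the pointwise bound $\|J(z)\,u\|_2 \le \|u\|_\infty$ for every $z$ and every vector $u$; applying it with $u = x - y$ then closes the argument.

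The crux is therefore an elementary fact about an arbitrary probability vector $p$: for all $u$, $\|(\operatorname{diag}(p) - pp^T)u\|_2 \le \|u\|_\infty$. The $i$-th coordinate of $(\operatorname{diag}(p) - pp^T)u$ equals $p_i(u_i - \bar u)$ with $\bar u = \langle p, u\rangle$, so
\[
\|(\operatorname{diag}(p) - pp^T)u\|_2^2 = \sum_i p_i^2 (u_i - \bar u)^2 \le \Big(\max_i p_i\Big) \sum_i p_i (u_i - \bar u)^2 = \Big(\max_i p_i\Big)\operatorname{Var}_p(u).
\]
Since $\max_i p_i \le 1$ and $\operatorname{Var}_p(u) \le \mathbb{E}_p[u^2] \le \|u\|_\infty^2$, the right-hand side is at most $\|u\|_\infty^2$, giving the claim.

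I expect the only real subtlety to be obtaining the sharp constant $1$ instead of $2$: a crude estimate bounds $|u_i - \bar u|$ by $2\|u\|_\infty$ and loses a factor of two, so it matters to factor out $\max_i p_i \le 1$ and to recognize $\sum_i p_i (u_i - \bar u)^2$ as a variance controlled by the second moment. The remaining ingredients — smoothness of softmax, the integral representation of the increment, and the integral triangle inequality — are routine. (One could instead avoid calculus via a direct interpolation estimate, but the Jacobian route is the cleanest.)
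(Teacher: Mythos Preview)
Your proof is correct and follows essentially the same approach as the paper: both use the integral representation via the Jacobian $J(z)=\operatorname{diag}(p)-pp^T$ and reduce to the pointwise bound $\|J(z)u\|_2\le\|u\|_\infty$, established by observing $\sum_i p_i^2(u_i-\bar u)^2 \le \sum_i p_i(u_i-\bar u)^2 \le \sum_i p_i u_i^2 \le \|u\|_\infty^2$. The only cosmetic difference is that you phrase the middle step as $(\max_i p_i)\operatorname{Var}_p(u)$ while the paper uses $p_i^2\le p_i$ directly and expands the square.
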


\begin{proof}
     Let $J$ be the Jacobian matrix of $\operatorname{Softmax}$. 
     Then, $J(v) = \operatorname{diag}(p) - pp^T$, where $p = \operatorname{Softmax}(v)$.
     Note that $p$ is a probability distribution, so $p_i \ge 0$ for all $i$ and $\sum_{i}p_i = 1$.
     For any vectors $v$ and $z$,
    \begin{eqnarray*}
        \|J(v)z\|_2^2 
        &=& \|(\operatorname{diag}(p) - pp^T)z\|_2^2\\
        &=&\sum_i (p_iz_i - p_ip^Tz)^2\\
        &=&\sum_i p_i^2(z_i - p^Tz)^2\\
        &\le&\sum_i p_i(z_i - p^Tz)^2\\
        &=& \sum_i \left(p_iz_i^2 - 2p_iz_ip^Tz + p_i(p^Tz)^2\right)\\
        &=& \sum_i p_iz_i^2 - (p^Tz)^2\\
        &\le& \sum_i p_iz_i^2\\
        &\le& \sum_i p_i\|z\|_\infty^2\\
        &\le& \|z\|_\infty^2.
    \end{eqnarray*}
    Thus, $\|J(v)z\|_2 \le \|z\|_\infty$ for all $v$ and $z$.
    From the fundamental theorem of line integrals,
    \begin{equation}
        \operatorname{Softmax}(x) - \operatorname{Softmax}(y) 
        = \int_0^1 J(y + t(x - y))(x - y)dt.
    \end{equation}
    Finally,
    \begin{eqnarray*}
        \|\operatorname{Softmax}(x) - \operatorname{Softmax}(y) \|_2
        &=& \left\|\int_0^1 J(y + t(x - y))(x - y)dt\right\|_2\\
        &\le& \int_0^1 \|J(y + t(x - y))(x - y)\|_2dt\\
        &\le& \int_0^1 \|x - y\|_\infty dt\\
        &=& \|x - y\|_\infty.
    \end{eqnarray*}

\end{proof}

% \begin{proof}
     % From the mean value theorem, there exists $\xi$ between $x$ and $y$ such that \begin{equation}\operatorname{Softmax}(x) - \operatorname{Softmax}(y) = J(\xi)(x - y),\end{equation}
     % where $J$ is the Jacobian matrix of $\operatorname{Softmax}$. 
     % Let $p = \operatorname{Softmax}(\xi)$.
     % Then $J(\xi) = \operatorname{diag}(p) - pp^T$.
%      Note that $p$ is a probability distribution, so $p_i \ge 0$ for all $i$ and $\sum_{i}p_i = 1$.
%      Let $z = x - y$.
%     \begin{eqnarray*}
%         \|\operatorname{Softmax}(x) - \operatorname{Softmax}(y)\|_2^2 
%         &=& \|(\operatorname{diag}(p) - pp^T)z\|_2^2\\
%         &=&\sum_i (p_iz_i - p_ip^Tz)^2\\
%         &=&\sum_i p_i^2(z_i - p^Tz)^2\\
%         &\le&\sum_i p_i(z_i - p^Tz)^2\\
%     \end{eqnarray*}
%     Note that $p$ is a probability distribution, so $p^Tz = E_p[z]$. Then, 
%     \begin{equation}\sum_i p_i(z_i - p^Tz)^2 
%     = \sum_i p_i(z_i - E_p[z])^2 
%     = \operatorname{Var}_p[z] 
%     = E_p[z^2] - E_p[z]^2
%     \le E_p[z^2]
%     = \sum_i p_i z_i^2.\end{equation}
%     Finally,
%     \begin{equation}\|\operatorname{Softmax}(x) - \operatorname{Softmax}(y)\|_2^2 
%     \le \sum_i p_i z_i^2 \le \sum_i p_i \|z\|_\infty^2 = \|z\|_\infty^2,\end{equation}
%     so
%     \begin{equation}\|\operatorname{Softmax}(x) - \operatorname{Softmax}(y)\|_2 \le \|x - y\|_\infty.\end{equation}

% \end{proof}

\begin{lemma} \label{inverse_softmax_lemma}
    Let $y = \operatorname{Softmax}(x)$ and $y' = \operatorname{Softmax}(x')$.
    If $\|y - y'\|_p \le \epsilon$, then there exists a scalar $c$ such that $\|x - x' - c\mathbf 1\|_p \le \tfrac{\epsilon}{m}$, where $m = \min_i(\min(y_i, y_i'))$ and $p \in \{1, 2, \ldots, \infty\}$.
\end{lemma}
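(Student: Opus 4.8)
\textbf{Proof proposal for \Cref{inverse_softmax_lemma}.}

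The plan is to invert the relationship between softmax inputs and outputs. The key observation is that softmax is invariant under adding a constant to all coordinates, so we can only hope to recover $x - x'$ up to such a shift; the lemma's statement reflects exactly this. First I would write down the componentwise relationship: for each coordinate $i$ we have $y_i = e^{x_i}/Z$ and $y_i' = e^{x_i'}/Z'$ where $Z = \sum_k e^{x_k}$ and $Z' = \sum_k e^{x_k'}$. Taking logarithms gives $x_i - x_i' = \log y_i - \log y_i' + \log Z - \log Z'$, so if we set $c = \log Z - \log Z'$ (a single scalar independent of $i$), then $x_i - x_i' - c = \log y_i - \log y_i'$ for every $i$.

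Next I would bound $|\log y_i - \log y_i'|$ in terms of $|y_i - y_i'|$. By the mean value theorem applied to $\log$ on the interval between $y_i$ and $y_i'$, there is a point $\xi_i$ between them with $\log y_i - \log y_i' = (y_i - y_i')/\xi_i$, and since $\xi_i \ge \min(y_i, y_i') \ge m$, we get $|\log y_i - \log y_i'| \le |y_i - y_i'|/m$. Therefore the vector $x - x' - c\mathbf 1$ has $i$-th coordinate equal to $(\log y_i - \log y_i')$, whose absolute value is at most $|y_i - y_i'|/m$; taking the $\ell_p$ norm over all coordinates and pulling out the factor $1/m$ yields $\|x - x' - c\mathbf 1\|_p \le \tfrac{1}{m}\|y - y'\|_p \le \tfrac{\epsilon}{m}$, which is the claimed bound. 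This works uniformly for all $p \in \{1, 2, \ldots, \infty\}$ since the argument is coordinatewise.

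I do not expect a serious obstacle here; the only subtlety is making sure the constant $c$ is chosen correctly (as $\log Z - \log Z'$) so that the residual is exactly the vector of log-ratios, and being careful that $m > 0$ so division is legitimate — which holds because $m = \min_i \min(y_i, y_i')$ and softmax outputs are strictly positive. One minor point worth stating cleanly is that the mean-value-theorem step gives the pointwise bound with the \emph{coordinatewise} minimum $\min(y_i, y_i')$ in the denominator, and replacing it by the global minimum $m$ only weakens the bound, which is fine. I would present the logarithm identity first, then the per-coordinate estimate, then assemble the norm bound.
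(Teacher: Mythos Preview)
Your proposal is correct and matches the paper's proof essentially line for line: the paper also sets $c = \log\sum_j e^{x_j} - \log\sum_j e^{x_j'}$, observes that $x_i - x_i' - c = \log y_i - \log y_i'$, applies the mean value theorem to $\log$ to get $|\log y_i - \log y_i'| \le |y_i - y_i'|/m$, and then assembles the $\ell_p$ bound coordinatewise. The only cosmetic difference is that the paper presents the mean-value step before identifying $c$, while you do it afterward.
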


\begin{proof}
    From the mean value theorem, there exists $\xi \in (y_i, y_i')$ such that
    \begin{equation}
        \tfrac{\log y_i - \log y_i'}{y_i - y_i'} = \tfrac{d\log t}{dt}\Bigg|_{t=\xi} = \tfrac{1}{\xi}.
    \end{equation}
    Note that $\xi > 0$. Then, 
    \begin{equation}
        |\log y_i - \log y_i'| = \tfrac{1}{\xi}|y_i - y_i'| \le \tfrac{1}{m}|y_i - y_i'|.
    \end{equation}

    Let $c = \log \sum_je^{x_j} - \log \sum_je^{x_j'}$, so
    \begin{equation}
        \left|\log \tfrac{e^{x_i}}{\sum_je^{x_j}} - \log \tfrac{e^{x_i'}}{\sum_je^{x_j'}}\right|
            = \left|x_i - x_i' - \left(\log \sum_je^{x_j} - \log \sum_je^{x_j'}\right)\right| 
            = |x_i - x_i' - c|
    \end{equation}
    for all $i$.
    Thus,
    \begin{align*}
        |x_i - x_i' - c| \le \tfrac{1}{m}|y_i - y_i'|
        &\implies |x_i - x_i' - c|^p \le \tfrac{1}{m^p}|y_i - y_i'|^p\\
        &\implies \sum_i|x_i - x_i' - c|^p \le \tfrac{1}{m^p}\sum_i|y_i - y_i'|^p\\
        &\implies \|x - x' - c\mathbf 1\|_p^p \le \tfrac{1}{m^p}\|y - y'\|_p^p\\
        &\implies \|x - x' - c\mathbf 1\|_p \le \tfrac{1}{m}\|y - y'\|_p\\
        &\implies \|x - x' - c\mathbf 1\|_p \le \tfrac{\epsilon}{m}.
    \end{align*}
\end{proof}

\subsection{Proof of \cref{theorem:speckv}}\label{proof:speckv}

We define the vector of importance scores as and its approximation as
\begin{equation}
s^T = \tfrac{1}{\noutput} \sum_{i=1}^\noutput \operatorname{Softmax}\left( \tfrac{x_i^{(o)T} W_q W_k^T X^T}{\sqrt d} \right)
\quad \text{and} \quad
\hat{s}^T = \tfrac{1}{\noutput}\sum_{i=1}^\noutput\operatorname{Softmax}\left( \tfrac{\hat{x}_i^{(o)T} W_q W_k^T X^T}{\sqrt d}\right),
\end{equation}

where $X = [x_1, \dots, x_\ninput]^T \in \mathbb{R}^{\ninput \times d}$ is the matrix of input embeddings, $x_i^{(o)} \in \mathbb{R}^d$ is the $i$th output embedding, and $\hat{x}_i^{(o)} \in \mathbb{R}^d$ is the $i$th approximate output embedding (from the draft model). $s_i$ and $\hat s_i$ denote the importance of the $i$th KV pair. In practice, \speckv{} estimates importance using queries from recent input and draft output tokens. This is omitted from the theoretical analysis for clarity.

\begin{proof}
    We assume $\|x_i^{(o)} - \hat x_i^{(o)}\|_2 \le \epsilon$ for all $i$ and $\|x_j\|_2 \le \sqrt d$ for all $j$.
    
    $\|x_i^{(o)} - \hat x_i^{(o)}\|_2 \le \epsilon$, so 
    \begin{eqnarray*}
        \left|\tfrac{x_i^{(o)T} W_q W_k^T x_j}{\sqrt d} - \tfrac{\hat x_i^{(o)T} W_q W_k^T x_j}{\sqrt d} \right|
        &=& \tfrac{1}{\sqrt d}\left|(x_i^{(o)} - \hat x_i^{(o)})^T W_q W_k^T x_j\right| \\
        &\le& \tfrac{1}{\sqrt d}\|W_q W_k^T\|_2 \epsilon\sqrt{d} \\
        &=& \epsilon \|W_q W_k^T\|_2.
    \end{eqnarray*}
    Thus, \begin{equation}
        \left\|
            \tfrac{x_i^{(o)T} W_q W_k^T X^T}{\sqrt d} - \tfrac{\hat x_i^{(o)T} W_q W_k^T X^T}{\sqrt d}
        \right\|_\infty 
        \le \epsilon \|W_q W_k^T\|_2.
    \end{equation}
    
    Applying \cref{softmax_lemma} and the triangle inequality, we get
    \begin{eqnarray*}
        \|s - \hat s\|_2
        &=& \left\|
             \tfrac{1}{\noutput} \sum_{i=1}^\noutput \operatorname{Softmax}\left(\tfrac{x_i^{(o)T} W_q W_k^T X^T}{\sqrt d}\right) - \tfrac{1}{\noutput} \sum_{i=1}^\noutput \operatorname{Softmax}\left(\tfrac{\hat x_i^{(o)T} W_q W_k^T X^T}{\sqrt d}\right)
        \right\|_2 \\
        &\le& \tfrac{1}{\noutput}\sum_{i=1}^\noutput\left\|
            \operatorname{Softmax}\left(\tfrac{x_i^{(o)T} W_q W_k^T X^T}{\sqrt d}\right) - \operatorname{Softmax}\left(\tfrac{\hat x_i^{(o)T} W_q W_k^T X^T}{\sqrt d}\right)
        \right\|_2 \\
        &\le& 
        \tfrac{1}{\noutput}\sum_{i=1}^\noutput\left\|
            \tfrac{x_i^{(o)T} W_q W_k^T X^T}{\sqrt d} - \tfrac{\hat x_i^{(o)T} W_q W_k^T X^T}{\sqrt d}
        \right\|_\infty \\
        &\le& \epsilon \|W_q W_k^T\|_2,
    \end{eqnarray*}
\end{proof}

\subsection{Proof of \cref{theorem:specpc_rip2}} \label{proof:rip}

\begin{equation}X = [x_1, \ldots, x_\ninput]^T\end{equation}
\begin{equation}Y = [y_1, \ldots, y_\ninput]^T = \operatorname{Softmax}\left(\tfrac{X W_q W_k^T X^T}{\sqrt{d}}\right) X W_v\end{equation}
\begin{equation}\hat Y = [\hat y_1, \ldots, \hat y_\ninput]^T = \operatorname{Softmax}\left(\tfrac{X \hat W_q \hat W_k^T X^T}{\sqrt{d}}\right) X \hat W_v\end{equation}
\begin{equation}A = [a_1, \ldots, a_\ninput]^T = \operatorname{Softmax}\left(\tfrac{X W_q W_k^T X^T}{\sqrt{d}}\right)\end{equation}
\begin{equation}\hat A = [\hat a_1, \ldots, \hat a_\ninput]^T = \operatorname{Softmax}\left(\tfrac{X \hat W_q \hat W_k^T X^T}{\sqrt{d}}\right)\end{equation}

\begin{proof}

We assume $\|y_i - \hat y_i\|_2 \le \epsilon\|X\|_{\infty, 2}$, where $\|X\|_{\infty, 2}$ is the maximum $\ell_2$ norm of the rows of $X$. Additionally, we assume that there exists a constant $c$ such that $cX^T$ has the Restricted Isometry Property~\citep{rip} with parameters $(2k, \delta)$, where $\delta$ is the restricted isometry constant and $k$ is the approximate sparsity of $a_i$ and $\hat a_i$.

Recall that a matrix $B$ satisfies the Restricted Isometry Property with constant $\delta \in (0,1)$ if for every $k$-sparse vector $v$, the following inequality holds:

\begin{equation}
    (1 - \delta)\|v\|_2^2 \le \|Bv\|_2^2 \le (1 + \delta)\|v\|_2^2.
\end{equation}

Let $\Delta W_v = W_v - \hat W_v$ and $\Delta a_i = a_i - \hat a_i$.

If $\ninput = 1$, then $A = \hat A \in \mathbb R^{1 \times 1}$ with $A_{1, 1} = \hat A_{1, 1} = 1$, so $AX = \hat{A}X = X = x_1^T$,
which implies 
\begin{equation}\|y_1 - \hat y_1\|_2 
= \|x_1^T W_V - x_1^T \hat W_v\|_2 
= \|x_1^T \Delta W_v\|_2 
\le \epsilon\|X\|_{\infty, 2} = \epsilon\|x_1\|_2\end{equation}
for all $x_1$. 
$\|x_1^TW_v\|_2 \le \epsilon \|x_1\|$ for all $x_1$ is the definition of the matrix $\ell_2$ norm, so $\|\Delta W_v\|_2  \le \epsilon$.

\begin{eqnarray*}
\|y_i - \hat y_i\|_2 
&=& \|a_i^T X W_v - \hat a_i^T X \hat W_v\|_2  \\
&=& \|a_i^T X W_v - (\hat a_i^T X W_v - \hat a_i^T X \Delta W_v)\|_2  \\
&=& \|a_i^T X W_v - (a_i^T X W_v - \Delta a_i^T X W_v  - \hat a_i^T X \Delta W_v)\|_2  \\
&=& \|\Delta a_i^T X W_v  + \hat a_i^T X \Delta W_v\|_2  \\
&\le& \epsilon\|X\|_{\infty, 2}
\end{eqnarray*}
Then, $\|\Delta a_i^T X W_v\|_2 \le \epsilon\|X\|_{\infty, 2} + \|\hat a_i^T X \Delta W_v\|_2$.

Since $\hat a_i^TX$ is a convex combination of the rows of $X$, $\|\hat a_i^T X\|_2 \le \|X\|_{\infty, 2}$.

Thus, $\|\Delta a_i^T X W_v\|_2 
\le \epsilon\|X\|_{\infty, 2} + \|\hat a_i^T X \Delta W_v\|_2 
\le \epsilon\|X\|_{\infty, 2} + \|\hat a_i^T X\|_2 \|\Delta W_v\|_2 
\le 2\epsilon\|X\|_{\infty, 2}$.

Attention scores are approximately sparse \citep{minference}, especially for long sequences. Therefore, we assume $a_i$ and $\hat a_i$ are $k$-sparse. Then, $\Delta a_i$ is at most $2k$-sparse. Since $cX^T$ has the Restricted Isometry Property with parameters $2k$, $\delta$,
\begin{equation}(1-\delta)\|\Delta a_i\|_2 \le \|\Delta a_i^T (cX)\|_2 \le (1+\delta)\|\Delta a_i\|_2.\end{equation}
Then, 
\begin{equation}\tfrac{1}{c}(1-\delta)\|\Delta a_i\|_2 \le \|\Delta a_i^T X\|_2 \le \tfrac{2\epsilon\|X\|_{\infty, 2}}{\sigma_{\min}(W_v)},\end{equation}
so
\begin{equation}\|\Delta a_i\|_2 \le \tfrac{2c\epsilon\|X\|_{\infty, 2}}{\sigma_{\min}(W_v)(1-\delta)}.\end{equation}
\end{proof}

% \begin{lemma}
%     If $\|Y - \hat Y\|_2 \leq \epsilon \|X\|_2$, then
%     $\|a_i - \hat a_i\|_2 \le \frac{2\epsilon}{\sigma_{\min}(X)\sigma_{\min}(W_v)}$,
%     where $\sigma_{\min}(M)$ is the minimum singular value of a matrix $M$.
% \end{lemma}

% \begin{proof}
% Let $\Delta W_v = W_v - \hat W_v$ and $\Delta a_i = a_i - \hat a_i$.

% If $n = 1$, then $A = \hat A = [1]$, so $AX = \hat{A}X = X = x_1^T$,
% which implies 
% \end{equation}\|Y - \hat Y\|_2 
% = \|A X W_V - \hat A X \hat W_v\|_2  
% = \|x_1^T W_V - x_1^T \hat W_v\|_2 
% = \|x_1^T \Delta W_v\|_2 
% \le \epsilon\|X\|_2 = \epsilon\|x_1\|_2\end{equation}

% for all $x_1$.

% Thus, $\|\Delta W_v\|_2 \le \epsilon$.
% \begin{eqnarray*}
% \|Y - \hat Y\|_2 
% &=& \|A X W_V - \hat A X \hat W_v\|_2  \\
% &=& \|A X W_V - (\hat A X W_v + \hat A X \Delta W_v)\|_2  \\
% &=& \|A X W_V - (A X W_v + \Delta A X W_v  + \hat A X \Delta W_v)\|_2  \\
% &=& \|\Delta A X W_v  + \hat A X \Delta W_v\|_2  \\
% &\le& \epsilon\|X\|_2
% \end{eqnarray*}

% Since $\|\hat A\|_2 \le \sqrt n$, 
% \end{equation}\|\Delta A X W_v\|_2 
% \le \epsilon\|X\|_2 + \|\hat A X \Delta W_v\|_2
% \le \epsilon\|X\|_2 + \|\hat A\|_2\|X\|_2\|\Delta W_v\|_2
% \le \epsilon\|X\|_2(1 + \sqrt n).\end{equation}

\subsection{Proof of \cref{theorem:specpc}}\label{proof:specpc}

\begin{theorem}\label{theorem:specpc}
    If $\|Y - \hat Y\|_2 \le \epsilon\|X\|_2$ for all $X$ and the column space of $W_q, W_k, \hat W_q, \hat W_k$ is a subset of the column space of $W_v$, then $\|a_i - \hat a_i\|_2 \le \epsilon\delta$, where
    \begin{equation}
        \delta = 2d\tfrac{\sigma_{\max}(W_v)^2}{\sigma_{\min}(W_v)}\exp\left(2\max\left(
            \tfrac{\|W_q\|_2 \|W_k\|_2}{\sigma_{\min}(W_v)^2}, 
            \tfrac{\|\hat W_q\|_2 \|\hat W_k\|_2}{\sigma_{\min}(\hat W_v)^2}
        \right)\right)\|X\|_{\infty, 2}^2.
    \end{equation}
\end{theorem}

\begin{proof}
We assume $\|Y - \hat Y\|_2 \le \epsilon\|X\|_2$. Additionally, we assume that the column space of $W_q, W_k, \hat W_q, \hat W_k$ is a subset of the column space of $W_v$. To get a norm bound on $\Delta a_i = a_i - \hat a_i$, we will bound the norms of the error in approximate weight matrices. We will find these bounds by using specific inputs, taking advantage of the fact that $\|Y - \hat Y\|_2 \le \epsilon\|X\|_2$ for all $X$.

We will start by bounding $\Delta W_v = W_v - \hat W_v$, by choosing an input that fixes $A$ and $\hat A$. If $n = 1$, then $A = \hat A = [1]$, so $AX = \hat{A}X = X = x_1^T$,
which implies 
\begin{equation}\|Y - \hat Y\|_2 
= \|A X W_V - \hat A X \hat W_v\|_2  
= \|x_1^T W_V - x_1^T \hat W_v\|_2 
= \|x_1^T \Delta W_v\|_2 
\le \epsilon\|X\|_2 = \epsilon\|x_1\|_2\end{equation}
for all $x_1$. 
Thus, $\|\Delta W_v\|_2  \le \epsilon$.

Next, we will bound the norm of $\Delta B = B - \hat B$, where $B = W_qW_k^T$ and $\hat B = \hat W_q \hat W_k^T$. We will choose the $X$ so that the values are the identity matrix. Then $Y = A$.
Let $U\Sigma V^T$ be the singular value decomposition of $W_v$. We set 

\begin{equation}X = \Phi V\Sigma^{-1}U^T,\end{equation}

where $\Phi$ is an arbitrary orthonormal basis spanning $\mathbb R^{d \times d}$. 

Note that $\sigma_{\min}(\Phi) = \sigma_{\max}(\Phi) = 1$, so
$\|X\|_2 = \sigma_{\max}(X) = \tfrac{1}{\sigma_{\min}(W_v)}$ and
$\sigma_{\min}(X) = \tfrac{1}{\sigma_{\max}(W_v)}$.

Now, 
\begin{eqnarray*}
    \|Y - \hat Y\|_2 &=& \|A X W_V - \hat A X \hat W_v\|_2\\
    &=& \|A X W_V - (\hat A X W_v - \hat A X \Delta W_v)\|_2  \\
    &=& \|A X W_V - (A X W_v - \Delta A X W_v - \hat A X \Delta W_v)\|_2  \\
    &=& \|\Delta A X W_v + \hat A X \Delta W_v\|_2  \\
    &\le& \|\Delta A X W_v\|_2 + \|\hat A X \Delta W_v\|_2 \\
    &=& \|\Delta A \Phi V\Sigma^{-1}U^T U\Sigma V^T\|_2 + \|\hat A X \Delta W_v\|_2 \\
    &=& \|\Delta A \Phi\|_2 + \|\hat A X \Delta W_v\|_2 \\
    &=& \|\Delta A\|_2 + \|\hat A X \Delta W_v\|_2 \\
    &\le& \epsilon\|X\|_2 + \epsilon\|X\|_2\\
    &\le& \tfrac{2\epsilon}{\sigma_{\min}(W_v)}.
\end{eqnarray*}

Note that each row of $\hat A$ is a probability distribution (non-negative entries summing to 1), so left-multiplying $X$ by $\hat A$ forms a convex combination of the rows of $X$. From Jensen's inequality we get $\|\hat AX\|_2 \leq \|X\|_2$, because $x \rightarrow \|x\|_2$ is a convex function.

Let $\delta_1 = \max\left(
    \tfrac{\|W_q\|_2 \|W_k\|_2}{\sigma_{\min}(W_v)^2}, \tfrac{\|\hat W_q\|_2 \|\hat W_k\|_2}{\sigma_{\min}(\hat W_v)^2}
\right)$.
Since $\|X\|_2 = \tfrac{1}{\sigma_{\min}(W_v)}$ and $\|B\|_2 \le \|W_q\|_2\|W_k\|_2$, $\|X B X^T\|_2 \le \delta_1$.
Consequently, $|x_i^T B x_j| \le \delta_1$ for all $i, j$. This implies that each attention weight satisfies 

\begin{equation}a_{i,j} > \tfrac{e^{-\delta_1}}{\sum_{j=1}^d e^{\delta_1}} = \tfrac{1}{d}e^{-2\delta_1}.\end{equation}

The same argument applied to $\hat a$ gives

\begin{equation}\hat a_{i,j} > \tfrac{e^{-\delta_1}}{\sum_{j=1}^d e^{\delta_1}} = \tfrac{1}{d}e^{-2\delta_1}.\end{equation}

Applying \cref{inverse_softmax_lemma} to each row, there exists $c \in \mathbb{R}^d$ such that  

\begin{equation}\begin{aligned}
\left\|\tfrac{X B X^T}{\sqrt{d}} - \tfrac{X \hat{B} X^T}{\sqrt{d}} + c\mathbf{1}^T\right\|_2 &\le d e^{2\delta_1} \left\| \operatorname{Softmax}\left( \tfrac{X B X^T}{\sqrt{d}} \right) - \operatorname{Softmax}\left( \tfrac{X \hat{B} X^T}{\sqrt{d}} \right) \right\|_2 \\
\left\|X B X^T - X \hat{B} X^T + \sqrt d c\mathbf{1}^T\right\|_2 &\le d^{3/2} e^{2\delta_1} \|\Delta A\|_2\\
&\le \tfrac{2\epsilon d^{3/2} e^{2\delta_1}}{\sigma_{\min}(W_v)}.
\end{aligned}\end{equation}

Minimizing over $c$, we obtain  

\begin{equation}
\begin{aligned}
\min_c \|X B X^T - X \hat{B} X^T + \sqrt dc\mathbf{1}^T\|_2 &= \min_c \|X \Delta B X^T - \sqrt dc\mathbf{1}^T\|_2 \\
&= \left\|X \Delta B X^T - \tfrac{1}{d} X \Delta B X^T \mathbf{1} \mathbf{1}^T\right\|_2 \\
&= \left\|X \Delta B X^T \left(I - \tfrac{1}{d} \mathbf{1} \mathbf{1}^T\right)\right\|_2 \\
\end{aligned}
\end{equation}

Substituting in the definition of $X$, we get

\begin{equation}\|\Phi V\Sigma^{-1}U^T \Delta B U\Sigma^{-1}V^T\Phi^T(I - d^{-1}\mathbf 1 \mathbf 1^T)\|_2 \le \tfrac{\epsilon d^{3/2}e^{2\delta_1}}{\sigma_{\min}(W_v)}.\end{equation}

Each multiplication by $\Sigma^{-1}$ can decrease the norm by at most $\tfrac{1}{\sigma_{\max}(W_v)}$, so when removing both instances of $\Sigma^{-1}$ we scale the bound by $\sigma_{\max}(W_v)^2$, giving us

\begin{equation}\|\Phi VU^T \Delta B UV^T\Phi^T(I - d^{-1}\mathbf 1 \mathbf 1^T)\|_2 \le \tfrac{2\epsilon d^{3/2}e^{2\delta_1}}{\sigma_{\min}(W_v)}\sigma_{\max}(W_v)^2.\end{equation}

Then, since $\Phi$ and $V$ are orthonormal and preserve spectral norm under multiplication, we conclude

\begin{equation}\|U^T \Delta B UV^T\Phi^T(I -d^{-1}\mathbf 1 \mathbf 1^T)\|_2 \le \tfrac{2\epsilon d^{3/2}e^{2\delta_1}}{\sigma_{\min}(W_v)}\sigma_{\max}(W_v)^2.\end{equation}

Finally, since the column space of $W_q, W_k, \hat W_q, \hat W_k$ is a subset of the column space of $W_v$, the column space of $\Delta B$ is a subset of the column space of $U$. Thus, left multiplication by $U^T$ does not impact the spectral norm, so

\begin{equation}\|\Delta B UV^T\Phi^T(I -d^{-1}\mathbf 1 \mathbf 1^T)\|_2 \le \tfrac{2\epsilon d^{3/2}e^{2\delta_1}}{\sigma_{\min}(W_v)}\sigma_{\max}(W_v)^2.\end{equation}

Note that the matrix $I - \tfrac{1}{d} \mathbf{1}\mathbf{1}^T$ is a projection onto the subspace orthogonal to the all-ones vector. Its singular values are $[1, \ldots, 1, 0]$, so its spectral norm is
\begin{equation}\left\|I - \tfrac{1}{d} \mathbf{1}\mathbf{1}^T\right\|_2 = 1.\end{equation}

Moreover, since $\left\|I - \tfrac{1}{h} \mathbf{1}\mathbf{1}^T\right\|_2 = 1$ and $\Phi$ is an arbitrary orthonormal basis of $\mathbb R^d$, it follows that for any fixed $P \in \mathbb R^{d \times d}$, we can choose $\Phi$ such that the largest component of $P\Phi^T$ lies entirely in the subspace orthogonal to $\mathbf 1$. In this case,

\begin{equation}\left\|P\Phi^T(I - \tfrac{1}{d} \mathbf 1 \mathbf 1^T)\right\|_2 = \|P\|_2.\end{equation}

Thus, $\|\Delta B\|_2 \le \delta_2$ where $\delta_2 = 2\epsilon d^{3/2}\tfrac{\sigma_{\max}(W_v)^2}{\sigma_{\min}(W_v)}e^{2\delta_1}$.

Now that we have bounded $\|\Delta B\|_2$, we will consider any input $X$.
Then, $|x_i^T \Delta B x_j| \le \delta_2\|X\|_{\infty, 2}^2$, so $\|x_i^T \Delta B X^T\|_\infty \le \delta_2\|X\|_{\infty, 2}^2$. $\|X\|_{\infty, 2}$ is the maximum $\ell_2$ norm of the rows of $X$.

From \cref{softmax_lemma},
\begin{align*}
\|a_i - \hat a_i\|_2 &= \|\operatorname{Softmax}\left(\tfrac{x_i^T B X^T}{\sqrt d}\right) - \operatorname{Softmax}\left(\tfrac{x_i^T \hat B X^T}{\sqrt d}\right)\|_2 \\
&\le \left\|\tfrac{x_i^T B X^T}{\sqrt d} - \tfrac{x_i^T \hat B X^T}{\sqrt d}\right\|_\infty = \tfrac{1}{\sqrt d}\left\|x_i^T \Delta B X^T \right\|_\infty \le \tfrac{\delta_2\|X\|_{\infty, 2}^2}{\sqrt d}.
\end{align*}

\begin{equation}\|a_i - \hat a_i\|_2 \le \tfrac{\delta_2\|X\|_{\infty, 2}^2}{\sqrt d}
= 2\epsilon d\tfrac{\sigma_{\max}(W_v)^2}{\sigma_{\min}(W_v)}\exp\left(2\max\left(
    \tfrac{\|W_q\|_2 \|W_k\|_2}{\sigma_{\min}(W_v)^2}, \tfrac{\|\hat W_q\|_2 \|\hat W_k\|_2}{\sigma_{\min}(\hat W_v)^2}
\right)\right)\|X\|_{\infty, 2}^2.\end{equation}

\end{proof}

\section{Benchmark Dataset Details}
\label{app:datasets}

\subsection{LongBench}

\begin{table}[htbp]
\centering
\caption{LongBench tasks.}
\resizebox{1\linewidth}{!}{
\begin{tabular}{|l|l|c|c|c|c|c|}
\hline
\textbf{Task} & \textbf{Dataset} & \textbf{Source} & \textbf{Avg. Words} & \textbf{Metric} & \textbf{Language} & \textbf{Size} \\
\hline
\multicolumn{7}{|c|}{\textbf{Single-Document QA}} \\
\hline
1-1 & NarrativeQA        & Literature, Film       & 18,409 & F1         & English  & 200 \\
1-2 & Qasper             & Science               & 3,619  & F1         & English  & 200 \\
1-3 & MultiFieldQA-en    & Multi-field           & 4,559  & F1         & English  & 150 \\
1-4 & MultiFieldQA-zh    & Multi-field           & 6,701  & F1         & Chinese  & 200 \\
\hline
\multicolumn{7}{|c|}{\textbf{Multi-Document QA}} \\
\hline
2-1 & HotpotQA           & Wikipedia             & 9,151  & F1         & English  & 200 \\
2-2 & 2WikiMultihopQA    & Wikipedia             & 4,887  & F1         & English  & 200 \\
2-3 & MuSiQue            & Wikipedia             & 11,214 & F1         & English  & 200 \\
2-4 & DuReader           & Baidu Search          & 15,768 & Rouge-L    & Chinese  & 200 \\
\hline
\multicolumn{7}{|c|}{\textbf{Summarization}} \\
\hline
3-1 & GovReport          & Government report     & 8,734  & Rouge-L    & English  & 200 \\
3-2 & QMSum              & Meeting               & 10,614 & Rouge-L    & English  & 200 \\
3-3 & MultiNews          & News                  & 2,113  & Rouge-L    & English  & 200 \\
3-4 & VCSUM              & Meeting               & 15,380 & Rouge-L    & Chinese  & 200 \\
\hline
\multicolumn{7}{|c|}{\textbf{Few-shot Learning}} \\
\hline
4-1 & TREC               & Web question          & 5,177  & Accuracy (CLS) & English  & 200 \\
4-2 & TriviaQA           & Wikipedia, Web        & 8,209  & F1             & English  & 200 \\
4-3 & SAMSum             & Dialogue              & 6,258  & Rouge-L        & English  & 200 \\
4-4 & LSHT               & News                  & 22,337 & Accuracy (CLS) & Chinese  & 200 \\
\hline
\multicolumn{7}{|c|}{\textbf{Synthetic Task}} \\
\hline
5-1 & PassageCount       & Wikipedia             & 11,141 & Accuracy (EM)  & English  & 200 \\
5-2 & PassageRetrieval-en& Wikipedia             & 9,289  & Accuracy (EM)  & English  & 200 \\
5-3 & PassageRetrieval-zh& C4 Dataset            & 6,745  & Accuracy (EM)  & Chinese  & 200 \\
\hline
\multicolumn{7}{|c|}{\textbf{Code Completion}} \\
\hline
6-1 & LCC                & Github                & 1,235  & Edit Sim       & Python/C\#/Java & 500 \\
6-2 & RepoBench-P        & Github repository     & 4,206  & Edit Sim       & Python/Java     & 500 \\
\hline
\end{tabular}}
\label{tab:longbench_tasks}
\end{table}

LongBench\footnote{\url{https://huggingface.co/datasets/THUDM/LongBench} (MIT License)}~\citep{longbench} is a benchmark suite designed for long-context evaluation, comprising 14 English tasks, five Chinese tasks, and two code tasks. As Llama does not support Chinese, we excluded the corresponding tasks. Furthermore, we removed the synthetic tasks, as these are already covered by the RULER benchmark. The remaining tasks are grouped into five categories: single-document question answering, multi-document question answering, summarization, few-shot learning, and code completion. For each category, the overall score is calculated as the average of all its subtasks. The final LongBench score is computed as the average across all included tasks. \Cref{tab:longbench_tasks} provides an overview of all tasks, adapted from~\cite{longbench}.

\subsection{RULER}

RULER\footnote{\url{https://github.com/NVIDIA/RULER} (Apache License 2.0)}~\citep{ruler} is a synthetic dataset designed to evaluate the true supported context length of LLMs. It comprises 13 tasks, including eight needle-in-a-haystack (NIAH) retrieval tasks, two aggregation tasks, two question answering (QA) tasks, and one multi-hop tracing task.

The NIAH tasks involve hiding random key-value pairs within generated text and challenging the model to retrieve them. Aggregation tasks simulate summarization by asking the model to extract the most frequent or common words from a given passage. The QA tasks require the model to answer a question about a randomly selected paragraph within the context, serving as a real-world analog to NIAH tasks. In the multi-hop tracing task, the model must identify all variable names that reference the same value within a chain of assignments.

RULER is generated for a range of sequence lengths using randomly generated texts drawn from Paul Graham essays, SQuAD~\citep{rajpurkar2016squad}, and HotPotQA~\citep{yang2018hotpotqa} datasets. This approach enables a comprehensive assessment of a language model’s capability to process varying context lengths. Evaluation is conducted based on accuracy, considering a response correct if it contains the requested value associated with the specified key.

\subsection{MileBench}

\begin{table}[ht]
\centering
\caption{Overview of the MileBench datasets. 
  Average tokens are computed using Qwen2.5-VL~\citep{qwen2.5-vl}.}
\label{tab:milebench_data}
\resizebox{1\linewidth}{!}{
\begin{tabular}{llccccc}
\toprule
\textbf{Category} & \textbf{Dataset} & \textbf{Avg. Words} & \textbf{Avg. Images} & \textbf{Avg. Tokens} & \textbf{Metric}  & \textbf{Size} \\
\midrule
\multirow{3}{*}{\textbf{Temporal}} & EgocentricNavigation & 85 & 45 & 3,079 & Accuracy & 200 \\
& MovingDirection & 62 & 5 & 1,042 & Accuracy & 200 \\
& SceneTransition & 66 & 20 & 5,125 & Accuracy & 200 \\
\midrule
\multirow{3}{*}{\textbf{Semantic}} & SlideVQA & 66 & 2 & 2,053 & Accuracy & 200 \\
& TQA & 50 & 8 & 5,536 & Accuracy & 200 \\
& WebQA & 146 & 2 & 1,706 & Accuracy & 200 \\
\bottomrule
\end{tabular}}
\end{table}

MileBench\footnote{\url{https://milebench.github.io} (Apache License 2.0)}~\citep{milebench} is a long-context benchmark designed to evaluate Multimodal Large Language Models (MLLMs). It comprises 29 multi-image-text datasets, organized into 12 tasks, which are further grouped into four categories: Temporal Multi-Image, Semantic Multi-Image, and two diagnostic categories—NIAH and Image Retrieval.

For our additional experiments in \cref{app:add_res_milebench}, we selected three datasets each from the Temporal Multi-Image and Semantic Multi-Image categories: EgocentricNavigation, MovingDirection, and SceneTransition for the Temporal Multi-Image category, and SlideVQA, TQA, and WebQA for the Semantic Multi-Image category. \cref{tab:milebench_data} provides an overview of the selected datasets.

\section{Experimental Setup}
\label{app:exp_details}

\subsection{Hyperparameter Settings}

\begin{table}[ht]
\centering
\caption{Prompt compression backbones and parameter counts.}
\resizebox{0.7\linewidth}{!}{
\begin{tabular}{lll}
\toprule
\textbf{Method} & \textbf{Backbone}           & \textbf{Parameters} \\
\midrule
% LLMLingua       & Llama-2-7b-hf               & 7B                  \\
% LLMLinguaLong   & Llama-2-7b-hf               & 7B                  \\
LLMLingua-2~\citep{pan2024llmlingua2}     & xlm-roberta-large~\citep{conneau2020roberta}           & 560M                \\
CPC~\citep{liskavets2024cpc}             & Llama-3.2-1B~\citep{grattafiori2024llama}                & 1B                  \\
R2C~\citep{choi2024r2c}             & T5-base~\citep{t5base}                     & 220M                \\
\bottomrule
\end{tabular}}
\label{tab:method_backbones}
\end{table}

\begin{table}[ht]
\centering
\caption{Summary of hyperparameters for various methods.}
\resizebox{\linewidth}{!}{
\begin{tabular}{lccccccccc}
\toprule
\textbf{Hyperparameter} & \textbf{StreamingLLM} & \textbf{H2O} & \textbf{SnapKV} & \textbf{PyramidKV} & \textbf{Ada-SnapKV} & \textbf{LAQ++} & \textbf{\speckv{}} & \textbf{SpecPrefill} & \textbf{\specpc{}} \\
\midrule
Window size $\ncompwnd$                & 32   & 32   & 32   & 32   & 32& 32& 32 & 1   & 64 \\
Pool                                         & --   & --   & Max  & Max& Max & Max  & Max   & Avg & Avg \\
Kernel size $\poolk$                                & --   & --   & 7& 7    & 7 & 7   & 7   & 13  & 64/32 \\
Reduction                                    & --   & --   & Mean & Mean& Mean & Max  & Max & Mean-Max  & Max \\
\# lookahead tokens $\nlookahead$ & --   & --   & --   & -- & -- & 8 & All  & 8 & 1 \\
Compression window size $\nprewnd$ & --   & --   & --   & -- & -- & --  & 2048 & -- & -- \\
\# global tokens in prefill $\nglobal$        & --   & -- & --  & -- & --  & --   & 2048 & -- & -- \\
\# neighbors $\nneigh$ / chunk size & --   & --   & -- & --  & --   & -- & -- & 32 & 64/32 \\
\# skipped layers $\lskip$                     & --   & --   & -- & --  & -- & --  & -- & --   & 8 \\
Initial cache size                     & --   & --   & -- & --  & --   & $\cmax$ & -- & --  & -- \\
\bottomrule
\end{tabular}}
\label{tab:hyperparameters}
\end{table}

\Cref{tab:method_backbones} lists the backbone models employed by each prompt compression method, while \cref{tab:hyperparameters} details the hyperparameters used in our experiments. Generally, we select hyperparameters for each method based on their respective codebases. We observe that using max aggregation improved performance compared to mean aggregation for \speckv{} and \specpc{}. For \speckv{}, setting $\nprewnd$ and $\nglobal$ to 2048 resulted in minimal accuracy loss but substantially reduced latency (\cref{app:sec:vert}).

For \speckv{}, we always generate tokens until the draft model produces the EOS token, which yields the best performance. For latency measurements, we set $\nlookahead=64$ tokens, reflecting the average sequence length in our benchmarks. In \specpc{}, prompt compression drops tokens uniformly across all layers and heads (unlike SpecKV, which prunes per head), so a larger $\cmax$ is needed to retain relevant information. While a larger $\nlookahead$ can boost performance, in practice, generating only one token per prompt ($\nlookahead=1$) is usually sufficient. Strong alignment between the draft and target model attentions enables \specpc{} to outperform methods like R2C and CPC. For an ablation on $\nlookahead$, see \cref{app:fig:nlookahead}.

Retaining the local context for prompt compression proved essential. This observation aligns with the design of existing prompt compression methods, which typically aim to preserve entire sentences within the prompt. Consequently, we increase both the pooling kernel size ($\poolk$) and the number of neighboring tokens ($\nneigh$) to 64. For Llama, slightly better results are achieved by reducing both $\poolk$ and $\nneigh$ to 32, though the performance difference was marginal.

For all remaining methods not explicitly mentioned, we use the default configurations provided in their respective codebases.

\subsection{Implementation Details}

For our experimental results, we employ the following large language models: 
\textbf{Llama-3.2-1B-Instruct}\footnote{\url{https://huggingface.co/meta-llama/Llama-3.2-1B-Instruct}~(Llama 3.2 license)}, 
\textbf{Llama-3.1-8B-Instruct}\footnote{\url{https://huggingface.co/meta-llama/Llama-3.1-8B-Instruct}~(Llama 3.1 license)}, 
\textbf{Llama-3.1-70B-Instruct (4-bit)}\footnote{\url{https://huggingface.co/meta-llama/Llama-3.1-70B-Instruct}~(Llama 3.1 license)}, 
\textbf{Qwen2.5-0.5B-Instruct}\footnote{\url{https://huggingface.co/Qwen/Qwen2.5-0.5B-Instruct}~(Apache License 2.0)}, 
\textbf{Qwen2.5-14B-Instruct}\footnote{\url{https://huggingface.co/Qwen/Qwen2.5-14B-Instruct}~(Apache License 2.0)}, 
\textbf{Qwen2.5-32B-Instruct}\footnote{\url{https://huggingface.co/Qwen/Qwen2.5-32B-Instruct}~(Apache License 2.0)}, and 
\textbf{Qwen2.5-72B-Instruct-GPTQ-Int4}\footnote{\url{https://huggingface.co/Qwen/Qwen2.5-72B-Instruct-GPTQ-Int4}~(Apache License 2.0)}. 
For MLLM evaluation on MileBench~\citep{milebench}, we utilize \textbf{Qwen2.5-VL-3B-Instruct-AWQ}\footnote{\url{https://huggingface.co/Qwen/Qwen2.5-VL-3B-Instruct-AWQ}~(Apache License 2.0)} and 
\textbf{Qwen2.5-VL-32B-Instruct-AWQ}\footnote{\url{https://huggingface.co/Qwen/Qwen2.5-VL-32B-Instruct-AWQ}~(Apache License 2.0)}.

Our implementation is based on \textsc{PyTorch}~\citep{pytorch} (BSD-3 License) and Huggingface's \textsc{Transformers}~\citep{huggingface} (Apache License 2.0). 
All experiments leverage \textsc{FlashAttention-2}\footnote{\url{https://github.com/Dao-AILab/flash-attention}~(BSD 3-Clause License)}~\citep{flashattn2}.
Latency measurements are performed using \textsc{\vllm{}}\footnote{\url{https://github.com/vllm-project/vllm}~(Apache License 2.0)} wherever possible 
(i.e., where attention map outputs are not required). 
For implementing the sparse prefill mechanism of \speckv{}, we use kernels from 
\textsc{MInference}\footnote{\url{https://github.com/microsoft/MInference}~(MIT License)}. 
All methods are evaluated via greedy decoding.
Experiments are conducted on NVIDIA H100 80GB GPUs, with runtimes varying by context length; for a maximum context length of 64K tokens, experiments take up to 2 hours.

For evaluating \textsc{StreamingLLM}~\citep{streamingllm}, \textsc{H2O}~\citep{h2o}, \textsc{SnapKV}~\citep{snapkv}, and \textsc{PyramidKV}~\citep{pyramidkv}, we use implementations from \textsc{KVCache-Factory}\footnote{\url{https://github.com/Zefan-Cai/KVCache-Factory}~(MIT License)}. In this library, the \textsc{StreamingLLM} and \textsc{H2O} implementations drop KV once after prefill, rather than at each decoding step, differing from their original codebases. This adjustment enables fairer comparison to \textsc{SnapKV} and others. 
We extend \textsc{KVCache-Factory} to support Grouped Query Attention~\citep{gqa} by repeating keys and values for each KV head, computing attention within the window, and averaging across KV heads. This approach avoids duplicating the KV cache.
For other baselines, we use their official implementations.
For Lookahead Q-Cache (LAQ++), we provide our own implementation since their code is not publicly available.

\section{Extended Experimental Analysis}
\label{app:add_res}

\subsection{Additional Results on RULER and LongBench}
\label{app:sec:add_models_exp}

In this section, we present results for RULER and LongBench using various $\cmax$ values of 256, 512, and 1024 for KV dropping, and 1024, 2048, and 3072 for prompt compression. Specifically, we employ Qwen2.5-32B-Instruct, Llama-3.1-70B-Instruct (4-bit quantized with bitsandbytes\footnote{\url{https://github.com/bitsandbytes-foundation/bitsandbytes} (MIT License)}), and Qwen2.5-72B-Instruct-GPTQ-Int4 as target model. 
% To explore the impact of draft model size in \speckv{}, we extend our experiments to larger drafts—Qwen2.5-1.5B-Instruct, Qwen2.5-3B-Instruct, and Llama-3.2-3B-Instruct—in addition to the previously used Qwen2.5-0.5B-Instruct and Llama-3.2-1B-Instruct. This enables us to systematically examine how increasing the draft size improves performance. For \specpc{}, we find Qwen2.5-0.5B-Instruct and Llama-3.2-1B-Instruct already deliver sufficient performance.

\cref{app:fig:extra_ruler_kv,app:fig:extra_ruler_pc} show the RULER results, and
\cref{app:tab:extra_longbench_qwen25_32b,app:tab:extra_longbench_qwen25_72b,app:tab:extra_longbench_llama31_70b} present the LongBench results. Overall, our methods achieve higher accuracy than the baselines in most settings, especially with small $\cmax$. Specifically, \speckv{} significantly outperforms SnapKV and LAQ++ on RULER in most cases. Similarly, \specpc{} consistently achieves strong results, particularly at longer sequence lengths on RULER. On LongBench, both of our methods also surpass the baselines.

\subsection{Integration and Comparison with the AdaKV Baseline}
\label{app:sec:adakv}

\cref{fig:ruler_llama_qwen_full} (RULER), \cref{tab:qwen_longbench_full,tab:llama_longbench_full} (LongBench) present the performance of our proposed methods on Qwen2.5 (0.5B draft, 14B target) and Llama-3 (1B draft, 8B target). We include additional baselines, notably AdaKV~\citep{adakv}, for various values of $\cmax$. AdaKV is an extension to SnapKV and allows different KV budget per attention head. We apply AdaKV to SpecKV in a similar fashion, which further boosts performance in our experiments.

% While increasing the budget $\cmax$ improves performance across all methods and narrows the gap between our approaches and the baselines, our proposed methods consistently outperform the alternatives.

% \kg{put full ruler table (categories)?}

\subsection{Multi-modal Evaluation}
\label{app:add_res_milebench}

We conduct additional experiments using Qwen2.5-VL-3B-Instruct-AWQ (draft) and Qwen2.5-VL-32B-Instruct-AWQ (target) on six MileBench~\citep{milebench} datasets. We select three datasets each from the Temporal Multi-Image (EgocentricNavigation, MovingDirection, SceneTransition) and Semantic Multi-Image (SlideVQA, TQA, WebQA) categories. We focus on these datasets because, for Qwen2.5-VL, the performance gap between draft and target models is most significant; in other cases, the models perform too similarly or the draft even outperforms the target.

For KV dropping, we evaluate H2O, SnapKV, and PyramidKV from our prior experiments. We do not include AdaKV in our evaluation as it is dependent on an older Transformers~\citep{huggingface} version incompatible with Qwen2.5-VL. For prompt compression, we compare with FastV~\citep{fastv}—a method specialized for dropping image tokens inside LLMs. FastV uses a hyperparameter $k$: it runs all tokens up to layer $k$, then drops less-attended image tokens based on the attention map, processing only the top tokens thereafter. This makes FastV less efficient than \specpc{}, since all tokens must be processed up to $k$ with the full model, requiring considerable memory. Notably, FastV must compute the entire attention map at layer $k$, preventing the use of FlashAttention and leading to out-of-memory errors, even for moderate sequence lengths. As a result, many MileBench datasets exceed 80GB VRAM, so we limit our analysis to these six datasets.

Since the selected MileBench datasets have relatively short average context lengths, we conduct experiments using reduced $\cmax$ values for both KV cache dropping (64, 96, and 128) and prompt compression (512, 768, and 1024).

\cref{app:fig:milebench_speckv} presents results for various KV dropping methods. Our proposed method, \speckv{}, demonstrates performance comparable to existing approaches, while significantly outperforming the others on the WebQA task.

\cref{app:fig:milebench_specpc} compares the performance of \specpc{} and FastV under two configurations ($k=2$ and $k=5$). Our method consistently outperforms FastV in most cases.

\revised{\subsection{\speckv{}: Impact of $n_\text{lookahead}$ Parameter on Importance Score Correlation}}
\label{app:sec:speckv_corr}

\begin{figure}[h]
\centering
\includegraphics[width=0.4\textwidth]{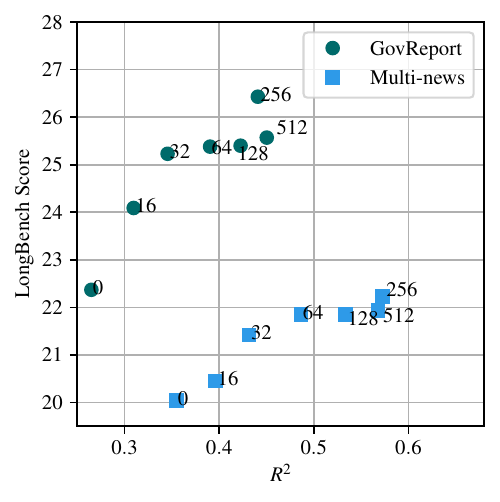}
\caption{\revised{Impact of $\nlookahead$ on SpecKV importance score accuracy ($R^2$) and LongBench downstream performance ($\nout=512$). The plot shows a strong positive correlation: as $\nlookahead$ increases (with $\nlookahead=0$ being equivalent to SnapKV), both the $R^2$ (correlation with the true target model scores) and the downstream task score improve. Notably, SpecKV improves correlation and accuracy even with a small $\nlookahead=32$, which is only 6.25\% of the output length. Experiments use a Qwen2.5-0.5B draft model and a Qwen2.5-32B target model on two LongBench tasks.}}
\label{fig:speckv_r2_vs_score}
\end{figure}

%{figures/app/memory/mem_kv_256.pdf}
%{figures/app/memory/mem_pc_1024.pdf}

We analyze the impact of the $\nlookahead$ parameter on the correlation ($R^2$) between \speckv{}'s estimated importance scores and the ground-truth scores from the target model. For this analysis, we use 50 examples from the Multi-news and GovReport tasks from LongBench, selected for their long maximum output length (512 tokens). We employ Qwen2.5-0.5B as the draft model and Qwen2.5-32B as the target model, with $\cmax=256$.

To establish the ground-truth scores, we use the target model's generated output (up to $\nout$) as a perfect lookahead sequence and record the resulting importance scores. In \cref{fig:speckv_r2_vs_score,fig:speckv_imp_corr_multi_news,fig:speckv_imp_corr_gov_report}, we compute the $R^2$ correlation between these ground-truth scores and the scores estimated by \speckv{} using various $\nlookahead$ values.

As shown in \cref{fig:speckv_imp_corr_multi_news,fig:speckv_imp_corr_gov_report}, the $R^2$ correlation steadily increases with $\nlookahead$, confirming that a larger lookahead provides a more accurate importance estimation. This benefit is particularly pronounced for longer output sequences ($\noutput$). We also test $\nlookahead=0$ (equivalent to SnapKV), which yields a significantly inferior correlation.

Furthermore, we connect this score accuracy to downstream performance. \cref{fig:speckv_r2_vs_score} plots the $R^2$ value against the final LongBench downstream score. We observe that these two metrics are highly correlated: a higher $R^2$ (better score accuracy) generally yields better downstream performance. Consequently, increasing $\nlookahead$ improves both the $R^2$ correlation and the final task score.

\revised{\subsection{Performance of Cross-Family Models}}
\label{app:sec:cross_model}

\begin{figure}[h]
\centering
\begin{subfigure}[b]{0.495\textwidth}
\centering
\includegraphics[width=\textwidth]{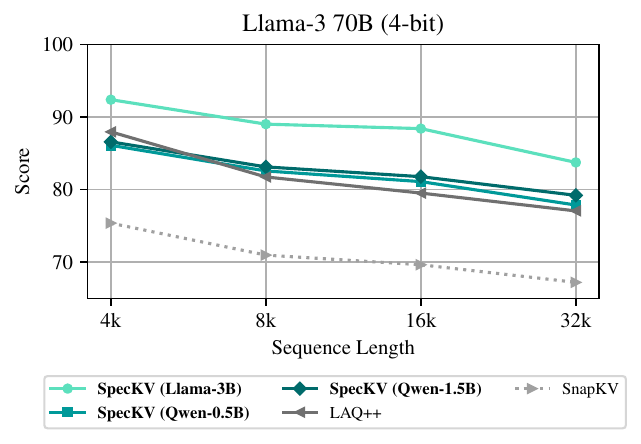}
\caption{KV cache dropping.}
% \label{fig:mem_latency:speckv}
\end{subfigure}
\hfill
\begin{subfigure}[b]{0.495\textwidth}
\centering
\includegraphics[width=\textwidth]{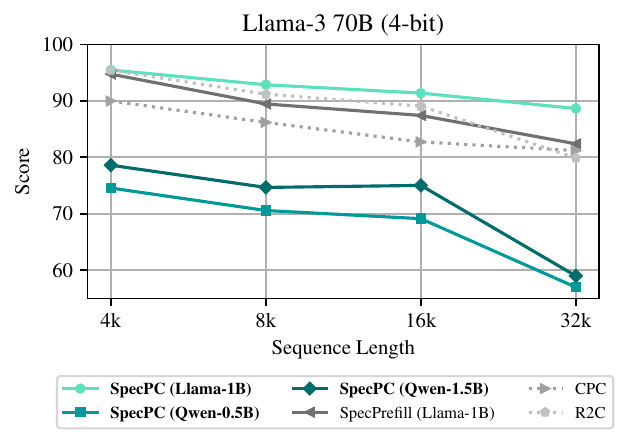}
\caption{Prompt compression.}
% \label{fig:mem_latency:specpc}
\end{subfigure}
\caption{\revised{Cross-family model results for \speckv{} and \specpc{} on RULER. \speckv{} demonstrates robust performance even with draft models from different families, outperforming the strongest baseline, LAQ++.}}
\label{fig:cross_ruler_llama70b_plot}
\end{figure}
\begin{table}
\centering
\caption{\revised{Cross-family model results for \speckv{} and \specpc{} on LongBench using Llama-3.1-70B (4-bit) as target model.}}
\resizebox{\linewidth}{!}{
\begin{tabular}{lllcccccc}
\toprule
  & $\cmax$ & \textbf{Method} & \makecell{\textbf{Single-}\\\textbf{doc QA}} & \makecell{\textbf{Multi-}\\\textbf{doc QA}} & \textbf{Summary} & 
\makecell{\textbf{Few-shot}\\\textbf{Learning}} & \makecell{\textbf{Code}\\\textbf{Completion}} & 
\textbf{All} \\
\midrule
\multirow{5}{*}{KV}  &  \multirow{5}{*}{256} 
&     SnapKV     & \textbf{55.88} & 45.30 & 22.49 & 62.15 & 55.49 & 47.75 \\
   &       &    LAQ++   & \underline{54.90} & 46.48 & 22.83 & \underline{64.31} & 55.10 & 48.43 \\
   &       &     \textbf{\speckv{} (Llama-3B)}     & 51.80 & \textbf{47.23} & \textbf{25.53} & 64.02 & \textbf{58.75} & \textbf{48.80} \\
     &       &     \textbf{\speckv{} (Qwen-0.5B)}  & 53.18 & 45.70 & 24.40 & 63.69 & 57.36 & 48.26 \\
     &       &     \textbf{\speckv{} (Qwen-1.5B)}  & 51.20 & \underline{47.07} & \underline{24.98} & \textbf{65.81} & \underline{57.53} & \underline{48.73} \\
\midrule
\multirow{6}{*}{PC}  &  \multirow{6}{*}{1024} 
&    CPC    & 45.14 & 39.41 & 24.86 & 61.40 & 37.58 & 41.97 \\
  &     &    R2C    & 48.93 & 42.01 & 25.38 & 58.91 & 40.19 & 43.29 \\
   &       &    SpecPrefill (Llama-1B)   & \underline{54.62} & \textbf{46.43} & 25.63 & 64.80 & 44.92 & \underline{48.37} \\
  &     &    \textbf{\specpc{} (Llama-1B)}    & \textbf{56.84} & \underline{44.48} & \textbf{25.91} & \textbf{67.37} & 47.15 & \textbf{48.44} \\
     &       &     \textbf{\specpc{} (Qwen-0.5B)}  & 36.73 & 36.42 & 24.68 & 64.51 & \textbf{50.75} & 42.04 \\
     &       &     \textbf{\specpc{} (Qwen-1.5B)}  & 51.58 & 39.53 & \underline{25.87} & \underline{66.43} & \underline{50.25} & 46.48 \\
\bottomrule
\end{tabular}}
\label{tab:cross_model_llama31_70b_longbench}
\end{table}

In this section, we evaluate the effectiveness of our methods using cross-family draft models. We set Llama-3.1-70B as the target model and employ Qwen2.5-0.5B and Qwen2.5B-1.5B as draft models. The evaluation is conducted on the RULER and LongBench benchmarks, using $\cmax=256$ for \speckv{} and $\cmax=1024$ for \specpc{}.

A key challenge in this setup is that the Qwen and Llama families use different tokenizers, necessitating a token translation step. For \speckv{}, we de-tokenize the draft model's output and then re-tokenize it with the target model's tokenizer. For \specpc{}, we aggregate attention scores from tokens to words before re-tokenizing, to avoid tokenizing partial words, and replace the draft model's chat template with the target's.

The results, shown in \cref{fig:cross_ruler_llama70b_plot} (RULER) and \cref{tab:cross_model_llama31_70b_longbench} (LongBench), indicate that both \speckv{} and \specpc{} achieve good results even with cross-family drafts. Notably, \speckv{} outperforms the strongest baseline, LAQ++. This aligns with the intuition that \speckv{}, which relies only on the draft model's output, should generalize well to cross-model scenarios. \specpc{} experiences a performance drop because it relies on similar attention patterns between the draft and target models, a condition that may not hold in cross-family setups.

\revised{\subsection{Extended Results for SpecKV-PC}}
\label{app:sec:combined_speckv_specpc}

This section presents extended results for the cascaded compression strategy, SpecKV-PC. By restricting the large target model to process only a small fraction of the original prompt, this approach achieves substantially better latency and memory efficiency than \speckv{} alone.

We evaluate the impact of varying prompt compression ratios on accuracy using Qwen2.5-1.5B/32B and Llama-3-3B/70B as draft/target pairs, with a final KV cache size of $\cmax=256$. Our combined method, denoted as SpecKV-PC-X (where the prompt is compressed to X tokens), is benchmarked against standard \speckv{}, \specpc{}, LAQ++, and SnapKV.

As illustrated in \cref{fig:combined_speckv_specpc_ruler_qwen32b_llama70b_kv_256} (RULER) and \cref{tab:combined_speckv_specpc_longbench} (LongBench), SpecKV-PC proves to be both efficient and highly accurate. Notably, we find that moderate pre-compression (e.g., SpecKV-PC-2048) yields accuracy superior to standard \speckv{} alone, particularly at the longer sequence lengths (RULER). This suggests that the initial \specpc{} stage acts as an effective pre-filter, discarding irrelevant information to help the target model focus on the most important tokens.

Conversely, extreme prompt compression (e.g., \specpc{}-256) results in significant performance degradation. This indicates that for aggressive compression targets, coarse prompt-level reduction is insufficient.

\begin{figure}[h]
\centering
\includegraphics[width=0.5\textwidth]{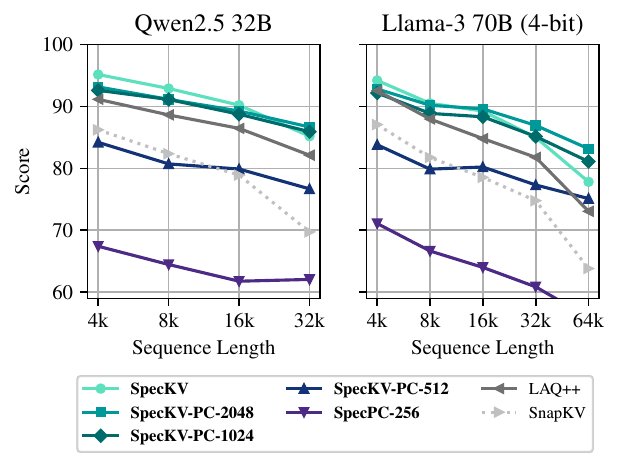}
\caption{\revised{
Performance of the combined \speckv{} and \specpc{} methods on RULER (final $\cmax=256$). 
Our cascaded approach (SpecKV-PC-X) pre-compresses the prompt to X tokens, achieving higher accuracy than standard \speckv{} (which sees the full prompt) at longer sequences. 
This suggests the initial pre-filtering allows \speckv{} to make a more effective final selection.
}}
\label{fig:combined_speckv_specpc_ruler_qwen32b_llama70b_kv_256}
\end{figure}

%{figures/app/memory/mem_kv_256.pdf}
%{figures/app/memory/mem_pc_1024.pdf}
\begin{table}[h!]
\centering
% \caption{LongBench performance with Qwen2.5-1.5B/0.5B (draft) and Qwen2.5-14B (target), and Llama-3.2-1B (draft) and Llama-3.1-8B (target).}
\caption{\revised{LongBench performance of cascaded compression with \speckv{} and \specpc{} using Qwen2.5 and Llama-3. SpecKV-PC-2048 achieves superior performance.}}
\resizebox{\linewidth}{!}{
\begin{tabular}{llcccccc|cccccc}
\toprule
 & & \multicolumn{6}{c|}{\textbf{Qwen2.5 32B}} & \multicolumn{6}{c}{\textbf{Llama-3 70B (4-bit)}} \\
\cmidrule(lr){3-8}\cmidrule(lr){9-14}
\textbf{Group} & \textbf{Method} 
 & \rotatebox{45}{SingleQA} & \rotatebox{45}{MultiQA} & \rotatebox{45}{Summ.} & \rotatebox{45}{Few-shot} & \rotatebox{45}{Code} & \rotatebox{45}{All}
 & \rotatebox{45}{SingleQA} & \rotatebox{45}{MultiQA} & \rotatebox{45}{Summ.} & \rotatebox{45}{Few-shot} & \rotatebox{45}{Code} & \rotatebox{45}{All} \\
\midrule
\multirow{1}{*}{Dense}
& Target  & 56.01 & 43.99 & 25.90 & 64.06 & 44.74 & 47.78 & 55.02 & 47.06 & 28.61 & 70.47 & 48.19 & 49.99 \\
\midrule
\multirow{6}{*}{KV}
  &   SnapKV    & 52.54 & 40.21 & 19.89 & 61.18 & 40.12 & 42.98 & 55.88 & 45.30 & 22.49 & 62.15 & 55.49 & 47.75 \\
  &   LAQ++    & \underline{55.15} & \underline{44.14} & 22.24 & 63.25 & 41.19 & 45.79 & 54.90 & 46.48 & 22.83 & \underline{64.31} & 55.10 & 48.43 \\
  &   \textbf{SpecKV}    & 53.48 & 43.77 & 24.02 & \textbf{63.79} & \underline{44.80} & \underline{46.06} & 51.80 & \underline{47.23} & 25.53 & 64.02 & \textbf{58.75} & 48.80 \\
   &   \textbf{SpecKV-PC-2048}  & 52.60 & \textbf{44.52} & \underline{24.11} & \underline{63.38} & \textbf{48.45} & \textbf{46.48} & \textbf{61.42} & 47.15 & \textbf{26.51} & \textbf{66.94} & \underline{58.19} & \textbf{51.60} \\
   &   \textbf{SpecKV-PC-1024}  & \textbf{55.37} & 42.21 & \textbf{24.14} & 63.09 & 39.72 & 45.28 & \underline{58.73} & \textbf{47.33} & \underline{25.69} & 64.28 & 55.19 & \underline{49.89} \\
   &   \textbf{SpecKV-PC-512}   & 45.77 & 36.57 & 22.36 & 56.57 & 39.57 & 40.21 & 51.73 & 46.11 & 23.01 & 63.64 & 50.00 & 46.68 \\
  \midrule
\multirow{1}{*}{PC}
  &  \textbf{SpecPC-256}   & 34.75 & 27.05 & 19.15 & 45.0 & 35.07 & 32.0 & 27.97 & 25.93 & 19.09 & 51.25 & 48.12 & 33.5 \\
\bottomrule
\end{tabular}}
\label{tab:combined_speckv_specpc_longbench}
\end{table}

\revised{\subsection{\speckv{}: Extended Latency and Memory Analysis}}
\label{app:sec:ext_lat_and_mem}

We analyze the latency and memory usage of baselines and our algorithms, breaking down latency into three stages: draft generation, target prefill, and target decoding. For LAQ++, draft generation occurs after target prefill to share the same KV cache.

All experiments are run on an H200 GPU (141 GB VRAM) using the Qwen2.5 model (3B draft, 32B target). We evaluate across a range of input sizes, $\ninput \in \{4\text{k}, 8\text{k}, 16\text{k}, 32\text{k}, 64\text{k}\}$, and output sizes, $\noutput \in \{64, 128, 256, 512\}$. A global $\cmax=256$ is used for all experiments. For algorithm-specific settings, we set $\nlookahead = \nout$ for SpecKV and use the default $\nlookahead = 8$ for LAQ++. For the SpecKV-PC variant, we employ SpecKV-PC-2048, which pre-compresses the prompt to 2048 tokens.

Our results demonstrate that draft lookahead introduces negligible latency (\cref{fig:latency_breakdown_plot}) and memory overhead (\cref{fig:memory_breakdown_plot}) relative to the total cost of target model inference. Notably, cascaded compression with \speckv{} and \specpc{} (SpecKV-PC) significantly reduces prefill time and peak memory. At 64k context, SpecKV-PC is about 40\% faster and 25 GB more memory efficient than LAQ++.

\subsection{Ablation Studies}

In this section, we conduct a series of ablation experiments to further analyze the effectiveness of our two proposed methods: \speckv{} and \specpc{}. For consistency, we fix $\cmax$ to 256 for KV dropping and 1024 for prompt compression across all experiments. We utilize Qwen2.5 (Instruct), employing the 0.5B model as the draft and the 14B model as the target. We sample 100 random examples per task from the LongBench and RULER benchmarks.

\subsubsection{\speckv{}: Analysis of Enhanced Draft Models}

\cref{app:fig:better_draft} illustrates the impact of using draft models of different sizes and versions on the RULER score. As anticipated, both newer~\citep{qwen3} and larger draft models lead to improved performance of \speckv{}.

\subsubsection{\speckv{} and \specpc{}: Impact of $n_\text{lookahead}$}

\cref{app:fig:nlookahead} illustrates how varying the number of generated draft tokens, $\nlookahead$, affects the performance of \speckv{} and \specpc{}. Overall, increasing $\nlookahead$ generally results in higher final accuracy for \speckv{}, whereas it yields only marginal improvements for \specpc{}. We attribute this to the larger $\cmax$ budget of \specpc{}, which allows it to capture all important tokens without needing to generate long drafts.

\subsubsection{\speckv{}: Accuracy Analysis of Sparse Prefill}
\label{app:sec:vert}

In this section, we experimentally evaluate how the sparsity of \speckv{}'s prefill procedure affects downstream task performance (\cref{app:fig:nvert}). Specifically, we set $\nglobal$ equal to $\nprewnd$ and compare several values for these parameters. As anticipated, reducing sparsity (i.e., using a higher $\nglobal$) generally results in higher accuracy; however, accuracy improvements plateau at $\nglobal = 2048$, which we therefore adopt for our main experiments.

Interestingly, for certain LongBench categories, increased sparsity (i.e., lower $\nglobal$) can actually lead to improved performance. This counterintuitive result suggests that, for some tasks, sparser prefill may serve as a form of regularization, preventing overfitting to irrelevant context.

\section*{LLM Usage Disclosure}
We utilized Google's Gemini 2.5 Pro~\citep{gemini25pro} and OpenAI's GPT~\citep{gpt4} to assist with improving the grammar, clarity, and readability of this manuscript. The authors reviewed and edited all LLM-generated suggestions to ensure the final text accurately reflects our scientific contributions and claims. The authors retain full responsibility for the content of this paper.

% \kg{50 samples, 4bit quant for 70b/72b, only ruler?, do various cmax?, introduce new model: qwen3, only do best methods, use 32 or 64 for llama70b?, say which draft and target, cite gptq and bitsandbytes}

% \input{figures/app/nvert/nvert}

\clearpage

\begin{figure}[hp]
    \centering
     \begin{subfigure}[b]{1\textwidth}
         \centering
     \includegraphics[width=\textwidth]{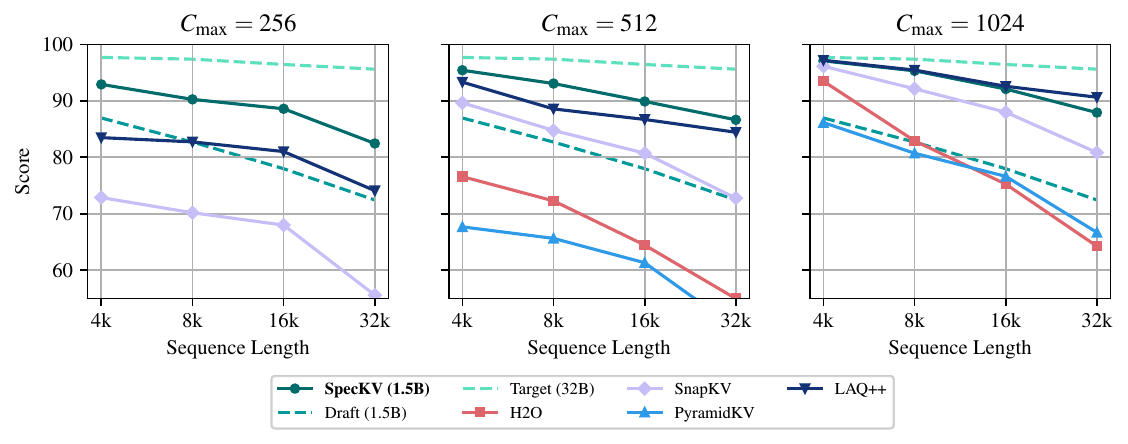}
     \caption{Qwen2.5: Draft model (1.5B), target model (32B)\footnotemark}
     \end{subfigure}
     \par\bigskip
     \begin{subfigure}[b]{1\textwidth}
      \includegraphics[width=\textwidth]{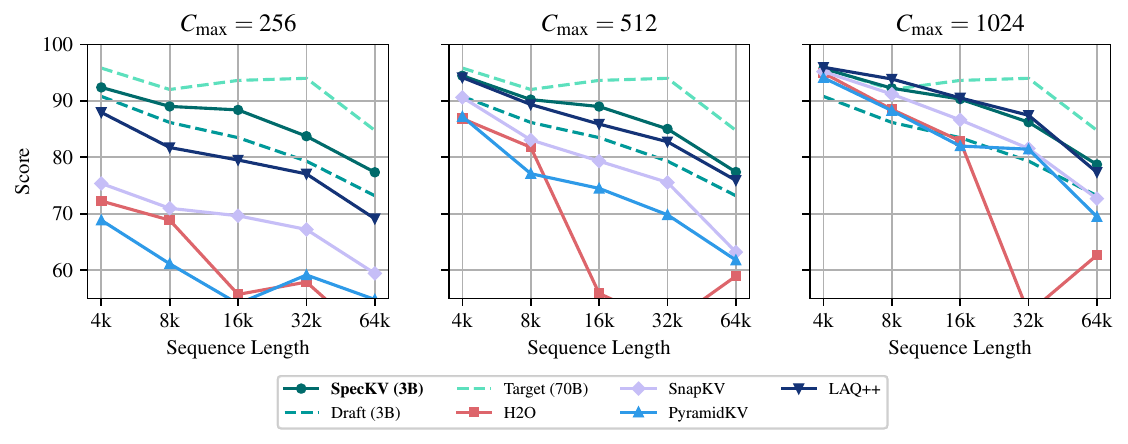}
     \caption{Llama-3: Draft model (3.2-3B), target model (3.1-70B, 4-bit)}
     \end{subfigure}
     \par\bigskip
     \begin{subfigure}[b]{1\textwidth}
      \includegraphics[width=\textwidth]{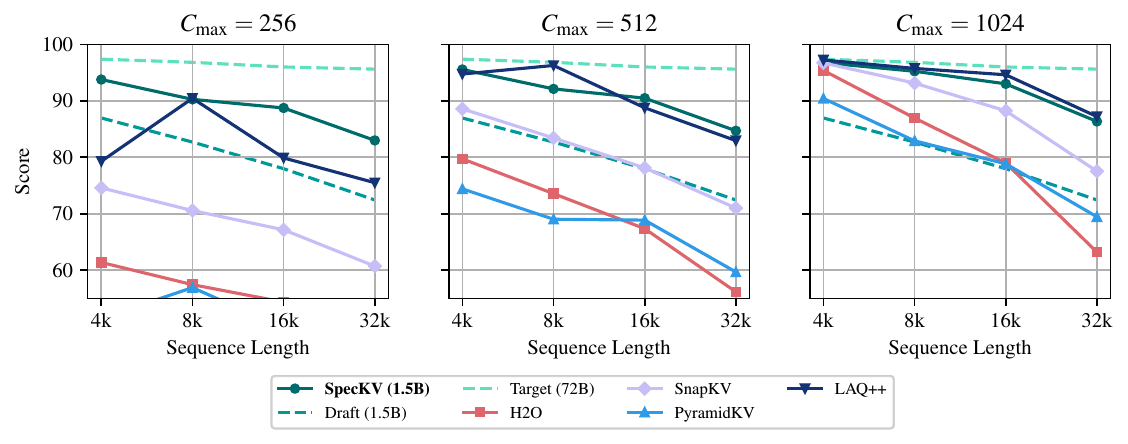}
     \caption{Qwen2.5: Draft model (1.5B), target model (72B, 4-bit)}
     \end{subfigure}
\caption{Extended RULER results for KV cache dropping. Our proposed \speckv{} method consistently outperforms SnapKV and LAQ++ across the majority of evaluated settings, often by a substantial margin.}
\label{app:fig:extra_ruler_kv}
\end{figure}
\footnotetext{H2O and PyramidKV are not plotted for Qwen2.5 at $\cmax=256$ as their performance falls outside the visible range.}
\begin{figure}[hp]
    \centering
     \begin{subfigure}[b]{1\textwidth}
         \centering
\includegraphics[width=\textwidth]{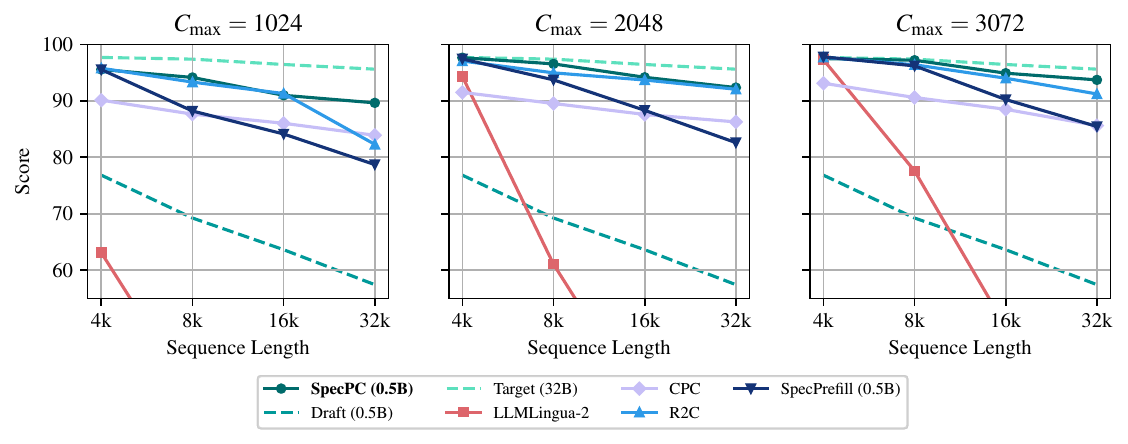}
     \caption{Qwen2.5: Draft model (0.5B), target model (32B)}
     \end{subfigure}
     \par\bigskip
     \begin{subfigure}[b]{1\textwidth}
      \includegraphics[width=\textwidth]{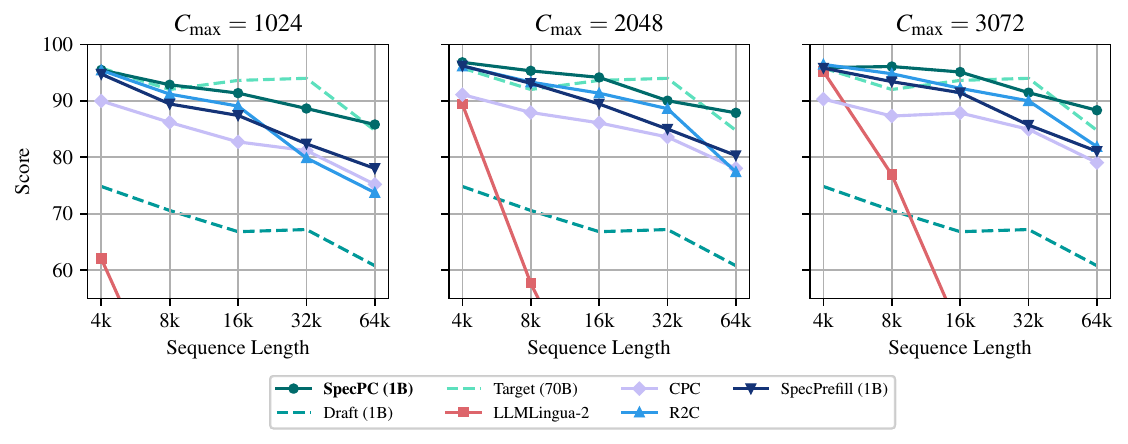}
     \caption{Llama-3: Draft model (3.2-1B), target model (3.1-70B, 4-bit)}
     \end{subfigure}
     \par\bigskip
     \begin{subfigure}[b]{1\textwidth}
      \includegraphics[width=\textwidth]{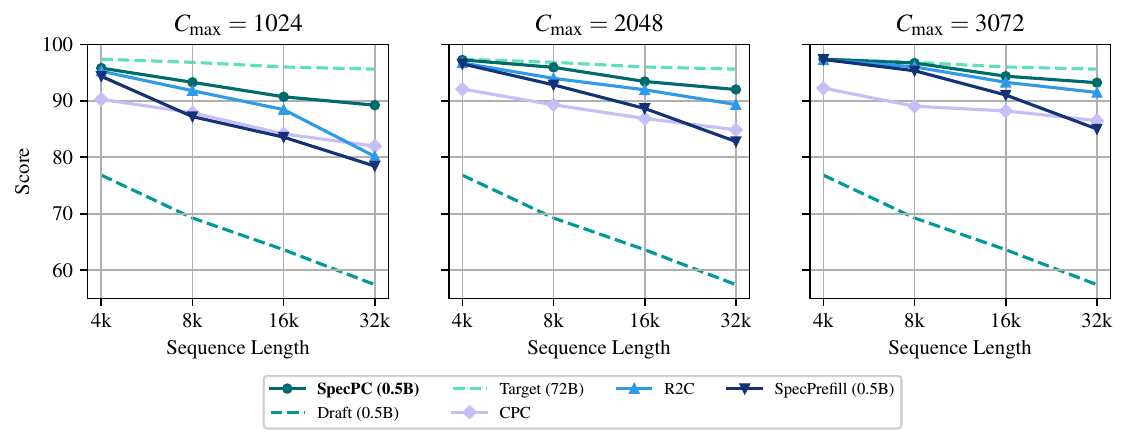}
     \caption{Qwen2.5: Draft model (0.5B), target model (72B, 4-bit)}
     \end{subfigure}
\caption{Extended RULER results on prompt compression. Our proposed \specpc{} method consistently outperforms CPC, R2C, and SpecPrefill, maintaining strong performance even on long sequences.}
\label{app:fig:extra_ruler_pc}
\end{figure}

% \begin{figure}[hp]
%     \centering
%      \begin{subfigure}[b]{1\textwidth}
%          \centering
%      \includegraphics[width=\textwidth]{figures/app/extra_models/ruler/kv_qwen2.5-32b-instruct_plot.pdf}
%      \caption{Qwen2.5: Draft Models (0.5B, 1.5B) and Target Model (32B)}
%      \end{subfigure}
%      \par\bigskip
%      \begin{subfigure}[b]{1\textwidth}
%       \includegraphics[width=\textwidth]{figures/app/extra_models/ruler/kv_qwen2.5-72b-instruct-gptq-int4_plot.pdf}
%      \caption{Qwen2.5: Draft Models (0.5B, 3B) and Target Model (72B, 4-bit)}
%      \end{subfigure}
%      \par\bigskip
%      \begin{subfigure}[b]{1\textwidth}
%       \includegraphics[width=\textwidth]{figures/app/extra_models/ruler/kv_llama-3.1-70b-instruct_plot.pdf}
%      \caption{Llama-3: Draft Models (3.2-1B, 3.2-3B) and Target Model (3.1-70B, 4-bit)}
%      \end{subfigure}
% \caption{RULER results of additional larger models on KV cache dropping. Our proposed \speckv{} method consistently outperforms SnapKV across the majority of evaluated settings, often by a substantial margin. Utilizing a larger draft model leads to further performance improvements.}
% \label{app:fig:extra_ruler_kv}
% \end{figure}

\clearpage
% \vspace*{\fill}
\begin{table}
\centering
\caption{LongBench results for Qwen2.5, featuring 0.5B (\specpc{}) and 1.5B (\speckv{}) draft models and a 32B target model.}
\resizebox{\linewidth}{!}{
\begin{tabular}{lllcccccc}
\toprule
  & $\cmax$ & \textbf{Method} & \makecell{\textbf{Single-}\\\textbf{doc QA}} & \makecell{\textbf{Multi-}\\\textbf{doc QA}} & \textbf{Summary} & 
\makecell{\textbf{Few-shot}\\\textbf{Learning}} & \makecell{\textbf{Code}\\\textbf{Completion}} & 
\textbf{All} \\
\midrule
\multirow{3}{*}{Dense} & \multirow{3}{*}{--} & Draft (0.5B) & 19.07 & 26.58 & 20.90 & 53.51 & 32.48 & 30.37 \\
& &  Draft (1.5B)  & 36.16 & 35.01 & 22.79 & 63.92 & 36.62 & 39.06 \\
% & &  Draft (3B)  & 39.26 & 40.06 & 25.34 & 63.43 & 45.31 & 42.49 \\
& &  Target (32B) & 56.01 & 43.99 & 25.90 & 64.06 & 44.74 & 47.78 \\
\midrule
\multirow{16}{*}{KV}  &  \multirow{5}{*}{256} 
&    H2O  & 46.63 & 30.81 & 19.88 & 56.03 & 39.27 & 39.32 \\
 &   &    SnapKV    & 52.54 & 40.21 & 19.89 & 61.18 & 40.12 & 42.98 \\
  &     &   PyramidKV  & 50.92 & 37.26 & 18.90 & 63.24 & 40.20 & 43.19 \\
  &     &   LAQ++  & \textbf{55.15} & \textbf{44.14} & \underline{22.24} & \underline{63.25} & \underline{41.19} & \underline{45.79} \\
  &      &    \textbf{\speckv{} (1.5B)}    & \underline{53.48} & \underline{43.77} & \textbf{24.02} & \textbf{63.79} & \textbf{44.80} & \textbf{46.06} \\
\cmidrule{2-9}
 &  \multirow{5}{*}{512}  
&  H2O  & 48.62 & 36.72 & 21.31 & 59.10 & 40.99 & 42.27 \\
 &   &    SnapKV    & \underline{55.24} & 42.21 & 21.47 & \underline{63.36} & 39.69 & 44.73 \\
  &     &   PyramidKV  & 54.79 & 41.87 & 20.28 & 62.00 & 40.08 & \underline{46.55} \\
  &     &   LAQ++  & \textbf{55.77} & \textbf{45.30} & \underline{23.96} & 62.37 & \underline{41.86} & 46.44 \\
  &      &    \textbf{\speckv{} (1.5B)}    & 52.78 & \underline{43.97} & \textbf{24.80} & \textbf{64.72} & \textbf{46.77} & \textbf{46.60} \\
\cmidrule{2-9}
&  \multirow{5}{*}{1024}  
&  H2O  & 51.86 & 41.30 & 23.02 & 60.18 & 42.93 & 44.84 \\
 &   &    SnapKV    & 55.32 & \underline{44.04} & 23.08 & \textbf{66.15} & \textbf{44.42} & \underline{46.76} \\
  &     &   PyramidKV  & \underline{55.91} & 42.67 & 22.28 & \underline{64.76} & 42.52 & \textbf{48.31} \\
  &     &   LAQ++  & \textbf{56.33} & \textbf{45.18} & \underline{25.17} & 62.13 & \underline{43.06} & 46.61 \\
  &      &    \textbf{\speckv{} (1.5B)}    & 55.72 & 43.95 & \textbf{25.20} & 63.31 & 42.38 & 46.38 \\
\midrule
\multirow{13}{*}{PC}  &  \multirow{4}{*}{1024} &   CPC   & 45.60 & \underline{40.62} & 23.09 & 60.08 & 32.31 & 40.91 \\
 &   &   R2C   & \underline{50.49} & 40.37 & \underline{23.26} & 53.45 & 34.11 & 39.88 \\
  &     &   SpecPrefill (0.5B)  & 45.94 & 39.32 & 23.16 & \underline{62.04} & \textbf{43.17} & \underline{42.70} \\
 &   &   \textbf{\specpc{} (0.5B)}   & \textbf{51.23} & \textbf{41.40} & \textbf{23.37} & \textbf{62.26} & \underline{38.23} & \textbf{43.66} \\
\cmidrule{2-9}
 &  \multirow{4}{*}{2048} &   CPC   & 51.01 & 42.31 & 23.74 & 60.92 & 35.83 & 43.26 \\
 &   &   R2C   & 50.32 & \textbf{42.66} & \underline{24.08} & 59.11 & 40.54 & 44.19 \\
  &     &   SpecPrefill (0.5B)  & \underline{51.60} & 40.72 & 23.71 & \textbf{64.20} & \underline{45.29} & \underline{45.09} \\
 &   &   \textbf{\specpc{} (0.5B)}   & \textbf{55.40} & \underline{42.60} & \textbf{24.12} & \underline{61.46} & \textbf{48.05} & \textbf{46.20} \\
\cmidrule{2-9}
& \multirow{4}{*}{3072} &   CPC   & \underline{56.80} & 42.05 & \textbf{24.77} & \underline{62.79} & 37.98 & 45.37 \\
 &    &   R2C   & 51.67 & \underline{42.88} & 24.09 & 62.45 & 27.01 & 42.66 \\
  &     &   SpecPrefill (0.5B)  & 53.85 & \textbf{42.92} & 24.48 & \textbf{64.03} & \underline{45.76} & \underline{46.24} \\
 &    &   \textbf{\specpc{} (0.5B)}   & \textbf{56.89} & 42.42 & \underline{24.71} & 62.66 & \textbf{47.49} & \textbf{46.79} \\
\bottomrule
\end{tabular}}
\label{app:tab:extra_longbench_qwen25_32b}
\end{table}

\begin{table}
\centering
\caption{LongBench results for Llama-3, featuring 3.2-1B (\specpc{}) and 3.2-3B (\speckv{}) draft models and a 3.1-70B (4-bit) target model.}
\resizebox{\linewidth}{!}{
\begin{tabular}{lllcccccc}
\toprule
  & $\cmax$ & \textbf{Method} & \makecell{\textbf{Single-}\\\textbf{doc QA}} & \makecell{\textbf{Multi-}\\\textbf{doc QA}} & \textbf{Summary} & 
\makecell{\textbf{Few-shot}\\\textbf{Learning}} & \makecell{\textbf{Code}\\\textbf{Completion}} & 
\textbf{All} \\
\midrule
\multirow{3}{*}{Dense} & \multirow{3}{*}{--} & Draft (1B) & 28.37 & 28.62 & 26.12 & 59.10 & 33.59 & 35.27 \\
& &  Draft (3B) & 45.53 & 40.52 & 27.46 & 64.82 & 46.73 & 44.89 \\
& &  Target (70B) & 55.02 & 47.06 & 28.61 & 70.47 & 48.19 & 49.99 \\
\midrule
\multirow{16}{*}{KV}  &  \multirow{5}{*}{256} 
 &   H2O  & 54.07 & 41.30 & 22.55 & 49.10 & 54.14 & 43.52 \\
 &    &    SnapKV    & \textbf{55.88} & 45.30 & 22.49 & 62.15 & \underline{55.49} & 47.75 \\
  &     &   PyramidKV  & \underline{55.41} & 45.59 & 22.50 & 59.06 & 49.90 & 46.25 \\
  &     &   LAQ++  & 54.90 & \underline{46.48} & \underline{22.83} & \textbf{64.31} & 55.10 & \underline{48.43} \\
  &     &    \textbf{\speckv{} (3B)}    & 51.80 & \textbf{47.23} & \textbf{25.53} & \underline{64.02} & \textbf{58.75} & \textbf{48.80} \\
\cmidrule{2-9}
 &  \multirow{5}{*}{512} 
&  H2O  & 55.01 & 43.86 & 23.85 & 58.48 & 52.00 & 46.26 \\
 &    &    SnapKV    & \textbf{56.08} & 46.94 & 24.40 & 63.34 & \underline{52.14} & 48.32 \\
  &     &   PyramidKV  & 54.73 & 47.07 & 24.20 & \underline{63.62} & 47.05 & 47.35 \\
  &     &   LAQ++  & \underline{55.06} & \underline{47.45} & \underline{24.58} & 63.39 & 51.75 & \underline{48.36} \\
  &     &    \textbf{\speckv{} (3B)}    & 53.31 & \textbf{47.47} & \textbf{27.33} & \textbf{67.49} & \textbf{54.19} & \textbf{49.66} \\
\cmidrule{2-9}
&  \multirow{5}{*}{1024} 
&  H2O  & 54.19 & 45.77 & 26.03 & 63.62 & \underline{49.58} & 47.71 \\
 &    &    SnapKV    & 54.94 & \textbf{47.40} & 25.97 & \textbf{67.56} & 48.13 & \textbf{48.85} \\
  &     &   PyramidKV  & \textbf{56.32} & 46.82 & 26.20 & 62.58 & 48.28 & 48.02 \\
  &     &   LAQ++  & \underline{55.19} & \underline{47.17} & \underline{26.43} & \underline{67.41} & 47.16 & 48.61 \\
  &     &    \textbf{\speckv{} (3B)}    & 53.32 & 46.02 & \textbf{27.41} & 66.40 & \textbf{50.66} & \underline{48.63} \\
\midrule
\multirow{13}{*}{PC}  &  \multirow{4}{*}{1024} &   CPC   & 45.14 & 39.41 & 24.86 & 61.40 & 37.58 & 41.97 \\
 &   &   R2C   & 48.93 & 42.01 & 25.38 & 58.91 & 40.19 & 43.29 \\
  &     &   SpecPrefill (1B)  & \underline{54.62} & \textbf{46.43} & \underline{25.63} & \underline{64.80} & \underline{44.92} & \underline{48.37} \\
 &   &   \textbf{\specpc{} (1B)}   & \textbf{56.84} & \underline{44.48} & \textbf{25.91} & \textbf{67.37} & \textbf{47.15} & \textbf{48.44} \\
\cmidrule{2-9}
 &  \multirow{4}{*}{2048} &   CPC   & 55.97 & 46.00 & 26.72 & 64.78 & 41.31 & 47.36 \\
 &   &   R2C   & 53.62 & \textbf{46.70} & 26.27 & 62.41 & \textbf{47.90} & 47.34 \\
  &     &   SpecPrefill (1B)  & \underline{58.39} & 45.37 & \underline{27.13} & \underline{66.60} & 45.45 & \underline{48.81} \\
 &   &   \textbf{\specpc{} (1B)}   & \textbf{59.39} & \underline{46.25} & \textbf{27.60} & \textbf{68.42} & \underline{46.70} & \textbf{49.88} \\
\cmidrule{2-9}
& \multirow{4}{*}{3072} &   CPC   & 56.64 & \underline{46.69} & 27.29 & \underline{64.92} & \underline{44.75} & \underline{48.30} \\
 &   &   R2C   & 56.11 & 45.15 & \underline{27.51} & 61.76 & \textbf{47.17} & 47.57 \\
  &     &   SpecPrefill (1B)  & \underline{57.75} & 46.15 & 27.15 & 64.70 & 42.50 & 48.02 \\
 &   &   \textbf{\specpc{} (1B)}   & \textbf{58.47} & \textbf{48.07} & \textbf{27.72} & \textbf{65.43} & 41.28 & \textbf{48.69} \\
\bottomrule
\end{tabular}}
\label{app:tab:extra_longbench_llama31_70b}
\end{table}

\begin{table}
\centering
\caption{LongBench results for Qwen2.5, featuring 0.5B (\specpc{}) and 1.5B (\speckv{}) draft models and a 72B (4-bit) target model.}
\resizebox{\linewidth}{!}{
\begin{tabular}{lllcccccc}
\toprule
  & $\cmax$ & \textbf{Method} & \makecell{\textbf{Single-}\\\textbf{doc QA}} & \makecell{\textbf{Multi-}\\\textbf{doc QA}} & \textbf{Summary} & 
\makecell{\textbf{Few-shot}\\\textbf{Learning}} & \makecell{\textbf{Code}\\\textbf{Completion}} & 
\textbf{All} \\
\midrule
\multirow{3}{*}{Dense} & \multirow{3}{*}{--} & Draft (0.5B) & 19.07 & 26.58 & 20.90 & 53.51 & 32.48 & 30.37 \\
& &  Draft (1.5B)  & 36.16 & 35.01 & 22.79 & 63.92 & 36.62 & 39.06 \\
% & &  Draft (3B)  & 39.26 & 40.06 & 25.34 & 63.43 & 45.31 & 42.49 \\
&  & Target (72B)  & 58.70 & 45.89 & 26.24 & 64.83 & 51.08 & 49.22 \\
\midrule
\multirow{16}{*}{KV}  &  \multirow{5}{*}{256} 
 &  H2O  & 53.68 & 35.05 & 21.10 & 50.84 & 48.87 & 41.41 \\
  &   &    SnapKV    & \underline{55.85} & 40.12 & 21.65 & 55.82 & \underline{50.41} & 44.37 \\
  &     &   PyramidKV  & 55.13 & 38.89 & 20.14 & 53.02 & 49.59 & 42.91 \\
  &     &   LAQ++  & \textbf{57.26} & \textbf{43.54} & \underline{22.95} & \underline{62.53} & \textbf{55.79} & \underline{46.98} \\
  &     &    \textbf{\speckv{} (1.5B)}  & 53.94 & \underline{42.71} & \textbf{25.07} & \textbf{66.58} & 46.93 & \textbf{47.06} \\
\cmidrule{2-9}
 &  \multirow{5}{*}{512} 
&  H2O  & 55.77 & 39.52 & 22.61 & 58.01 & 50.69 & 44.94 \\
 &   &   SnapKV    & \textbf{59.08} & \textbf{44.13} & 23.04 & 61.33 & \underline{51.77} & \underline{47.59} \\
  &     &   PyramidKV  & 56.98 & 42.52 & 22.03 & 55.43 & 50.29 & 45.10 \\
  &     &   LAQ++  & \underline{58.28} & 34.94 & \underline{24.38} & \underline{63.74} & \textbf{54.79} & \textbf{49.08} \\
  &     &    \textbf{\speckv{} (1.5B)}    & 55.74 & \underline{43.39} & \textbf{25.85} & \textbf{64.56} & 47.80 & 47.44 \\
\cmidrule{2-9}
&  \multirow{5}{*}{1024} 
 &  H2O  & 55.37 & 43.50 & 24.11 & 62.09 & 50.68 & 46.90 \\
 &    &    SnapKV     & \textbf{59.56} & \underline{44.06} & 24.41 & \underline{62.85} & \underline{52.87} & \underline{48.46} \\
  &     &   PyramidKV  & 58.34 & 43.68 & 23.18 & 59.66 & 51.45 & 46.96 \\
  &     &   LAQ++  & \underline{58.66} & 35.37 & \textbf{26.35} & 60.90 & \textbf{53.56} & \textbf{49.59} \\
   &       &     \textbf{\speckv{} (1.5B)}     & 56.04 & \textbf{44.27} & \underline{25.71} & \textbf{63.46} & 49.90 & 47.73 \\
\midrule
\multirow{13}{*}{PC}  &  \multirow{4}{*}{1024} &   CPC   & 46.37 & 38.36 & 24.19 & 52.54 & 28.30 & 38.64 \\
 &   &   R2C   & \textbf{55.78} & 40.10 & \underline{24.40} & 47.62 & 33.17 & 40.72 \\
  &     &   SpecPrefill (0.5B)  & \underline{49.04} & \underline{40.32} & \textbf{24.41} & \textbf{63.53} & \textbf{50.12} & \underline{45.16} \\
 &   &   \textbf{\specpc{} (0.5B)}   & 48.52 & \textbf{45.34} & 24.14 & \underline{61.04} & \underline{48.30} & \textbf{45.26} \\
\cmidrule{2-9}
 &  \multirow{4}{*}{2048}  &   CPC   & \underline{55.14} & 43.53 & 24.71 & 49.46 & 33.21 & 41.78 \\
 &   &   R2C   & 54.61 & \underline{45.70} & 24.99 & 53.65 & 42.43 & 44.41 \\
  &     &   SpecPrefill (0.5B)  & 53.39 & 43.46 & \textbf{25.53} & \textbf{65.89} & \underline{50.18} & \underline{47.51} \\
 &   &   \textbf{\specpc{} (0.5B)}   & \textbf{58.36} & \textbf{46.38} & \underline{25.32} & \underline{64.43} & \textbf{51.11} & \textbf{48.98} \\
\cmidrule{2-9}
& \multirow{4}{*}{3072} &   CPC   & 55.78 & 44.41 & 25.47 & 42.71 & 39.16 & 41.68 \\
 &   &   R2C   & \underline{59.22} & 44.70 & 25.69 & 55.22 & 44.04 & 45.90 \\
  &     &   SpecPrefill (0.5B)  & 55.75 & \textbf{45.71} & \textbf{25.93} & \textbf{64.22} & \textbf{51.63} & \underline{48.44} \\
 &   &   \textbf{\specpc{} (0.5B)}   & \textbf{59.84} & \underline{44.95} & \underline{25.91} & \underline{64.18} & \underline{46.88} & \textbf{48.46} \\
\bottomrule
\end{tabular}}
\label{app:tab:extra_longbench_qwen25_72b}
\end{table}

% \clearpage
% \vspace*{\fill}
% \input{appendix/add_corr_plots}
% \vspace*{\fill}
\clearpage
% \vspace*{\fill}
\begin{figure}
    \centering
     \begin{subfigure}[b]{0.49\textwidth}
         \centering
         \includegraphics[width=\textwidth]{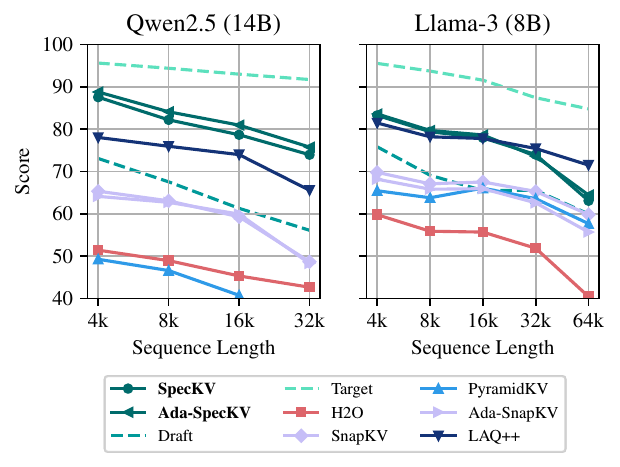}
         \caption{KV dropping ($\cmax = 256$)}
     \end{subfigure}
     \hfill
     \begin{subfigure}[b]{0.49\textwidth}
         \centering
         \includegraphics[width=\textwidth]{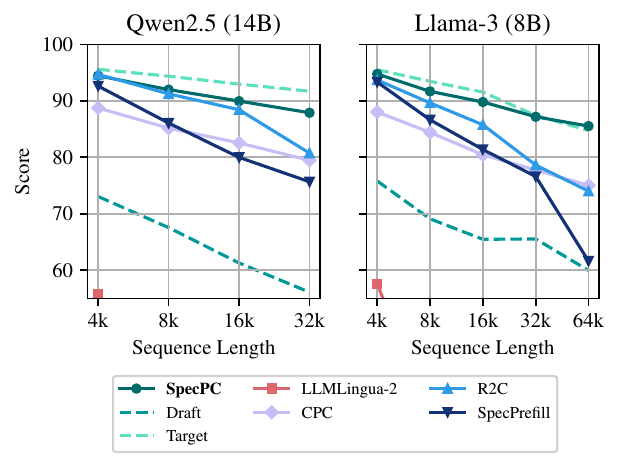}
         \caption{Prompt compression ($\cmax = 1024$)}
     \end{subfigure}
    \par\bigskip
      \begin{subfigure}[b]{0.49\textwidth}
         \centering
         \includegraphics[width=\textwidth]{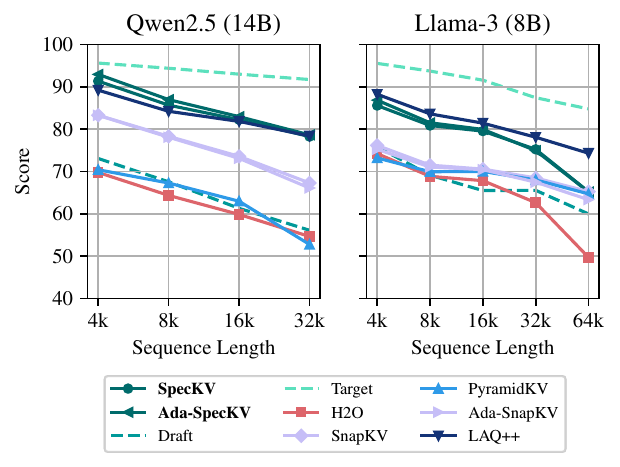}
         \caption{KV dropping ($\cmax = 512$)}
     \end{subfigure}
     \hfill
     \begin{subfigure}[b]{0.49\textwidth}
         \centering
         \includegraphics[width=\textwidth]{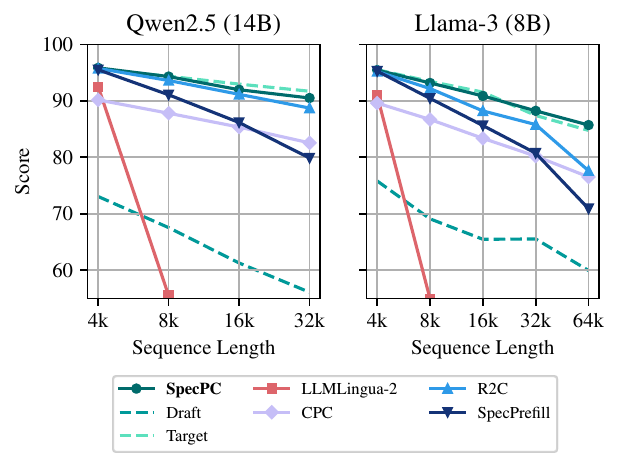}
         \caption{Prompt compression ($\cmax = 2048$)}
     \end{subfigure}
    \par\bigskip
      \begin{subfigure}[b]{0.49\textwidth}
         \centering
         \includegraphics[width=\textwidth]{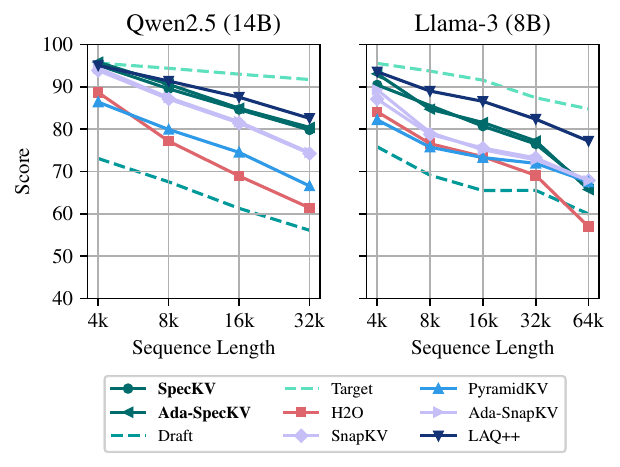}
         \caption{KV dropping ($\cmax = 1024$)}
     \end{subfigure}
     \hfill
     \begin{subfigure}[b]{0.49\textwidth}
         \centering
         \includegraphics[width=\textwidth]{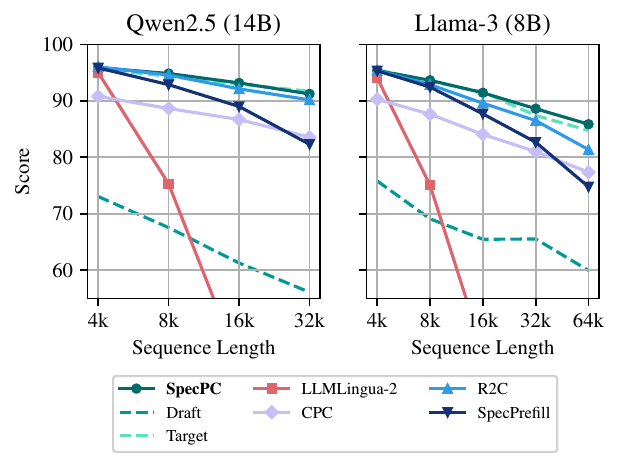}
         \caption{Prompt compression ($\cmax = 3072$)}
     \end{subfigure}
    \caption{Performance of the proposed \speckv{} and \specpc{} on RULER compared to various baselines including AdaKV for Qwen2.5-0.5B-Instruct and Llama-3.2-1B-Instruct as draft models, and Qwen2.5-14B-Instruct and Llama-3.1-8B-Instruct as target models. AdaKV can further boost the performance of \speckv{}. Notably, \specpc{} performs nearly on par with the target model. Low-performing methods are omitted.}
    \label{fig:ruler_llama_qwen_full}
\end{figure}
% \vspace*{\fill}
\clearpage
\vspace*{\fill}
\begin{table}[hp]
\centering
\caption{Results for LongBench performance with Qwen2.5-0.5B (draft) and Qwen2.5-14B (target).}
\resizebox{\linewidth}{!}{
\begin{tabular}{lllcccccc}
\toprule
  & $\cmax$ & \textbf{Method} & \makecell{\textbf{Single-}\\\textbf{doc QA}} & \makecell{\textbf{Multi-}\\\textbf{doc QA}} & \textbf{Summary} & 
\makecell{\textbf{Few-shot}\\\textbf{Learning}} & \makecell{\textbf{Code}\\\textbf{Completion}} & 
\textbf{All} \\
\midrule
\multirow{2}{*}{Dense} & \multirow{2}{*}{--} & Draft & 21.04 & 24.4 & 21.23 & 54.07 & 33.39 & 30.64 \\
&& Target & 53.19 & 42.83 & 24.99 & 65.34 & 51.75 & 47.33 \\
\midrule
\multirow{25}{*}{KV}  &  \multirow{8}{*}{256}  &   StreamingLLM   & 38.66 & 24.13 & 19.10 & 49.75 & 35.23 & 33.24 \\
  &    &   H2O   & 46.98 & 29.66 & 19.82 & 50.88 & 47.86 & 38.41 \\
  &    &   SnapKV   & 49.07 & 33.19 & 19.49 & 54.49 & 47.34 & 40.25 \\
  &    &   PyramidKV   & 47.07 & 31.84 & 18.32 & 54.81 & 43.26 & 38.76 \\
  &    &   Ada-SnapKV   & 50.92 & 34.57 & 19.63 & 55.07 & 49.58 & 41.41 \\
  &   &  LAQ++  & \textbf{52.45} & \textbf{39.99} & 22.12 & \textbf{61.46} & \underline{51.32} & \underline{45.05} \\
  &    &   \textbf{\speckv{}}   & 51.07 & 38.76 & \underline{23.20} & 59.68 & 49.58 & 44.09 \\
  &    &   \textbf{\adaspeckv{}}   & \underline{51.92} & \underline{39.38} & \textbf{24.90} & \underline{61.05} & \textbf{53.43} & \textbf{45.61} \\
\cmidrule{2-9}
 &  \multirow{8}{*}{512}  &   StreamingLLM   & 38.97 & 27.04 & 21.14 & 53.78 & 36.52 & 35.41 \\
  &    &   H2O   & 47.56 & 33.16 & 21.02 & 53.45 & 49.78 & 40.37 \\
  &    &   SnapKV   & 50.74 & 38.33 & 20.97 & 57.81 & 49.94 & 43.10 \\
  &    &   PyramidKV   & 50.19 & 36.37 & 20.25 & 58.17 & 47.88 & 42.19 \\
  &    &   Ada-SnapKV   & 51.69 & 38.61 & 21.48 & 59.84 & 51.85 & 44.18 \\
  &   &  LAQ++  & \textbf{52.31} & \underline{42.06} & 23.75 & 61.62 & 51.64 & 45.89 \\
  &    &   \textbf{\speckv{}}   & \underline{51.98} & 41.51 & \underline{24.11} & \underline{62.60} & \underline{52.33} & \underline{46.09} \\
  &    &   \textbf{\adaspeckv{}}   & 51.00 & \textbf{42.18} & \textbf{25.74} & \textbf{63.74} & \textbf{55.28} & \textbf{48.46} \\
\cmidrule{2-9}
&  \multirow{8}{*}{1024}  
&   StreamingLLM   & 40.08 & 32.01 & 22.12 & 55.42 & 38.36 & 37.54 \\
  &    &   H2O   & 49.73 & 37.27 & 22.38 & 55.94 & 51.26 & 42.75 \\
  &    &   SnapKV   & 51.56 & 40.98 & 22.65 & 63.90 & 51.48 & 45.73 \\
  &    &   PyramidKV   & 51.42 & 40.93 & 21.76 & 59.96 & 49.92 & 44.43 \\
  &    &   Ada-SnapKV   & 51.81 & 40.96 & 22.83 & \underline{64.14} & 51.97 & 45.94 \\
  &   &  LAQ++  & \underline{52.27} & \underline{42.59} & \underline{25.04} & 61.56 & 51.68 & 46.27 \\
  &    &   \textbf{\speckv{}}   & \textbf{52.92} & 41.85 & 24.30 & 63.46 & \underline{52.50} & \underline{46.61} \\
  &    &   \textbf{\adaspeckv{}}   & \underline{52.27} & \textbf{43.22} & \textbf{26.39} & \textbf{64.83} & \textbf{54.40} & \textbf{47.78} \\
\midrule
\multirow{16}{*}{PC}  &  \multirow{5}{*}{1024}   &  LLMLingua-2  & 27.10 & 23.74 & 22.57 & 35.85 & 37.62 & 28.79 \\
  &   &  CPC  & 42.76 & 36.05 & 22.58 & 48.38 & 35.57 & 37.17 \\
  &   &  R2C  & \underline{46.43} & \underline{37.42} & 22.64 & 47.21 & 29.48 & 37.15 \\
  &   &  SpecPrefill  & 46.21 & 37.21 & \underline{22.82} & \textbf{62.18} & \underline{48.34} & \underline{43.00} \\
  &   &  \textbf{\specpc{}}  & \textbf{47.70} & \textbf{38.30} & \textbf{23.30} & \underline{59.74} & \textbf{52.52} & \textbf{43.73} \\
  \cmidrule{2-9}
 &  \multirow{5}{*}{2048}  &  LLMLingua-2  & 35.10 & 32.22 & 23.51 & 39.82 & 44.80 & 34.40 \\
  &   &  CPC  & 48.74 & 39.70 & 23.39 & 55.77 & 42.07 & 41.92 \\
  &   &  R2C  & \underline{50.68} & \underline{40.65} & 23.47 & 56.72 & 45.62 & 43.27 \\
  &   &  SpecPrefill  & 49.75 & 38.58 & \textbf{23.70} & \textbf{65.05} & \underline{50.40} & \underline{45.15} \\
  &   &  \textbf{\specpc{}}  & \textbf{53.23} & \textbf{41.43} & \underline{23.68} & \underline{63.26} & \textbf{54.92} & \textbf{46.76} \\
\cmidrule{2-9}
& \multirow{5}{*}{3072}
 &  LLMLingua-2  & 43.50 & 35.67 & 24.10 & 45.66 & 47.32 & 38.67 \\
  &   &  CPC  & 51.59 & 41.08 & 23.84 & 58.14 & 44.82 & 43.83 \\
  &   &  R2C  & 51.71 & \underline{41.39} & 23.89 & 62.09 & 48.58 & 45.31 \\
  &   &  SpecPrefill  & \underline{51.81} & 41.27 & \underline{24.26} & \underline{64.28} & \underline{50.66} & \underline{46.15} \\
  &   &  \textbf{\specpc{}}  & \textbf{53.25} & \textbf{41.48} & \textbf{24.49} & \textbf{64.51} & \textbf{54.41} & \textbf{47.14} \\
\bottomrule
\end{tabular}}
\label{tab:qwen_longbench_full}
\end{table}
\vspace*{\fill}
\clearpage
\vspace*{\fill}
\begin{table}[hp]
\centering
\caption{Results for LongBench performance with Llama-3.2-1B (draft) and Llama-3.1-8B (target).}
\resizebox{\linewidth}{!}{
\begin{tabular}{lllcccccc}
\toprule
  & $\cmax$ & \textbf{Method} & \makecell{\textbf{Single-}\\\textbf{doc QA}} & \makecell{\textbf{Multi-}\\\textbf{doc QA}} & \textbf{Summary} & 
\makecell{\textbf{Few-shot}\\\textbf{Learning}} & \makecell{\textbf{Code}\\\textbf{Completion}} & 
\textbf{All} \\
\midrule
\multirow{2}{*}{Dense} & \multirow{2}{*}{--} & Draft & 28.08 & 27.27 & 25.65 & 60.16 & 31.11 & 34.69 \\
& & Target & 45.85 & 43.79 & 28.68 & 66.65 & 50.46 & 46.84 \\
\midrule
\multirow{25}{*}{KV} 
& \multirow{8}{*}{256}
&   StreamingLLM   & 38.69 & 27.12 & 21.64 & 50.75 & 34.74 & 34.58 \\
  &    &   H2O   & 43.54 & 36.81 & 22.62 & 55.64 & 47.81 & 40.82 \\
  &    &   SnapKV   & 43.79 & 37.31 & 21.96 & 56.29 & 47.34 & 40.91 \\
  &    &   PyramidKV   & 43.62 & 37.79 & 21.75 & 55.15 & 46.32 & 40.54 \\
  &    &   Ada-SnapKV   & \underline{44.04} & 37.86 & 22.34 & 59.95 & 50.39 & 42.38 \\
 &   &  LAQ++  & \textbf{45.81} & \textbf{41.72} & 22.93 & \textbf{64.18} & 47.21 & \textbf{44.17} \\
  &    &   \textbf{\speckv{}}   & 43.23 & 39.73 & \underline{24.43} & \underline{60.90} & \underline{51.09} & \underline{43.36} \\
  &    &   \textbf{\adaspeckv{}}   & 42.40 & \underline{40.73} & \textbf{25.74} & 57.94 & \textbf{52.51} & 43.25 \\
\cmidrule{2-9}
& \multirow{8}{*}{512} 
&   StreamingLLM   & 38.80 & 29.15 & 24.16 & 52.59 & 37.25 & 36.33 \\
  &    &   H2O   & 44.82 & 39.36 & 23.95 & 58.70 & 49.30 & 42.79 \\
  &    &   SnapKV   & 45.00 & 41.31 & 23.61 & 61.36 & 49.92 & 43.83 \\
  &    &   PyramidKV   & \underline{45.26} & 41.22 & 23.36 & 61.15 & 48.06 & 43.51 \\
  &    &   Ada-SnapKV   & 45.09 & 41.17 & 24.19 & 61.81 & \underline{51.74} & 44.30 \\
  &   &  LAQ++  & \textbf{46.17} & \textbf{43.44} & 24.17 & \textbf{63.13} & 48.74 & \textbf{44.87} \\
  &    &   \textbf{\speckv{}}   & 43.48 & 41.86 & \underline{26.17} & \underline{62.34} & 51.36 & \underline{44.59} \\
  &    &   \textbf{\adaspeckv{}}   & 43.80 & \underline{42.56} & \textbf{26.59} & 58.72 & \textbf{54.41} & 44.56 \\
\cmidrule{2-9}
& \multirow{8}{*}{1024} 
&   StreamingLLM   & 38.45 & 32.54 & 25.03 & 58.09 & 38.60 & 38.54 \\
  &    &   H2O   & 45.45 & 42.50 & 25.42 & 59.17 & 50.10 & 44.13 \\
  &    &   SnapKV   & 45.46 & 43.15 & 25.42 & 62.06 & \underline{52.37} & 45.22 \\
  &    &   PyramidKV   & \underline{46.10} & 42.78 & 25.17 & 63.43 & 49.55 & 45.11 \\
  &    &   Ada-SnapKV   & 46.06 & 43.34 & 25.48 & \underline{63.47} & 50.50 & 45.43 \\
  &   &  LAQ++  & \textbf{46.21} & \textbf{44.14} & 25.71 & \textbf{64.10} & 50.00 & \textbf{45.75} \\
  &    &   \textbf{\speckv{}}   & 43.73 & 43.39 & \underline{26.95} & 63.14 & 51.28 & 45.30 \\
  &    &   \textbf{\adaspeckv{}}   & 44.83 & \underline{43.72} & \textbf{27.50} & 61.04 & \textbf{54.23} & \underline{45.70} \\
\midrule
\multirow{16}{*}{PC} & \multirow{5}{*}{1024}
&  LLMLingua-2  & 29.61 & 24.83 & 23.43 & 24.18 & 40.66 & 27.68 \\
  &   &  CPC  & 35.67 & 36.61 & 25.26 & 34.01 & 43.58 & 34.42 \\
  &   &  R2C  & 38.41 & 39.07 & 25.28 & 43.26 & 43.99 & 37.58 \\
  &   &  SpecPrefill  & \underline{41.51} & \underline{39.38} & \underline{25.82} & \underline{63.51} & \underline{44.68} & \underline{42.86} \\
  &   &  \textbf{\specpc{}}  & \textbf{44.83} & \textbf{39.94} & \textbf{25.85} & \textbf{63.70} & \textbf{44.82} & \textbf{43.76} \\
\cmidrule{2-9}
& \multirow{5}{*}{2048}
&  LLMLingua-2  & 34.00 & 32.51 & 24.90 & 24.76 & \textbf{47.27} & 31.64 \\
  &   &  CPC  & 40.02 & 39.41 & 26.83 & 39.02 & 46.66 & 37.80 \\
  &   &  R2C  & \underline{44.53} & 38.97 & 26.63 & 54.62 & 46.67 & 41.97 \\
  &   &  SpecPrefill  & 42.35 & \textbf{42.11} & \underline{27.23} & \underline{63.56} & 44.46 & \underline{43.91} \\
  &   &  \textbf{\specpc{}}  & \textbf{44.92} & \underline{40.71} & \textbf{27.30} & \textbf{64.77} & \underline{46.89} & \textbf{44.78} \\
\cmidrule{2-9}
& \multirow{5}{*}{3072}
&  LLMLingua-2  & 39.13 & 35.44 & 25.98 & 29.73 & \textbf{49.92} & 35.05 \\
  &   &  CPC  & 41.73 & 39.52 & 27.27 & 42.12 & \underline{49.13} & 39.30 \\
  &   &  R2C  & \underline{44.77} & 40.97 & 27.35 & 60.85 & 48.08 & 44.14 \\
  &   &  SpecPrefill  & 43.86 & \textbf{42.45} & \underline{27.82} & \underline{63.17} & 45.25 & \underline{44.46} \\
  &   &  \textbf{\specpc{}}  & \textbf{47.12} & \underline{41.95} & \textbf{28.02} & \textbf{65.61} & 45.52 & \textbf{45.65} \\
\bottomrule
\end{tabular}}
\label{tab:llama_longbench_full}
\end{table}
\vspace*{\fill}
\clearpage
% \vspace*{\fill}
\begin{figure}
    \centering
     \begin{subfigure}[b]{1\textwidth}
         \centering
     \includegraphics[width=\textwidth]
     {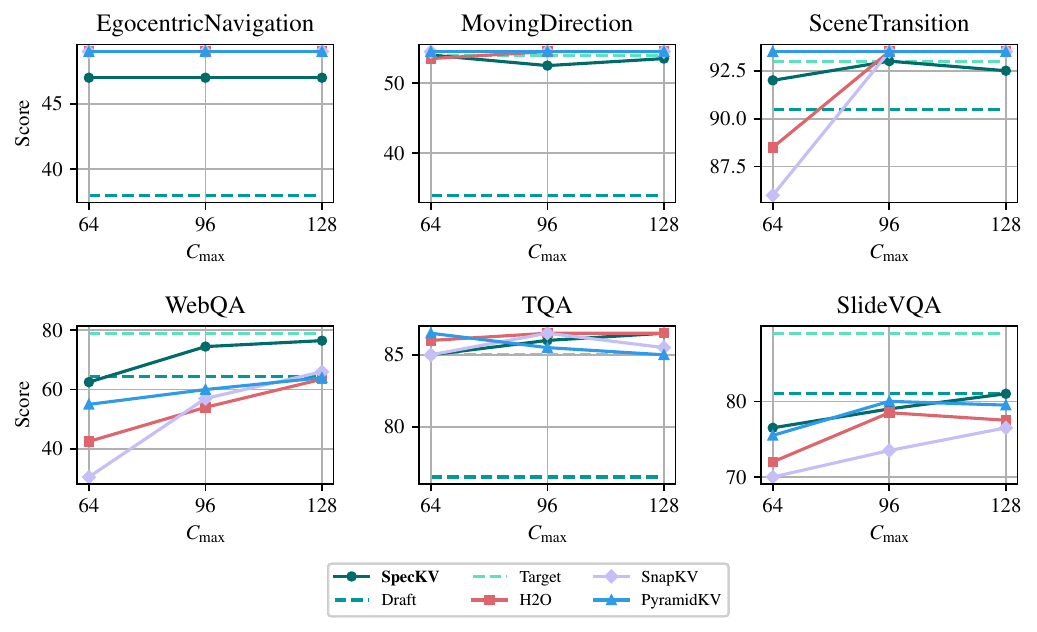}
     \caption{KV dropping}
    \label{app:fig:milebench_speckv}
     \end{subfigure}
     \par\bigskip
     \begin{subfigure}[b]{1\textwidth}
      \includegraphics[width=\textwidth]{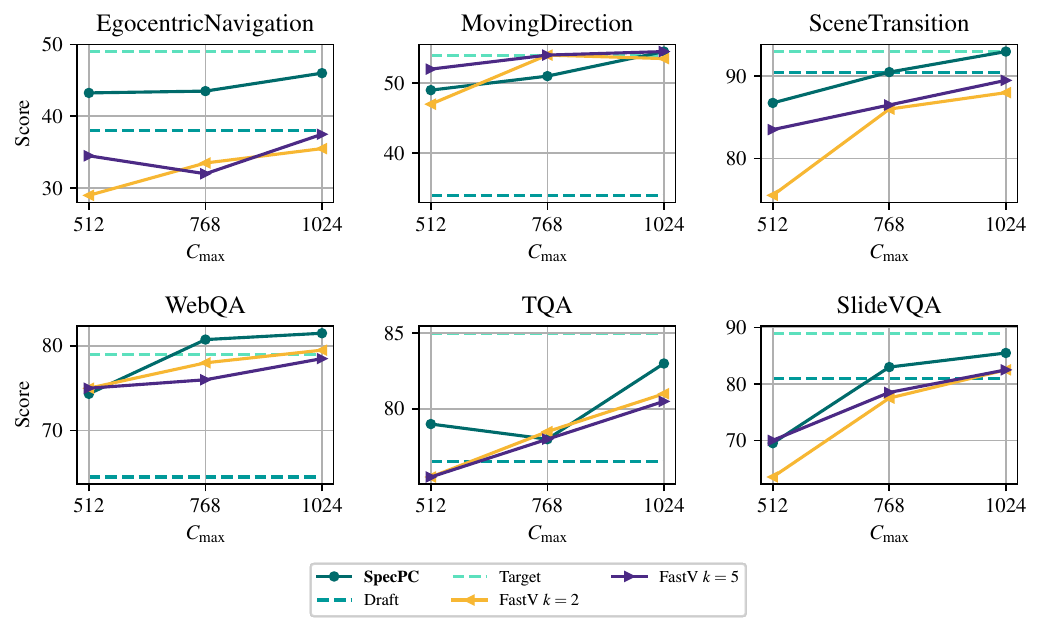}
     \caption{Prompt compression}
    \label{app:fig:milebench_specpc}
     \end{subfigure}
    \caption{MileBench multi-modal results using Qwen2.5-VL-3B-Instruct-AWQ (draft) and Qwen2.5-VL-32B-Instruct-AWQ (target). \speckv{} demonstrates competitive performance across most tasks and achieves a substantial improvement on WebQA. \specpc{} consistently outperforms both FastV configurations on the majority of datasets.}
\end{figure}
% \vspace*{\fill}
% \clearpage

\begin{figure}[h]
\centering
\includegraphics[width=\textwidth]{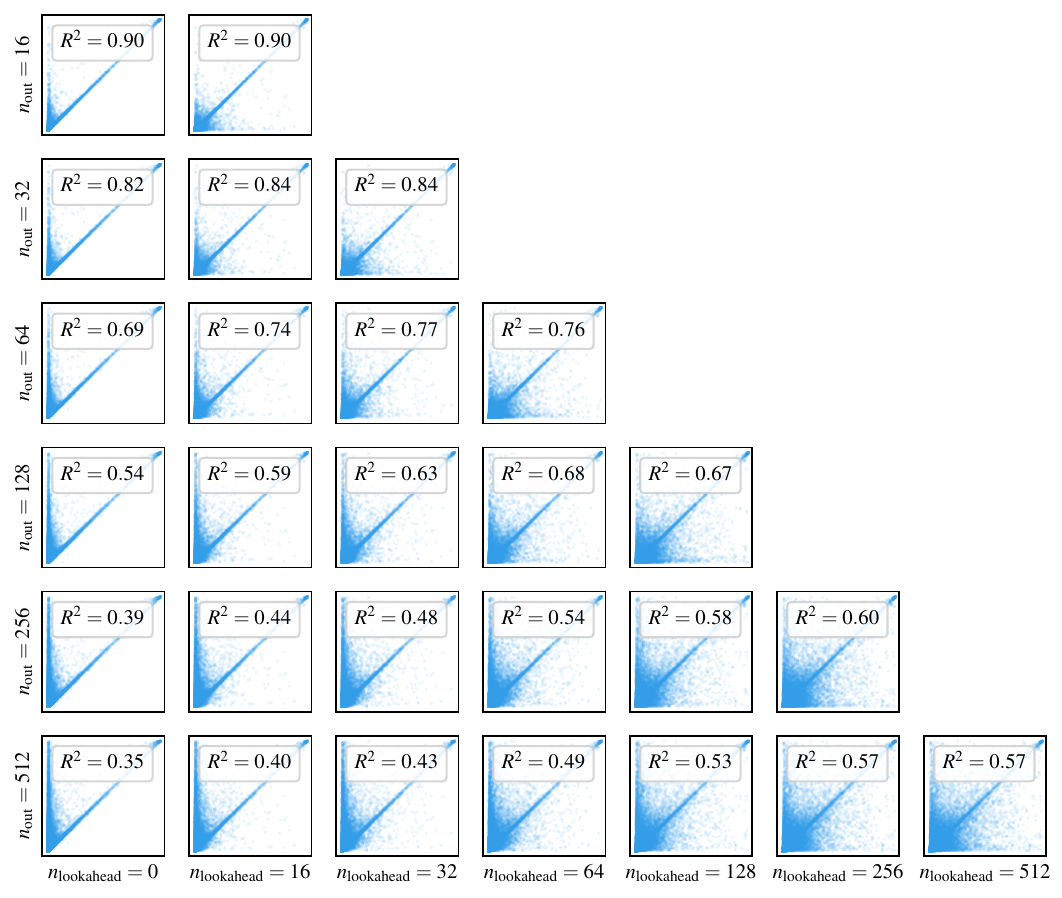}
\caption{\revised{Correlation of importance scores from \speckv{} against ground-truth scores from the dense target model for various output lengths ($\nout$) and lookaheads ($\nlookahead$). Increasing $\nlookahead$ improves correlation, especially for longer output lengths ($\nout$). SnapKV ($\nlookahead=0$) consistently shows the lowest correlation. Experiments use a Qwen2.5-0.5B draft model and a Qwen2.5-32B target model on Multi-news from LongBench.}}
\label{fig:speckv_imp_corr_multi_news}
\end{figure}
\begin{figure}[h]
\centering
\includegraphics[width=\textwidth]{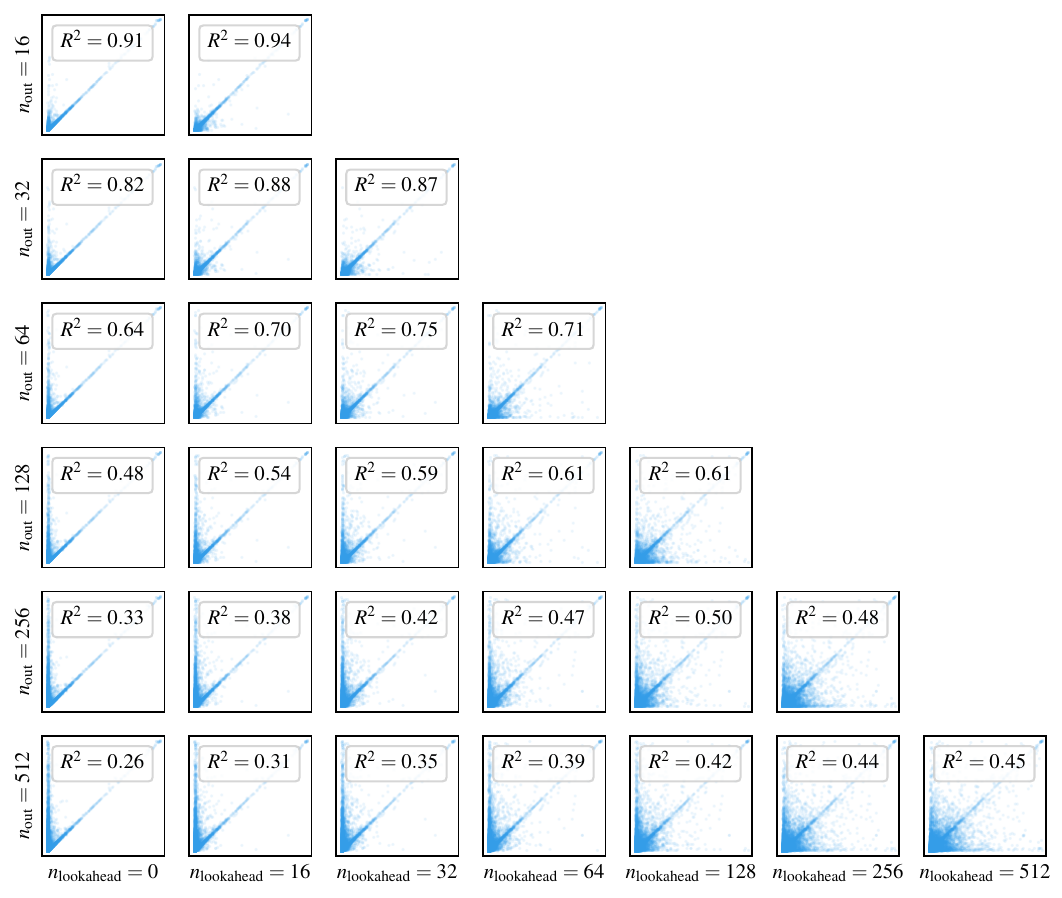}
\caption{\revised{Correlation of importance scores from \speckv{} against ground-truth scores from the dense target model for various output lengths ($\nout$) and lookaheads ($\nlookahead$). Increasing $\nlookahead$ improves correlation, especially for longer output lengths ($\nout$). SnapKV ($\nlookahead=0$) shows the lowest correlation. Experiments use a Qwen2.5-0.5B draft model and a Qwen2.5-32B target model on GovReport from LongBench.}}
\label{fig:speckv_imp_corr_gov_report}
\end{figure}

\begin{figure}[ht]
    \centering
    
    % Swap textheight and textwidth for landscape
    % Use keepaspectratio to avoid distortion
    \includegraphics[
        width=\textwidth
    ]{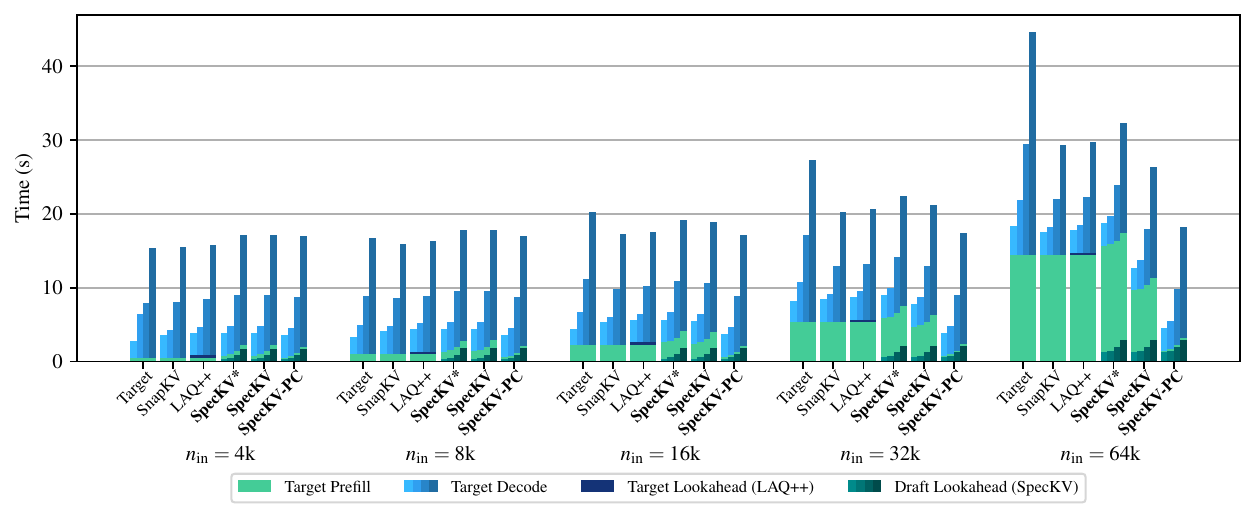} % Replace with your image file
    
    \caption{\revised{
End-to-end latency comparison across five input sequence lengths ($\ninput$) and four output sequence lengths ($\noutput \in \{64, 128, 256, 512\}$). 
All experiments use the Qwen2.5 model (3B draft, 32B target) with $C_{\text{max}}=256$. For SpecKV algorithms, we use $\nlookahead = \nout$. 
Each bar segment breaks down the latency by processing stage, and the four bars for each method correspond to the four $\noutput$ values.
SpecKV* denotes SpecKV without sparse prefill, while SpecKV-PC combines SpecKV with SpecPC.
Notably, when $\ninput \ge 16\text{k}$, the efficiency gains from sparse prefill (SpecKV) and prompt compression (SpecKV-PC) effectively offset the lookahead overhead for $\nlookahead \le 64$ and $\le 512$, respectively.
Overall, SpecKV-PC is the most efficient method; by precompressing the prompt, it significantly cuts target prefill time, resulting in a speedup of about 40\% to 75\% (depending on output length) over LAQ++ at a 64k input context.
}}
    \label{fig:latency_breakdown_plot}
\end{figure}

% For longer inputs ($\ninput$), SpecKV outperforms baselines because its sparse prefill mechanism reduces target prefill time, offsetting the cost of an initial draft lookahead. 
\begin{figure}[hp]
    \centering
    
    \includegraphics[
        width=\textwidth
    ]{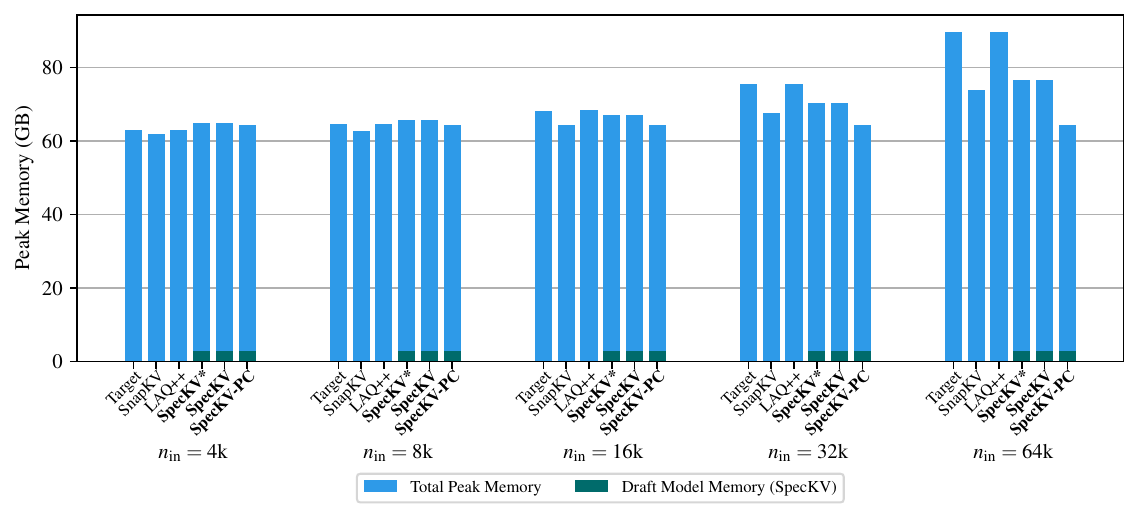} 
    
    \caption{\revised{Peak memory usage across five input sequence lengths ($\ninput$) using the Qwen2.5 model (3B draft, 32B target) with $\cmax=256$ and $\nlookahead = \nout$. 
    SpecKV* denotes SpecKV without sparse prefill, while SpecKV-PC combines SpecKV with SpecPC.
    LAQ++ offers no peak memory savings over the target model, as it must store the full cache. 
    SpecKV is slightly more memory-hungry than SnapKV because it also stores the draft model weights (indicated in green), though this constitutes a small fraction of the total memory. 
    SpecKV-PC is the most memory-efficient method; it first compresses the prompt (to 2048 tokens here) before the target model's prefill phase, significantly reducing peak usage. 
    At a 64k context length, SpecKV-PC saves approximately 9 GB over SnapKV and 25 GB over LAQ++.}}
    \label{fig:memory_breakdown_plot}
\end{figure}

% End-to-end latency comparison across five input sequence lengths ($\ninput$) and four output sequence lengths ($\noutput \in \{64, 128, 256, 512\}$). 
% All experiments use the Qwen2.5 model (3B draft, 32B target) with $C_{\text{max}}=256$. 
% Each bar segment breaks down the latency by processing stage, and the four bars for each method correspond to the four $\noutput$ values.
% SpecKV* denotes SpecKV without sparse prefill, while SpecKV-PC combines SpecKV with SpecPC.
% For longer inputs ($\ninput$), SpecKV outperforms baselines because its sparse prefill mechanism reduces target prefill time, offsetting the cost of an initial draft lookahead. 
% SpecKV-PC is the most efficient method overall; by precompressing the prompt, it significantly cuts target prefill time, resulting in a $\sim$40\% speedup over LAQ++ at a 64K input context.

\begin{figure}[hp]
\centering
\includegraphics[width=0.5\textwidth]{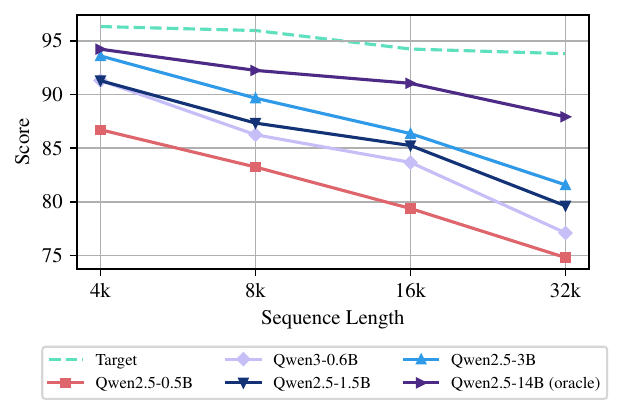}
\caption{
Effect of draft model quality on RULER score. The target model is Qwen2.5-14B (Instruct). Consistent with expectations, employing more capable draft models boosts performance. For reference, we also evaluate an oracle setting where the draft model is identical to the target (Qwen2.5-14B), representing an empirical upper bound for \speckv{}.
}
\label{app:fig:better_draft}
\end{figure}
\begin{figure}[hp]
\centering

\begin{subfigure}[b]{0.455\textwidth}
\centering
\includegraphics[width=\textwidth]{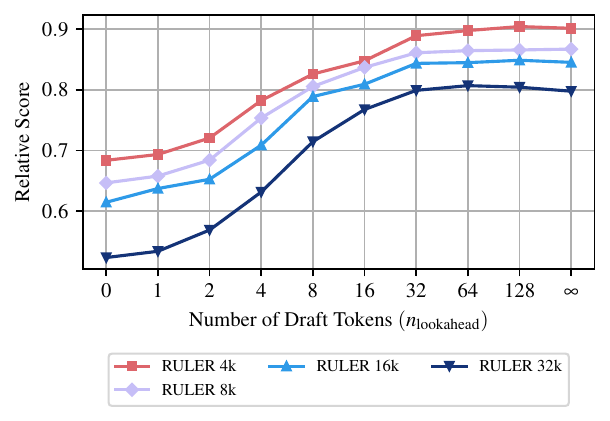}
\caption{\speckv{} on RULER}
\end{subfigure}
\hfill
\begin{subfigure}[b]{0.49\textwidth}
\centering
\includegraphics[width=\textwidth]{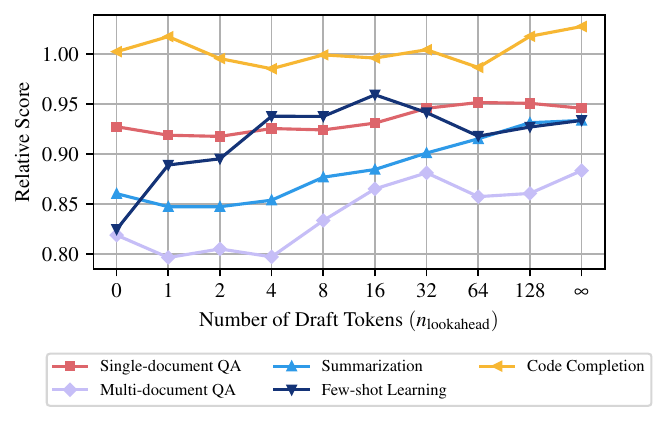}
\caption{\speckv{} on LongBench}
\end{subfigure}
\par\bigskip
\begin{subfigure}[b]{0.455\textwidth}
\centering
\includegraphics[width=\textwidth]{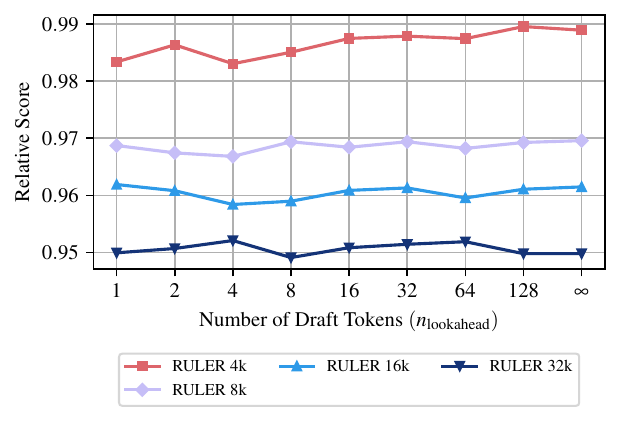}
\caption{\specpc{} on RULER}
\end{subfigure}
\hfill
\begin{subfigure}[b]{0.49\textwidth}
\centering
\includegraphics[width=\textwidth]{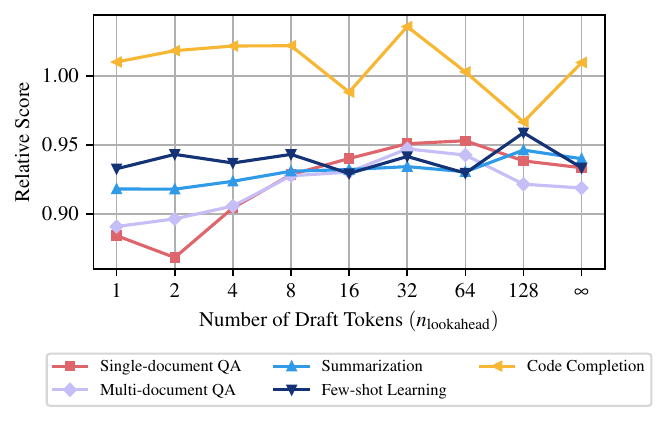}
\caption{\specpc{} on LongBench}
\end{subfigure}

\caption{\revised{
Impact of the number of generated draft tokens, $\nlookahead$, on the relative performance of \speckv{} and \specpc{}, using Qwen2.5-0.5B-Instruct as the draft model and Qwen2.5-14B-Instruct as the target model. The relative score is calculated as the score of each \speckv{} configuration divided by the score of the full dense target model. Increasing $\nlookahead$ substantially boosts \speckv{}'s score, whereas \specpc{} shows only minor improvement with higher $\nlookahead$. SnapKV ($\nlookahead=0$) has the lowest performance in most cases. $\infty$ denotes lookahead to the EOS token.
}}
\label{app:fig:nlookahead}
\end{figure}

\clearpage

\vspace*{\fill}
\begin{figure}[hp]
\centering
\begin{subfigure}[b]{0.455\textwidth}
\centering
\includegraphics[width=\textwidth]{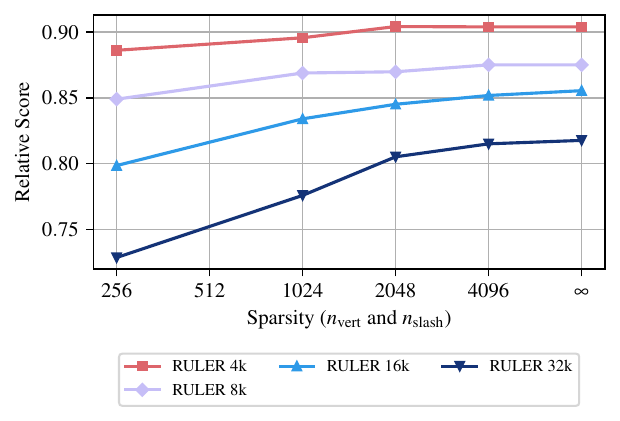}
\caption{RULER}
\end{subfigure}
\hfill
\begin{subfigure}[b]{0.49\textwidth}
\centering
\includegraphics[width=\textwidth]{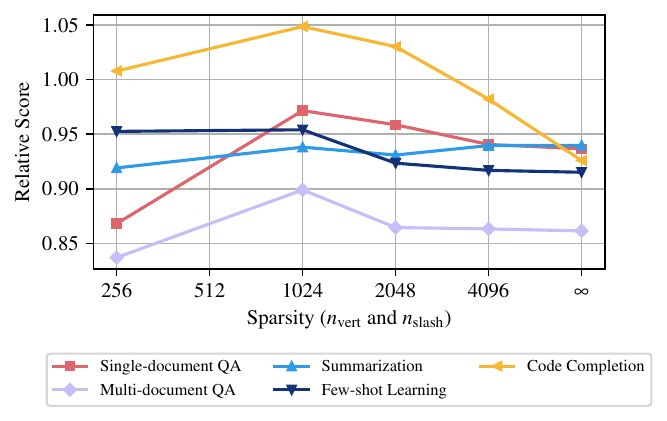}
\caption{LongBench}
\end{subfigure}
\caption{Impact of different sparsity levels in \speckv{}'s sparse prefill on relative performance, using Qwen2.5-0.5B-Instruct as the draft model and Qwen2.5-14B-Instruct as the target model. We vary $\nglobal$ (set equal to $\nprewnd$) and observe the impact on accuracy. Accuracy improves as sparsity decreases (higher $\nglobal$) up to 2048, beyond which gains saturate, hence our choice of 2048 for main results. $\infty$ corresponds to fully dense prefill. Notably, for some LongBench tasks, higher sparsity actually benefits accuracy. The relative score is calculated as the score of each \speckv{} configuration divided by the score of the full dense target model.}
\label{app:fig:nvert}
\end{figure}

\vspace*{\fill}

\end{document}